\documentclass[twoside,11pt]{article}

\usepackage{amsmath}
\usepackage{amssymb}
\usepackage{booktabs}
\usepackage{xltabular}
\usepackage{makecell}
\usepackage{multirow}
\usepackage{bm}
\usepackage{bbm}
\usepackage{fancyhdr}
\usepackage{subcaption}
\usepackage{placeins}
\usepackage{enumitem}
\usepackage{csquotes}
\usepackage{tikz}
\usepackage{pdflscape}
\usepackage[preprint]{jmlr2e}

\newcommand{\indep}{\perp \!\!\! \perp}
\newcommand{\pearldo}{\mathrm{do}}

\newcommand{\masdtw}{\textrm{MASD}_{\textrm{Weight-}t}(k,a_{0:t}',a_{0:t}'')}
\newcommand{\masdthw}{\textrm{MASD}_{\textrm{Hajek-}t}(k,a_{0:t}',a_{0:t}'')}

\newcommand{\masdtdiffmean}{\textrm{MASD-Diff}_{\textrm{Weight-}t}(k,t)}
\newcommand{\masdtwmean}{\textrm{MASD}_{\textrm{Weight-}t}(k,t)}

\usepackage{lastpage}
\jmlrheading{23}{2022}{1-\pageref{LastPage}}{1/21; Revised 5/22}{9/22}{21-0000}{Spear and Pope and Sebire}

\ShortHeadings{Evaluating covariate balance for long time horizon MDPs}{One and Two}
\firstpageno{1}

\begin{document}

\title{Evaluating covariate balance for long time horizon Markov decision processes}

\author{\name Joshua Spear \email joshua.spear.21@ucl.ac.uk \\
       \addr Post-Doctoral researcher \\
       Institute of Child Health\\
       University College London\\
       London, WC1E 6BT, UK
       \AND
       \name Rebecca Pope \email r.pope@ucl.ac.uk \\
       \addr Institute of Child Health\\
       University College London\\
       London, WC1E 6BT, UK
       \AND
       \name Neil J Sebire \email Neil.Sebire@gosh.nhs.uk \\ 
       \addr NIHR GOSH BRC \\
       Institute of Child Health\\
       University College London\\
       London, WC1E 6BT, UK
       }

\editor{}

\maketitle

\begin{abstract}
This article explores the application of covariate balance diagnostics for detecting the presence of hidden confounding/model miss-specification in studies applying offline reinforcement learning (RL) to deriving optimal treatment recommendations. The results demonstrate that, either there is a high risk of bias within existing offline RL studies for treatment recommendations and/or, existing covariate balance metrics are not sufficient to assess such studies. Regardless, existing offline RL studies cannot be concluded as being statistically robust. The conclusions propose future research directions for obtaining more methodologically robust applications of offline RL to treatment recommendation problems.
\end{abstract}

\begin{keywords}
  Causal inference, confounding bias, model miss-specification, covariate balance, propensity score
\end{keywords}

\section{Introduction}\label{sec:intro}

Offline reinforcement learning (RL) has become increasingly popular within the artificial intelligence (AI) community for various domains, including robotics (), healthcare () and power control. Some of these domains, a notable example being healthcare, assume the analysis is performed entirely offline i.e., there is no ability to obtain on-policy trajectories (outside of behaviour cloning). These domains rely on off-policy evaluation (OPE) to obtain quantitative measures of performance.\\

OPE is a causal inference problem in the sense that the assumptions are equivalent (\cite{} Barenboim work). Under OPE (or offline RL), a Markov decision process (MDP), or variant thereof, is assumed whilst, for causal inference analyses, the assumptions of positivity, consistency and exchangeability are required. Asserting the correctness of an MDP, or of consistency and exchangeability, is equally unverifiable. Within the context of causal inference, to account for this lack of verifiability, various diagnostic checks of robustness of causal inferences have been proposed (\cite{Rubin2001}, \cite{austin_balance_2009}). However, such checks are seldom seen in applications of OPE and offline RL in the AI community.\\

The primary aim of this analysis was to understand the extent to which existing applications of offline RL satisfy the diagnostic checks of assumptions that are common place within the causal inference community. The analysis focused on applying the covariate balance diagnostic to applications of offline RL for learning treatment regimens for sepsis. In conducting this analysis it was evident that there lacked a clear definition and motivation of the covariate balance metric within the literature. As such, this work formally motivates covariate balance along with its relationship to causal inference as a necessary, but not sufficient condition.\\

The contributions of this manuscript include:
\begin{itemize}
	\item An analysis of bias in offline RL studies from model miss-specification/confounding. The results suggest that either:
	\begin{itemize}
		\item Model miss-specification/confounding is present in such studies or;
		\item Existing diagnostic measures for assessing covariance balance are insufficient for evaluating long horizon tasks due to the presence of large variance in the balance estimators.
	\end{itemize}
	\item A proposal of new research directions away from policy learning research.
\end{itemize}

\section{Background}\label{sec:background}

The evaluation workflows of counterfactual analyses appearing in the AI and causal inference community are starkly different. Within AI, outcome based methods, particularly FQE (\cite{Le2019}) have recently fallen in favour. Large simulation studies such as those conducted by \cite{Voloshin2019} and \cite{Fu2021} have demonstrated the efficacy of the approach with respect to obtaining low mean squared error predictions of performance in comparison to other methods. In contrast, within causal inference, the benefit of propensity based methods, used in conjunction with outcome methods, has long been established (\cite{Rubin1973}).\\

Another noticeable difference is the lack of diagnostic evaluations of bias within the AI community and a seeming lack of awareness of robust statistical frameworks such as target trial emulation (TTE) (\cite{Hernan2020}). Within AI, sepsis management with IV fluids and antibiotics, and optimal mechanical extubation are popular research areas however, no studies addressing these domains have been cast within a TTE framework or considered diagnostic measures of balance (\cite{peine_development_2021}, \cite{Prasad2017}, \cite{Kondrup2023}, \cite{Huang2022}, \cite{Raghu2018b}, \cite{Kaushik2022}).\\

Within causal inference diagnostic checks for confounding/model miss-specification have predominantly revolved around covariate balance assessments using propensity scores (\cite{Rubin2001}, \cite{austin_balance_2009}). \cite{Rubin2001} in particular advocated for the use of propensity based methods since they do not require exposure to outcome data, emulating to some extent the process of a clinical trial. Described in section \ref{sec:covariate_balance}, there are different forms of covariate balance that have been used within the literature however, the intuition is that: if the set of covariates, and any balancing score (a function of the covariates, defined in section \ref{sec:covariate_balance}) are correctly specified, then the covariates distributions can be re-weighted to be independent of treatment assignment. If this independence is shown to exist, then a necessary condition for obtaining valid causal inferences is shown to hold.\\

A distinct diagnostic test for the correct specification of propensity scores is also common within the causal inference literature and in particular, in longitudinal studies most similar to the long horizon tasks considered in this work (\cite{cole_constructing_2008}). These works often use stabilised propensity weight estimators and assert that the propensity model is correctly specified if the mean of the stabilised weights equals 1, the validity of this is demonstrated in appendix section \ref{sec:stabilised_weights}. The use of stabilised weights and the associated balancing metric is not considered for the analysis presented since stabilised weights are biased for time dependent and dynamic (condition on the state) action assignments. This is similarly demonstrated in appendix section \ref{sec:stabilised_weights}. Whilst the stabilised weight estimator is biased, strictly speaking, a balancing metric similar based on stabilised weights could be used even if an estimator based on stabilised weights is not used. The potential methodological limitation of this however, is the requirement to define an entirely new model, $p(A_{t}|\bar{A}_{t-1})$ for the sole purpose of assessing balance. This could be approximated by $\mathbb{E}_{S}[p(A_{t}|\bar{A}_{t-1},\bar{S}_{t})]$ however, the downstream performance of this balance metric is unknown and is discussed in section \ref{sec:discussion}.\\

\section{Covariate balance}\label{sec:covariate_balance}
Covariate balance is one of several diagnostics for evaluating model misspecification/hidden confounding. The specific definition can be motivated from two perspectives, herein referred to as conditional and weighted covariate balance, defined in corollaries \ref{corol:cov_bal_as_nec_cond} and \ref{corol:weight_cov_bal_as_nec_cond}, respectively. The corolaries both rely on Theorem 3 of \cite{rosenbaum_central_1983}; assume a contextual bandit density $p(\tau) = p(S)p(A|S)p(R|A,S)$; and the following definitions:
\begin{itemize}
	\item $e(S) = P(A=1|S)$ and;
	\item A balancing score is any $b(S)$ such that, $\forall S: A \indep S | b(S)$.
\end{itemize}
The proofs for the colorraries are provided in appendix section \ref{sec:cov_bal_appendix}.\\

\begin{corollary}\label{corol:cov_bal_as_nec_cond} If the conditions of theorem \ref{ther:rosenbaum_central} hold, then $\forall b(s)$:
	\begin{align*}
		(R_{1},R_{0}) \indep A|b(S) \textrm{ and } 0<P(A=1|b(S))<1 \implies A \indep S | b(S)
	\end{align*}
\end{corollary}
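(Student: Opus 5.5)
The approach is to reduce the conclusion $A \indep S \mid b(S)$ to a statement about the propensity score $e(S)$, and then to extract that statement from the two ignorability conditions: the one given $S$ (the hypothesis of theorem \ref{ther:rosenbaum_central}) and the one given $b(S)$ (the antecedent of the corollary). Since $A$ is binary, $A \indep S \mid b(S)$ is equivalent to $P(A=1\mid S)=P(A=1\mid b(S))$ almost surely, that is, $e(S)=\mathbb{E}[e(S)\mid b(S)]$. This is the characterisation Rosenbaum and Rubin use for balancing scores: $b$ is balancing iff $e(S)$ is a function of $b(S)$. So the target is to show that $e(S)$ is $\sigma(b(S))$-measurable.

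The steps run in this order.

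\textbf{Step 1.} From strong ignorability given $S$, which is assumed in theorem \ref{ther:rosenbaum_central}, I get $P(A=1\mid R_1,R_0,S)=e(S)$.

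\textbf{Step 2.} By the tower property, since $\sigma(b(S))\subseteq\sigma(S)$,
\begin{align*}
P(A=1\mid R_1,R_0,b(S)) = \mathbb{E}\left[e(S)\mid R_1,R_0,b(S)\right].
\end{align*}

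\textbf{Step 3.} The antecedent $(R_1,R_0)\indep A\mid b(S)$ says the left side of Step 2 equals $P(A=1\mid b(S))=\mathbb{E}[e(S)\mid b(S)]$. Combining this with Step 2 gives
\begin{align*}
\mathbb{E}\left[e(S)\mid R_1,R_0,b(S)\right]=\mathbb{E}\left[e(S)\mid b(S)\right].
\end{align*}
The overlap condition $0<P(A=1\mid b(S))<1$ is used only to ensure these conditional probabilities are well defined and non-degenerate.

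\textbf{Step 4.} Pass from the identity in Step 3 to $e(S)=\mathbb{E}[e(S)\mid b(S)]$.

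Step 4 is the main obstacle, and as stated it cannot go through without more structure. Step 3 only says that the residual $e(S)-\mathbb{E}[e(S)\mid b(S)]$ is mean-zero given $(R_1,R_0,b(S))$. It does not say this residual vanishes. A counterexample makes the gap concrete. Take $b$ constant, let $S\indep(R_1,R_0)$, and let $e(S)$ be non-constant. Then both ignorability conditions and overlap hold, yet $A\not\indep S$.

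I would therefore first try to close the step with an added condition under which Step 3 forces the residual to be zero. One option is that $S$ is itself measurable with respect to $(R_1,R_0,b(S))$. In that case the left side of Step 3 is just $e(S)$, and the conclusion follows immediately. The alternative, and I suspect the reading the paper intends, is to let $b$ range over balancing scores in the sense defined before the corollary. In that case the consequent holds by definition, and theorem \ref{ther:rosenbaum_central} supplies the antecedent. The corollary is then the contrapositive \enquote{necessary condition} statement: if a candidate $b$ fails $A\indep S\mid b(S)$, then it is not a balancing score, so the guarantee of theorem \ref{ther:rosenbaum_central} does not apply. I would present the proof under this reading, flag the counterexample, and record the measurability condition as the extra hypothesis needed for the literal direction.
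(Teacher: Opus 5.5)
Your final reading matches the paper's proof: the appendix argument simply says that $b$ must be a balancing score by the assumption of Theorem~\ref{ther:rosenbaum_central}, so $A \indep S \mid b(S)$ holds by definition. Your Steps 1--3 and your counterexample (constant $b$, $S \indep (R_{1},R_{0})$, non-constant $e(S)$) are also correct, and they show that the literal implication fails for an arbitrary $b$, a point the paper's one-line proof does not acknowledge.
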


\begin{corollary}\label{corol:weight_cov_bal_as_nec_cond} If the conditions of theorem \ref{ther:rosenbaum_central} hold, and:
\begin{equation}\label{equ:weighted_cov_bal}
	\mathbb{E}\Bigg[\frac{\mathbbm{1}(A=1)S}{\hat{e}(S)}-\frac{\mathbbm{1}(A=0)S}{1-\hat{e}(S)}\Bigg] = 0
\end{equation}
then, $\forall \hat{e}(s)$:
\begin{align*}
	(R_{1},R_{0}) \indep A|\hat{e}(S) \textrm{ and } 0<P(A=1|\hat{e}(S))<1
\end{align*}
\end{corollary}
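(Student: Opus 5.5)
The plan is to reduce the statement to Theorem \ref{ther:rosenbaum_central} by showing that the weighted balance condition \eqref{equ:weighted_cov_bal} forces $\hat{e}(S)$ to be a balancing score, and in fact to coincide with the true propensity score $e(S)$. Theorem \ref{ther:rosenbaum_central} states that strong ignorability given $S$ implies strong ignorability given any balancing score $b(S)$. Once $\hat{e}$ is shown to be a balancing score, the conclusion $(R_{1},R_{0}) \indep A|\hat{e}(S)$ follows immediately.

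\textbf{Step 1: rewrite the balance condition in terms of $e(S)$.} I would condition on $S$ inside the expectation and use the bandit factorisation $p(S)p(A|S)p(R|A,S)$. This gives $\mathbb{E}[\mathbbm{1}(A=1)S/\hat{e}(S)] = \mathbb{E}[S\,e(S)/\hat{e}(S)]$ and $\mathbb{E}[\mathbbm{1}(A=0)S/(1-\hat{e}(S))] = \mathbb{E}[S(1-e(S))/(1-\hat{e}(S))]$. Combining the two, \eqref{equ:weighted_cov_bal} becomes
\begin{equation*}
\mathbb{E}\Bigg[S\,\frac{e(S)-\hat{e}(S)}{\hat{e}(S)(1-\hat{e}(S))}\Bigg] = 0 .
\end{equation*}
This step implicitly requires $0<\hat{e}(S)<1$, since otherwise the weights are undefined. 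I would make this explicit as a standing assumption.

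\textbf{Step 2: pass from a moment condition to equality of scores.} This is the main obstacle. A single first-moment identity in $S$ does not in general force $e=\hat{e}$. I would therefore interpret the condition in the way it is used in practice: balance is required for every covariate and every transformation of the covariates that enters the diagnostic. Formally, the identity holds with $S$ replaced by $f(S)$ for all bounded measurable $f$, or at least for a class of functions rich enough to determine functions of $S$. I would then choose $f(S) = \mathrm{sign}(e(S)-\hat{e}(S))$. Because the denominator $\hat{e}(1-\hat{e})$ is positive, the integrand becomes $|e-\hat{e}|/(\hat{e}(1-\hat{e})) \ge 0$. An expectation of zero then forces $\hat{e}(S)=e(S)$ almost surely. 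If the authors intend the literal version with $S$ only, I would remark that the corollary should be read as holding under this strengthened, all-functions balance. Otherwise I would need an additional modelling assumption, such as $\hat{e}$ coming from a correctly specified parametric family, to close the gap.

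\textbf{Step 3: conclude via Rosenbaum--Rubin.} With $\hat{e}=e$ almost surely, $\hat{e}(S)$ is the propensity score. The propensity score is a balancing score, since $P(A=1|S,e(S)) = e(S) = P(A=1|e(S))$, which gives $A\indep S|\hat{e}(S)$. Applying Theorem \ref{ther:rosenbaum_central} then yields $(R_{1},R_{0})\indep A|\hat{e}(S)$. The overlap condition follows because $P(A=1|\hat{e}(S)) = \hat{e}(S)\in(0,1)$, using the positivity from Step 1.
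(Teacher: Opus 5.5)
Your argument has the same skeleton as the paper's. The paper applies Theorem~\ref{ther:rosenbaum_central} to the true $e(S)$, checks that the true propensity score satisfies the balance identity, and then asserts $\hat{e}(S)=e(S) \iff$ (balance identity) before combining the two facts. Only the direction $\hat{e}=e \Rightarrow$ balance is actually verified there. The converse is the one the corollary needs, and it is exactly the step you flag in Step 2. Your objection is correct: the statement as literally written is false. Take $S$ uniform on $\{-1,0,1\}$, $e(\pm 1)=1/2$, $e(0)=9/10$, $R_{1}=R_{0}=\mathbbm{1}(S=0)$ and $\hat{e}\equiv 1/2$. Strong ignorability given $S$ holds, and the balance expectation is $2\,\mathbb{E}[S(2e(S)-1)]=0$. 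But conditioning on the constant $\hat{e}(S)$ is no conditioning at all, and $P(A=1\mid S=0)=9/10\neq 1/2=P(A=1\mid S\neq 0)$. Hence $(R_{1},R_{0})\indep A\mid\hat{e}(S)$ fails.

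So your strengthening is necessary, not optional. Assume balance for all bounded measurable $f(S)$, with the expectations existing. Then your Steps 1--3, including the choice $f=\mathrm{sign}(e-\hat{e})$ and the final appeal to Rosenbaum--Rubin, form a complete and correct proof, and a more careful one than the paper's. If you take the parametric route instead, correct specification is not enough on its own; you also need the balance equations to identify the parameter. For a logistic model with an intercept, the paper's condition supplies one equation per covariate but the model has one more parameter. The solution set is then typically a curve through the true parameter, so you could not conclude $\hat{e}=e$.
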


Under each of these corollaries, authors have derived covariate balance metrics. The weighted covariate balance metric is defined in equation \ref{equ:weighted_cov_bal}. The conditional balance metric is defined by asserting the validity of the independence statement $A \indep S | b(S)$. Note that $A \indep S | b(S) \iff p(S|A=1,b(S)) - p(S|A=0,b(S)) = 0, \forall S$ almost everywhere. This is often approximated using only the first moments i.e., $\mathbb{E}[S|A=a,b(S)] - \mathbb{E}[S|A=a',b(S)] = 0$ (\cite{} rosenbaum book proof). However, authors have noted the necessity to check higher order moments (\cite{austin_balance_2009}). This lower order moment approximation is notably not required for the weighted covariate balance metric since the entire measure $\hat{e}(S)$ is asserted to be equivalent to $e(S)$.\\

\subsection{Implications of weighted covariate balance}\label{sec:cov_bal_implications}
\sloppy
The corollaries state that satisfying the respective balance metrics are necessary but not sufficient conditions for drawing valid causal inferences in the sense that $(R_{1},R_{0}) \indep A|f(S) \textrm{ and } 0<P(A=1|f(S))<1$ where $f$ is either a balancing score or the propensity score. It is beneficial however to establish \emph{why} valid causal inferences could not be drawn. This will be explored for the weighted estimator since it is the subject of analysis in the rest of the paper.\\
 
 Under the weighted covariate balance estimator, the test fails if $\hat{e}(S) \neq e(S)$. Assuming that there are no hidden confounders i.e., $\mathcal{S}$ is such that $(R_{1},R_{0}) \indep A|S$, then the test will fail if the \emph{functional form} of $\hat{e}(S)$ and $e(S)$ differ. This is referred to as model miss-specification. Alternatively, if there is hidden confounding then, by the uniqueness of measures, the test will fail. Consider the true propensity model, $e(S)$ and $\tilde{e}(S) = \tilde{P}(A=1|S_{\setminus k})$ where the domain of $S_{\setminus k}$ is $\mathcal{S}_{\setminus k} = \prod_{i=1,i\neq k}^{n} \mathcal{S}_{i}$ and $\mathcal{S} = \prod_{i=1}^{n} \mathcal{S}_{i}$. Valid causal inferences hold under $\tilde{e}(S)$ if $A\indep S_{\setminus k}$ which further means $e(S) = \tilde{e}(S)$. Thus, ignoring approximation error of the measure $\tilde{e}(S)$, the balancing metric will never be satisfied under missing confounding unless:
 \begin{align*}
	\mathbb{E}_{P}\Bigg[&\frac{\mathbbm{1}(A=1)S}{\tilde{e}(S)}-\frac{\mathbbm{1}(A=0)S}{1-\tilde{e}(S)}\Bigg] = 0
 \end{align*}
 $\iff A\indep S_{\setminus k}$. Note, the balancing metric is still defined with respect to the true measure $P$ however, the confounded measures $\tilde{e}$ are being evaluated. This reflects the real world scenario where the data is generated according to the true unknown measure, $P$. In this sense, the diagnostic test is a test for hidden confounding. Discussed in section \ref{sec:discussion} is observation that, if the propensity model is approximated non-parametrically, then covariate balance is entirely a test of hidden confounding.\\
 
 \subsection{Time dependence, finite sample approximation, and variance reduction}
 
Of interest for this analysis is the time dependent setting with non-binary action spaces. Assuming now, an MDP density (equation \ref{equ:mdp_density}):
\begin{equation}\label{equ:mdp_density}
	p_{\pi}(\tau) = p_{S_{0}}(s_{0})\prod_{t=0}^{H-1}p_{\pi}(a_{t}|s_{t})p_{T}(s_{t+1}|a_{t},s_{t})p_{R}(r_{t+1}|a_{t},s_{t})
\end{equation}
Then:
\begin{align*}
	\mathbb{E}\Bigg[&\frac{\mathbbm{1}(A_{0:t}=a_{0:t}')g(S_{0:t})}{\prod_{t'=0}^{t}p(A_{t'}|S_{t'})}-\frac{\mathbbm{1}(A_{0:t}=a_{0:t}'')g(S_{0:t})}{\prod_{t'=0}^{t}p(A_{t'}|S_{t'})}\Bigg] = \mathbb{E}[g(S_{0:t})] - \mathbb{E}[g(S_{0:t})]
\end{align*}
where $S_{0:t} = \{S_{0},...,S_{t}\}$. Define $g = \sum_{t'=1}^{t}\mathbbm{1}(t'=t)S_{t}$ to obtain the time $t$ balance.
This is a result of:
\begin{align*}
	\mathbb{E}\Bigg[&\frac{\mathbbm{1}(A_{0:t}=a_{0:t}')g(S_{0:t})}{\prod_{t'=0}^{t}p(A_{t'}|S_{t'})}-\frac{\mathbbm{1}(A_{0:t}=a_{0:t}'')g(S_{0:t})}{\prod_{t'=0}^{t}p(A_{t'}|S_{t'})}\Bigg] \\
	=&\int\frac{\mathbbm{1}(A_{0:t}=a_{0:t}')g(S_{0:t})}{\prod_{t'=0}^{t}p(A_{t'}|S_{t'})}p(S_{0:t},A_{0:t})ds_{0:t}a_{0:t} \\
	&-\int\frac{\mathbbm{1}(A_{0:t}=a_{0:t}'')g(S_{0:t})}{\prod_{t'=0}^{t}p(A_{t'}|S_{t'})}p(S_{0:t},A_{0:t})ds_{0:t}a_{0:t} \\
	=&\int\frac{g(S_{0:t})}{p(A_{0:t}=a_{0:t}'|S_{0:t})}p(A_{0:t}=a_{0:t}'|S_{0:t})p(S_{0:t})ds_{0:t}\\
	&-\int\frac{g(S_{0:t})}{p(A_{0:t}=a_{0:t}''|S_{0:t})}p(A_{0:t}=a_{0:t}''|S_{0:t})p(S_{0:t})ds_{0:t}\\
	=&\mathbb{E}[g(S_{0:t})] - \mathbb{E}[g(S_{0:t})] = 0
\end{align*}

The empirical balancing metric can thus be defined as:
\begin{equation}\label{equ:masd_weight_t}
	\masdtw{} = \frac{|\hat{\mu}_{a_{0:t}'}(k) - \hat{\mu}_{a_{0:t}''}(k)|}{\sqrt{n^{-1}\hat{\sigma}^{2}_{a_{0:t}'}(k)+n^{-1}\hat{\sigma}^{2}_{a_{0:t}''}(k)}}
\end{equation}
where: 
\begin{align*}
	\hat{\mu}_{a_{0:t}'}(k) =& \frac{1}{n}\sum_{i=1}^{n}\frac{\mathbbm{1}(a_{i,0:t}=a_{0:t}')g(s_{i,0:t,k})}{\prod_{t'=0}^{t}p(a_{i,t'}|s_{i,t'})}\\
	\hat{\sigma}^{2}_{a_{0:t}'}(k) =& \frac{1}{n-1}\sum_{i=1}^{n}\Bigg(\frac{\mathbbm{1}(a_{i,0:t}=a_{0:t}')g(s_{i,0:t,k})}{\prod_{t'=0}^{t}p(a_{i,t'}|s_{i,t'})} - \hat{\mu}_{a_{0:t}'}\Bigg)^{2}
\end{align*}

\subsubsection{Variance reduction}
The weighted covariate balance metric is an importance sampling (IS) estimator. Such estimators are understood to suffer from large variance. The most popular variance reduction methods for IS estimators include: clipping and Hajek. The clipped estimator is identical to $\masdtw{}$ (equation \ref{equ:masd_weight_t}) except the propensity weights, $\prod_{t'=0}^{t}p(A_{t'}|S_{t'})$ are replaced by:
\begin{align*}
	\max\Bigg(\prod_{t'=0}^{t}p(A_{t'}|S_{t'}), \tau^{-1}\Bigg)
\end{align*} 
The Hajek estimator is defined as:
\begin{equation}\label{equ:masd_hajek_weight_t}
	\masdthw{} = \frac{|\hat{\mu}_{a_{0:t}'}(k) - \hat{\mu}_{a_{0:t}''}(k)|}{\sqrt{n^{-1}\hat{\sigma}^{2}_{a_{0:t}'}(k)+n^{-1}\hat{\sigma}^{2}_{a_{0:t}''}(k)}}
\end{equation}
where: 
\begin{align*}
	\hat{\mu}_{a_{0:t}'}(k) =& \frac{\frac{1}{n}\sum_{i=1}^{n}\frac{\mathbbm{1}(a_{i,0:t}=a_{0:t}')g(s_{i,0:t,k})}{\prod_{t'=0}^{t}p(a_{i,t'}|s_{i,t'})}}{\frac{1}{n}\sum_{i=1}^{n}\frac{\mathbbm{1}(a_{i,0:t}=a_{0:t}')}{\prod_{t'=0}^{t}p(a_{i,t'}|s_{i,t'})}}\\
	\hat{\sigma}^{2}_{a_{0:t}'}(k) =& \frac{1}{n-1}\sum_{i=1}^{n}\Bigg(\frac{\frac{\mathbbm{1}(a_{i,0:t}=a_{0:t}')g(s_{i,0:t,k})}{\prod_{t'=0}^{t}p(a_{i,t'}|s_{i,t'})}}{\frac{1}{n}\sum_{i=1}^{n}\frac{\mathbbm{1}(a_{i,0:t}=a_{0:t}')}{\prod_{t'=0}^{t}p(a_{i,t'}|s_{i,t'})}} - \hat{\mu}_{a_{0:t}'}\Bigg)^{2}
\end{align*}

The derivation for the Hajek estimator is almost identical to $\masdtw{}$.
\begin{align*}
	\mathbb{E}\Bigg[&\frac{\frac{\mathbbm{1}(A_{0:t}=a_{0:t}')g(S_{0:t})}{\prod_{t'=0}^{t}p(A_{t'}|S_{t'})}}{\mathbb{E}\big[\frac{\mathbbm{1}(A_{0:t}=a_{0:t}')}{\prod_{t'=0}^{t}p(A_{t'}|S_{t'})}\big]}-\frac{\frac{\mathbbm{1}(A_{0:t}=a_{0:t}'')g(S_{0:t})}{\prod_{t'=0}^{t}p(A_{t'}|S_{t'})}}{\mathbb{E}\big[\frac{\mathbbm{1}(A_{0:t}=a_{0:t}'')}{\prod_{t'=0}^{t}p(A_{t'}|S_{t'})}\big]}\Bigg] \\
	=& \frac{\mathbb{E}[g(S_{0:t})]}{\mathbb{E}\big[\frac{\mathbbm{1}(A_{0:t}=a_{0:t}')}{\prod_{t'=0}^{t}p(A_{t'}|S_{t'})}\big]} - \frac{\mathbb{E}[g(S_{0:t})]}{\mathbb{E}\big[\frac{\mathbbm{1}(A_{0:t}=a_{0:t}'')}{\prod_{t'=0}^{t}p(A_{t'}|S_{t'})}\big]} \\
	=& \frac{\mathbb{E}[g(S_{0:t})]}{1} - \frac{\mathbb{E}[g(S_{0:t})]}{1} \\
	=& \mathbb{E}[g(S_{0:t})] - \mathbb{E}[g(S_{0:t})] \\
\end{align*}
 
 \section{Analysis of covariate balance in sepsis management}
 The proceeding section presents the core experimental results and analysis of the paper. The weighted covariate balance metric was applied to the problem of sepsis management which has been widly studied and critically, work has consistently used the same data and MDP definition. For this line of work, the causal question analysed has been, the causal effect of different dosages of intravenous (IV) fluids and vasopressors on 90 day mortality of adult sepsis patients, with exclusions being applied to palliative patients. Patient trajectories have been defined by 24 hours of observations prior to sepsis onset and 48 hours proceeding. These trajectories have been discretised into 4 hour buckets, resulting in a maximum 18 step algorithmic horizon. The action space has been defined by discretising IV fluid and vasopressor dosages into 5 buckets each, resulting in a discrete action space of dimension 25. For a more detailed breakdown of the cohort, refer to \cite{Komorowski2018}.\\

The original application of offline RL to the sepsis domain by \cite{Komorowski2018} used a discrete state space. In recent years, continuous state spaces have generally been considered (\cite{Raghu2018b}, \cite{Huang2022}, \cite{Kaushik2022}). Given this, a continuous state spaces was be used for the analysis presented. The preprocessing code for the analysis was copied from \cite{subramanian_mimc_sepsis_2023} with modifications only to explicitely export a reduced action space for the analysis in section \ref{sec:fluid_action_space}.\\

\subsection{Experimental protocol}\label{sec:experimental_protocol}

For the analysis, preprocessing steps and MDP assumptions were identical to previous works. However, of the previous analyses which used continuous state representations, non performed evaluations with propensity models and thus custom propensity models were developed.\\

Throughout, analysis statements such as "aggregation across X" will be made e.g., "median (across features) and mean (across action trajectories) ratio of variances" or "mean $\masdtw{}$ across features". As an example, if the ratio of variances is defined as: 
\begin{align*}
	\frac{\sqrt{n^{-1}\hat{\sigma}^{2}_{a_{0:t}'}(k)+n^{-1}\hat{\sigma}^{2}_{a_{0:t}''}(k)}}{\sqrt{n^{-1}\hat{\sigma}^{\prime2}_{a_{0:t}'}(k)+n^{-1}\hat{\sigma}^{\prime2}_{a_{0:t}''}(k)}}
\end{align*}
then the "median (across features) and mean (across action trajectories)" refers the following aggregation:
\begin{align*}
	\textrm{med}_{k}\Bigg(\frac{1}{|\mathcal{B}|}\sum_{a_{0:t}',a_{0:t}''} \frac{\sqrt{n^{-1}\hat{\sigma}^{2}_{a_{0:t}'}(k)+n^{-1}\hat{\sigma}^{2}_{a_{0:t}''}(k)}}{\sqrt{n^{-1}\hat{\sigma}^{2'}_{a_{0:t}'}(k)+n^{-1}\hat{\sigma}^{2'}_{a_{0:t}''}(k)}}\Bigg)
\end{align*}
where $\mathcal{B}$ defines the set of intervention sequences of length $t$ (i.e., values of $a_{0:t}',a_{0:t}''$) and $\textrm{med}$ defines the median operator with an ordering over $k$.\\

\subsubsection{MASD denominator justification}
Figures \ref{fig:mimic_sepsis_post_mean_var_ratio_cov_bal_by_time} and \ref{fig:mimic_sepsis_fluid_post_mean_var_ratio_cov_bal_by_time} display the median (across features) and mean (across action trajectories) ratio of variances for the full action space analysed in section \ref{sec:full_action_space} and the fluid action space analysed in section \ref{sec:fluid_action_space}, respectively. Observe that the ratio of variances is divergent by the fourth timestep for the full action space and by the fifth timestep for the fluid action space. Whilst heuristic, it is clear that the Welsh type denominator (derived for samples with unequal variances) should be used, motivating the selection in equations \ref{equ:masd_cond} and \ref{equ:masd_cond}.\\

\begin{figure}[!h]
	\centering
	\begin{subfigure}{0.49\linewidth}
		\includegraphics[width=\linewidth]{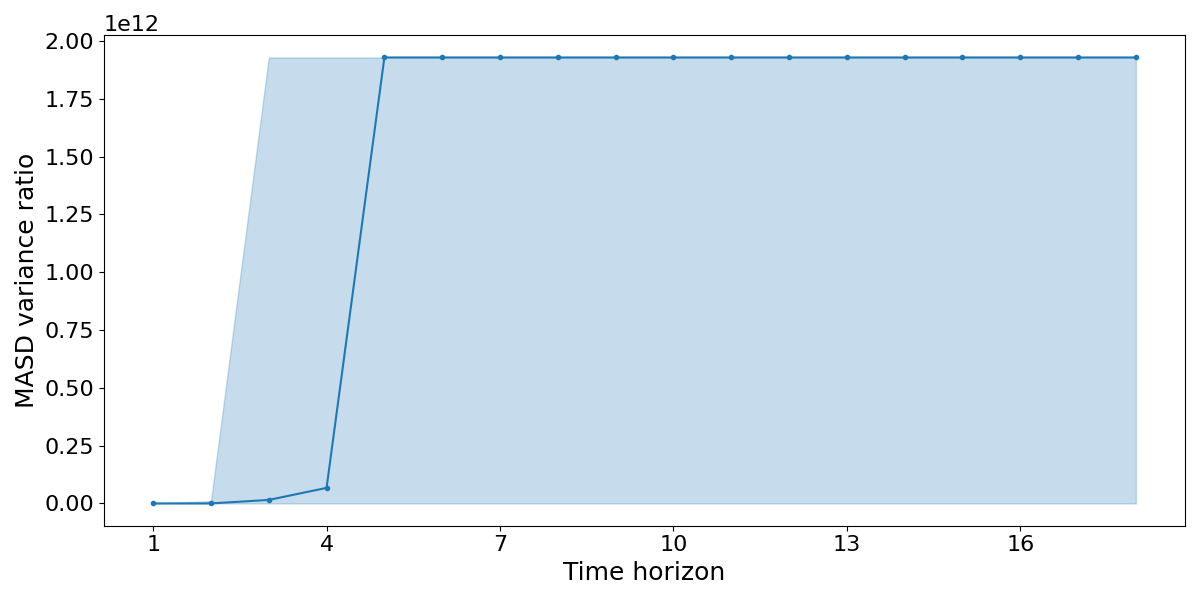}
		\caption{Training dataset}
	\end{subfigure}
	\begin{subfigure}{0.49\linewidth}
		\includegraphics[width=\linewidth]{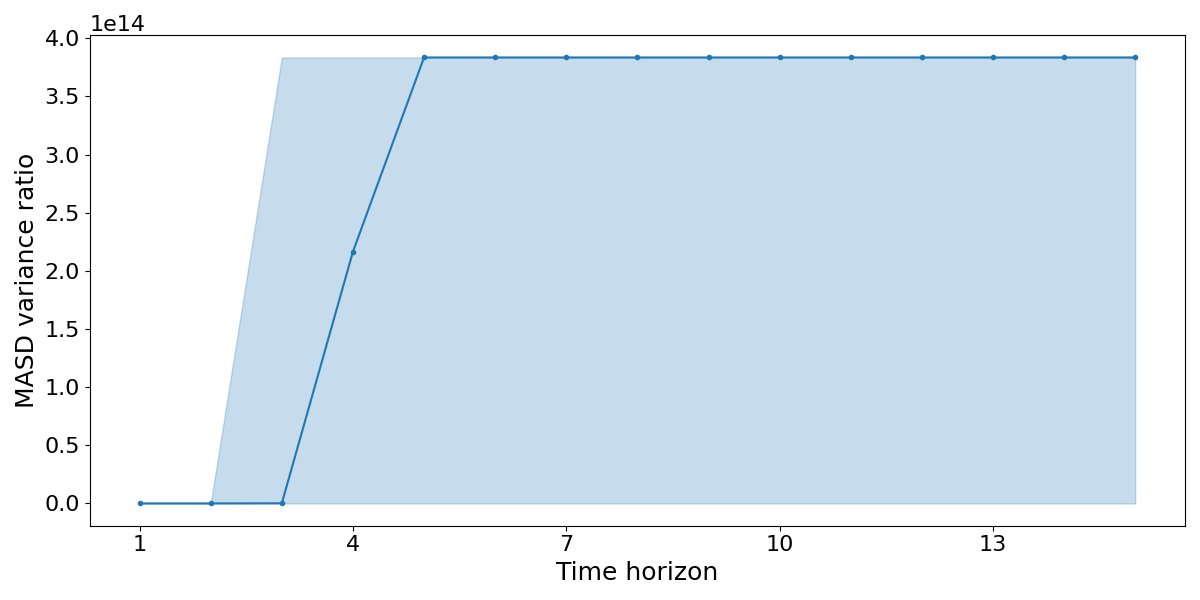}	
		\caption{Testing dataset}
	\end{subfigure}
	\caption{The figure describes aggregations of the ratio of variance values used in the denominator of the vanilla MASD metric on the full action space dataset. The values on the plot have been clipped to a maximum of 25\% of the original aggregated values. The solid line describes the median (across features) and mean (across action trajectories). The shaded region defines the min and max across features.}
	\label{fig:mimic_sepsis_post_mean_var_ratio_cov_bal_by_time}
\end{figure}

\begin{figure}[!h]
	\centering
	\begin{subfigure}{0.49\linewidth}
		\includegraphics[width=\linewidth]{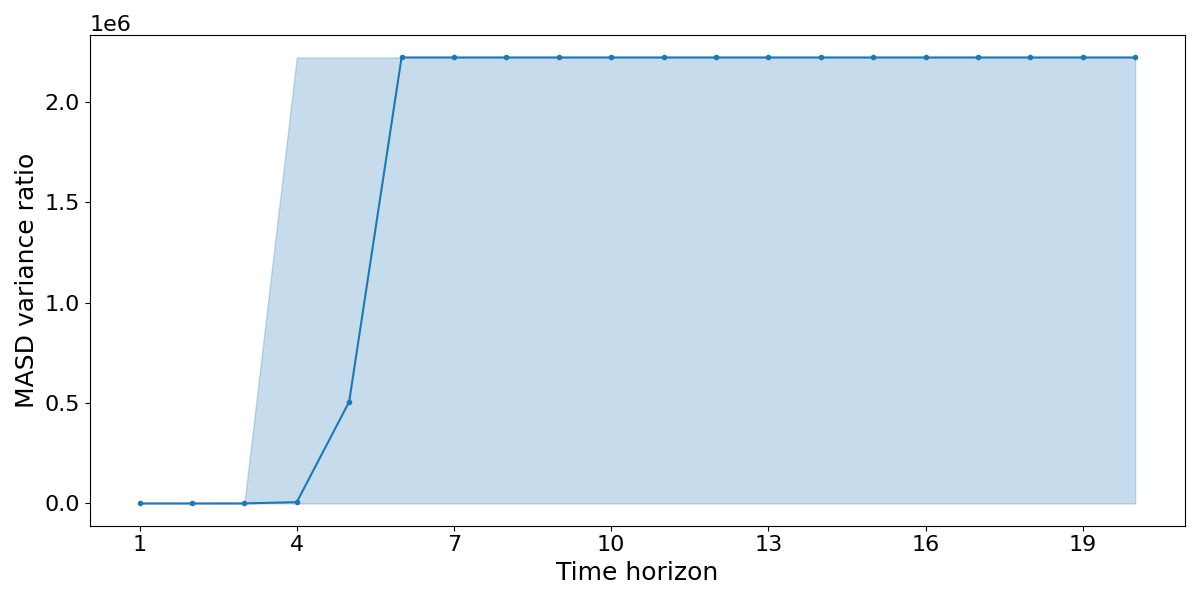}
		\caption{Training dataset}
	\end{subfigure}
	\begin{subfigure}{0.49\linewidth}
		\includegraphics[width=\linewidth]{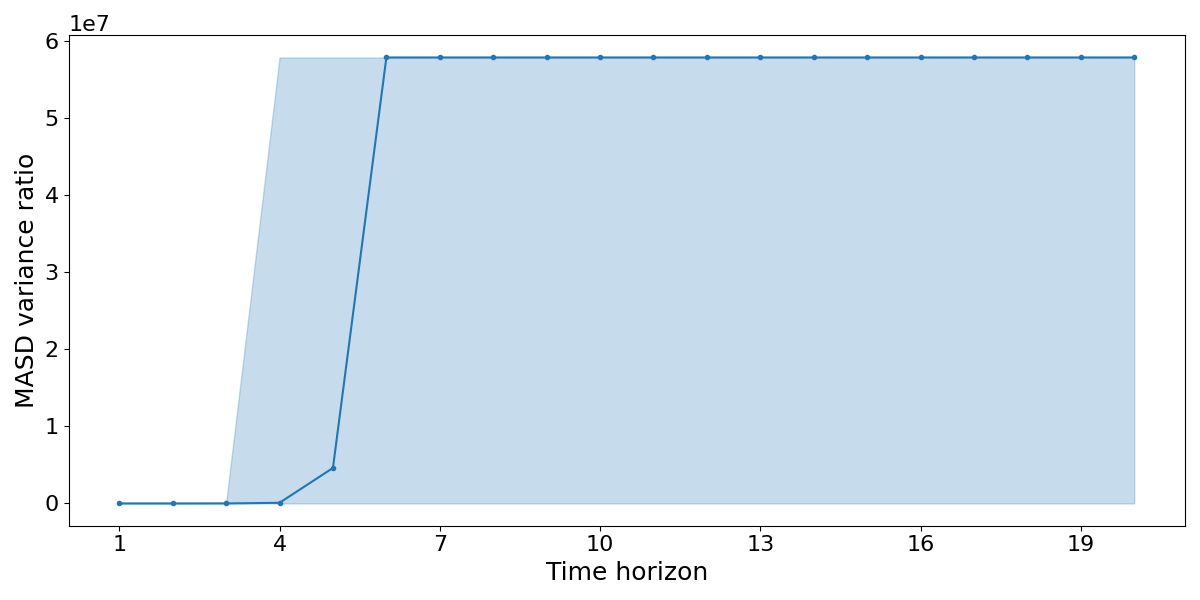}	
		\caption{Testing dataset}
	\end{subfigure}
	\caption{The figure describes aggregations of the ratio of variance values used in the denominator of the vanilla MASD metric on the fluid action space dataset. The values on the plot have been clipped to a maximum of 25\% of the original aggregated values. The solid line describes the median (across features) and mean (across action trajectories). The shaded region defines the min and max across features.}
	\label{fig:mimic_sepsis_fluid_post_mean_var_ratio_cov_bal_by_time}
\end{figure}

\subsubsection{Covariate balance assessment criteria}
Since $\sqrt{\frac{1}{2}\hat{\sigma}^{2}_{a_{0:t}'}(k)+\frac{1}{2}\hat{\sigma}^{2}_{a_{0:t}''}(k)} > \sqrt{n^{-1}\hat{\sigma}^{2}_{a_{0:t}'}(k)+n^{-1}\hat{\sigma}^{2}_{a_{0:t}''}(k)}$ for $n>2$, the implication of using the Welsh style denominator is that the resulting covariate balance will, by definition, be larger than under the Wald style denominator. Arguably the 0.25 (or 0.10) thresholds used in the literature should still be valid since the use of the Welsh style denominator should be viewed merely as an adjustment for the unequal variances. However, since these thresholds have been established experimentally rather than theoretically, this implication is not necessarily rigorous. As such, covariate balance was assessed under the following conditions:
\begin{itemize}
	\item The value of $\masdtwmean{}$ was less than 25\% for all timesteps  (i.e., the existing condition posed by \cite{cannas_comparison_2019}) and/or;
	\item The value of $\textrm{MASD-Diff}_{\textrm{Weight-}t}(k,t)$ was positive for all timesteps;
\end{itemize}
where:
\begin{align*}
	\textrm{MASD-Diff}_{\textrm{Weight-}t}(k,a_{0:t}',a_{0:t}'') = & \textrm{MASD}_{\textrm{UnWeight-}t}(k,a_{0:t},a_{0:t}'') \\
	&-\masdtw{} \\
	\textrm{MASD-Diff}_{\textrm{Weight-}t}(k,t) = & \frac{1}{|\mathcal{W}|}\sum_{a_{0:t}',a_{0:t}''} \textrm{MASD-Diff}_{\textrm{Weight-}t}(k,a_{0:t}',a_{0:t}'') \\
	\masdtwmean{} = & \frac{1}{|\mathcal{W}|}\sum_{a_{0:t}',a_{0:t}''} \masdtw{}
\end{align*}
\sloppy
and $\mathcal{W} = \{(a_{0:t}',a_{0:t}''):a_{0:t}',a_{0:t}''\in \mathcal{A}_{0:t}\times\mathcal{A}_{0:t}, a_{0:t}' \neq a_{0:t}''\}$ and $\textrm{MASD}_{\textrm{UnWeight-}t}(k,a_{0:t}',a_{0:t}'')$ is defined as $\masdtw{}$ with $p(a|s) = 1, \forall s,a \in \mathcal{S}\times\mathcal{A}$. Observe that a positive value for $\masdtdiffmean{}$ defines an improvement in covariate balance, on average across intervention combinations, under the propensity score and feature $k$, hence the desire for the metric to be positive.\\

Existing statistical learning theory necessitates the use of separate validation sets to obtain unbiased predictions of model performance (\cite{efron_how_1986}, \cite{hastie_elements_2009}). For example, \cite{efron_how_1986} demonstrated that the bias of the training mean squared error, $\frac{1}{n}\sum_{i=1}^{n}(y_{i},\hat{f}(x_{i}))^{2}$ with respect to the expected error (with a fixed function $\hat{f}$) is equivalent to, $\frac{2}{n}\sum_{i=1}^{n}\textrm{Cov}(y_{i},\hat{f}(x_{i}))$ i.e., the training error under estimates the expected mean squared error in proportion to the covariance of the training labels and the predicted training labels. Clearly, this under estimation is correlated with the learning process for $\hat{f}$.\\

It is intuitive to see that, when used as a training objective (\cite{Imai2014}), the validation covariate balance should be used to perform evaluations. It is similarly reasonable to assume that the validation covariate balance statistics should be used even when covariate balance is not used as a training objective. No formal derivation was provided for either of these conjectures and as such, where relevant, all of the results presented in this analysis are shown for both the training and a separate testing set.\\

\subsection{Results}

The results are presented in three sections where section \ref{sec:full_action_space} performs the analysis as described in section \ref{sec:experimental_protocol}, and section \ref{sec:fluid_action_space} and \ref{sec:reduced_time_horizon} are supplementary analyses using an identical setup except:
\begin{itemize}
	\item Section \ref{sec:fluid_action_space} uses an action space that is restricted to only fluid dosages, and thus is of dimension 5. 
	\item Section \ref{sec:reduced_time_horizon} considers both the full action space from section \ref{sec:full_action_space} and reduced space from section \ref{sec:fluid_action_space} but considers truncated trajectory lengths.
\end{itemize}
The dataset corresponding to the full action space is herein referred to as the "sepsis" dataset, whilst the dataset corresponding to the fluid only action space is herein referred to as the "sepsis-fluid" dataset.\\

To conduct all analyses, two seeds were used which randomised the dataset splitting and propensity model training. To produce the covariate balance figures and data presented, unless otherwise specified, the average balance statistics of both seeds were taken.\\

\subsubsection{Propensity score models}
Since the existing studies considering treatment regiemens for sepsis (\cite{Raghu2018b}, \cite{Huang2022}, \cite{Kaushik2022}) had not developed propensity models, these had to be developed as part of the analysis. For the full action space and fluid action space three xgboost (\cite{Chen2016}) models were developed. The first two were identical in hyperparameters, using defaults with minimal regularisation (the full list is detailed in appendix section \ref{sec:propens_models_appendix}) however, xgboost\_1\_w used a weighted objective, where weights were defined per label as the reciprocal of the number of occurrences in the training set. The final model, xgboost\_1\_w\_optuna was an xgboost model with hyperparameters tuned with optuna (\cite{optuna_2019}) and with a weighted objective. Whilst the unweighted (xgboost\_1) and wighted (xgboost\_1\_w) models performed identically for both datasets (table \ref{tbl:propense_results}), the weighted regression was deemed to make more sense from a model development standpoint and thus it was retained. For the experiments presented in section \ref{sec:reduced_time_horizon}, assessing the reduced time horizon, only the xgboost\_1\_w and xgboost\_1\_w\_optuna were trained for the reason previously stated.\\

The aim of the propensity score model development was to obtain a sufficiently strong model, resembling that which a practitioner might reasonably develop. As such, for each dataset, the strongest performing model according to a held-out validation was selected as the model to use for the scenario. For example, as demonstrated by table \ref{tbl:propense_results}, under the sepsis dataset without trajectory truncation, the xgboost\_1\_w\_optuna model performed the best and thus it was selected. In contrast, under the sepsis-fluid with maximum trajectory length 8, the xgboost\_1\_w model was selected.\\

\begin{table}[!h]
	\centering
	\begin{tabular}{ccccc}
		\toprule
		\makecell{Dataset\\(Max trajectory length)} & Model & \multicolumn{3}{c}{Macro F1 Score} \\
		& & Training & Validation & Test \\
		\midrule
		\midrule
		Sepsis & xgboost\_1 & 0.94 & 0.455 & 0.467 \\
		Sepsis & xgboost\_1\_w & 0.94 & 0.455 & 0.467\\
		Sepsis & xgboost\_1\_w\_optuna & 0.622 & \textbf{0.457} & 0.481\\
		\midrule
		\midrule
		Sepsis-fluid & xgboost\_1 & 0.756 & 0.657 & 0.656 \\
		Sepsis-fluid & xgboost\_1\_w & 0.756 & 0.657 & 0.656 \\
		Sepsis-fluid & xgboost\_1\_w\_optuna & 0.711 & \textbf{0.662} & 0.657 \\
		\midrule
		\midrule
		Sepsis (2) & xgboost\_1\_w & 0.999 & \textbf{0.356} & 0.394 \\
		Sepsis (2) & xgboost\_1\_w\_optuna & 0.616 & 0.348 & 0.377 \\
		\midrule
		Sepsis (5) & xgboost\_1\_w & 0.977 & 0.352 & 0.382 \\
		Sepsis (5) & xgboost\_1\_w\_optuna & 0.431 & \textbf{0.379} & 0.397 \\
		\midrule
		Sepsis (8) & xgboost\_1\_w & 0.961 & 0.393 & 0.415 \\
		Sepsis (8) & xgboost\_1\_w\_optuna & 0.480 & \textbf{0.399} & 0.424 \\
		\midrule
		Sepsis (13) & xgboost\_1\_w & 0.948 & 0.429 & 0.449 \\
		Sepsis (13) & xgboost\_1\_w\_optuna & 0.458 & \textbf{0.442} & 0.462 \\
		\midrule
		\midrule
		Sepsis-fluid (2) & xgboost\_1\_w & 0.994 & \textbf{0.660} & 0.651 \\
		Sepsis-fluid (2) & xgboost\_1\_w\_optuna & 0.690 & 0.659 & 0.656 \\
		\midrule
		Sepsis-fluid (5) & xgboost\_1\_w & 0.869 & 0.632 & 0.632 \\
		Sepsis-fluid (5) & xgboost\_1\_w\_optuna & 0.716 & \textbf{0.633} & 0.630 \\
		\midrule
		Sepsis-fluid (8) & xgboost\_1\_w & 0.802 & \textbf{0.637} & 0.636 \\
		Sepsis-fluid (8) & xgboost\_1\_w\_optuna & 0.661 & 0.636 & 0.635 \\
		\midrule
		Sepsis-fluid (13) & xgboost\_1\_w & 0.765 & \textbf{0.647} & 0.648 \\
		Sepsis-fluid (13) & xgboost\_1\_w\_optuna & 0.660 & \textbf{0.647} & 0.648 \\
		\bottomrule
	\end{tabular}
	\caption{The table displays the performance of propensity models evaluated on each of the sepsis, sepsis-fluid and truncated versions of the datasets. The bracketed numbers in the Dataset (Max trajectory length) column define the maximum trajectory length for the truncated scenarios.}
	\label{tbl:propense_results}
\end{table}

\FloatBarrier

\subsubsection{Importance sampling variance and floating point approximations}\label{sec:is_var_fp_approx}
The large variance of the importance sampling estimator is exacerbated over longer time horizons. This issue is further exacerbated by computations required for variance calculations (i.e., squaring already large numbers) and the floating point (FP) approximation of computers. For the vanilla importance sampling metric, values often overflowed the floating point approximation and became infinite, particularly when calculating the sample variance required for the denominators of the various MASD metrics. This was initially handled by imputing the overflowed variance values however, this lead to miss-leading results. For example, imputing the variance values obscured the observation that: the denominator of the MASD metric prevents a fair comparison across time horizons (figure \ref{fig:mimic_sepsis_xgboost_4_w_prop_finite_denom} and discussed in section \ref{sec:full_action_space}).\\

Whether a more sophisticated approach to handling FP approximations existed was not explored. However, the analysis suggests that, at the point that FP approximations become problematic, the variance of the estimator is so large that: covariate balance is highly unlikely to be achieved (figure \ref{fig:mimic_sepsis_xgboost_4_w_cov_bal_by_time_numer_finite_denom} and discussed in section \ref{sec:full_action_space}) and; any conclusions drawn should be treated with a high level of skepticism. As such, whilst flawed, the approach used for handling FP approximations is unlikely to have altered the conclusions drawn.\\

\subsubsection{Full action space (Sepsis dataset)}\label{sec:full_action_space}

Figure \ref{fig:mimic_sepsis_xgboost_4_w_post_masd_avg_across_feat} displays the median $\masdtwmean{}$ value, across features. In expectation, any deviation from 0 suggests conditional exchangeability does not hold however, to account for sampling variability, the figure should be interpreted as: the larger the deviation from 0, the greater the likelihood that conditional exchangeability does not hold. As described in section \ref{sec:experimental_protocol}, an MASD value of less than 0.25 is colloquially accepted as a reasonable level balance. Under this metric, covariate balance is not present in the study and there is a reasonable chance of obtaining biased causal effect estimations.\\

Despite this, the covariate balance seemingly improves with horizon. On average the median MASD reduces with horizon length. This however, is demonstrated to be an artefact of the metric. Figure \ref{fig:mimic_sepsis_xgboost_4_w_cov_bal_by_time_numer_finite_denom} displays the mean (across action trajectories) and median (across features) numerator and denominator values of the $\masdtw{}$ metric for the model. The aggregations for the plot are defined over those action trajectory pairs with finite denominator values, the proportion of which is displayed in figure \ref{fig:mimic_sepsis_xgboost_4_w_prop_finite_denom}. In calculating the $\masdtwmean{}$ value, displayed in figure \ref{fig:mimic_sepsis_xgboost_4_w_post_masd_avg_across_feat}, those action trajectory combinations with overfload variance values were excluded from the calculation. This resulted in a selection bias, giving the impression that covariate balance was around 1 or marginally below, accounting for the rounding error (as demonstrated by \ref{fig:mimic_sepsis_xgboost_4_w_cov_bal_by_time_numer_finite_denom}). Had denominator values not overflown, the $\masdtwmean{}$ metric values (displayed in \ref{fig:mimic_sepsis_xgboost_4_w_post_masd_avg_across_feat}) would have decreased even further with trajectory horizon however, this is clearly an artefact of the estimator. Examining the unormalised numerator in figure \ref{fig:mimic_sepsis_xgboost_4_w_cov_bal_by_time_numer_finite_denom}, clearly demonstrates a divergence in covariate balance as a result of uncontrolled, weights. As such, the variance of the importance sampling estimator remains problematic.\\

\begin{figure}[!h]
	\centering
	\begin{subfigure}{0.49\linewidth}
		\includegraphics[width=\linewidth]{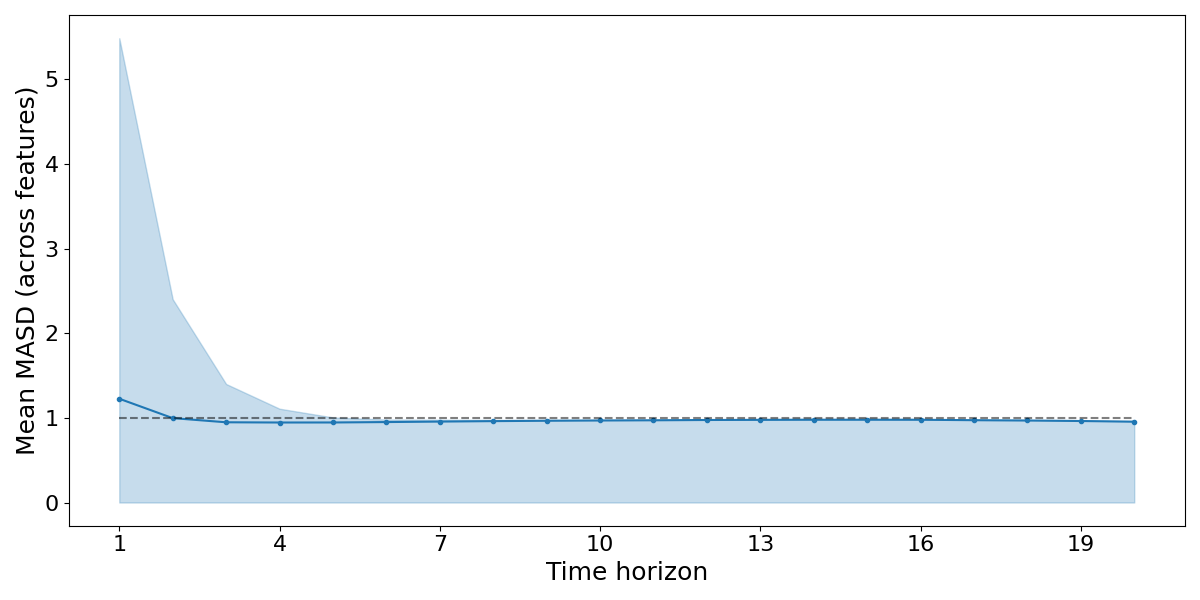}
		\caption{Training dataset}	
	\end{subfigure}
	\begin{subfigure}{0.49\linewidth}
		\includegraphics[width=\linewidth]{images/mimic_sepsis/no_clip_cov_bal_post_only/cov_bal_by_time_Train.png}
		\caption{Testing dataset}
	\end{subfigure}
	\caption[The figure describes the median mean MASD value, across features on the sepsis dataset. The x-axis displays the time horizon and the y-axis displays the median mean MASD value as the solid line with the min and max values described by the shaded region.]{The figure describes the median $\masdtwmean{}$ value, across features on the sepsis dataset. The x-axis displays the time horizon and the y-axis displays the median $\masdtwmean{}$ value as the solid line with the min and max values described by the shaded region. The dotted line is a reference for the value 1.}
	\label{fig:mimic_sepsis_xgboost_4_w_post_masd_avg_across_feat}
\end{figure}

\begin{figure}[!h]
	\centering
	\begin{subfigure}{0.49\linewidth}
		\includegraphics[width=\linewidth]{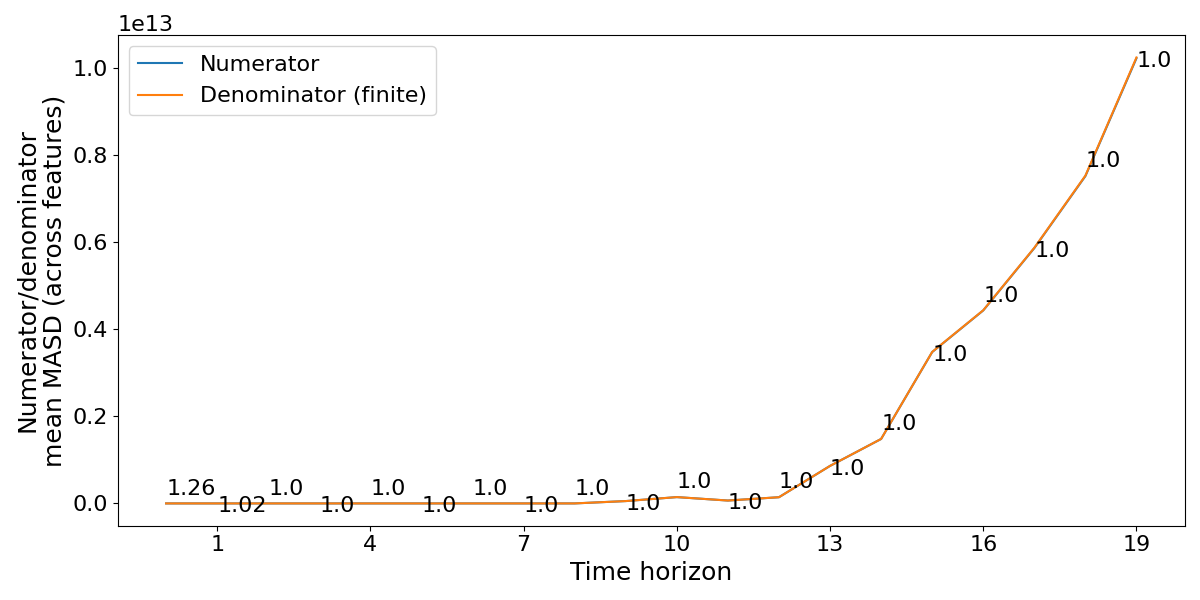}
		\caption{Training dataset}	
	\end{subfigure}
	\begin{subfigure}{0.49\linewidth}
		\includegraphics[width=\linewidth]{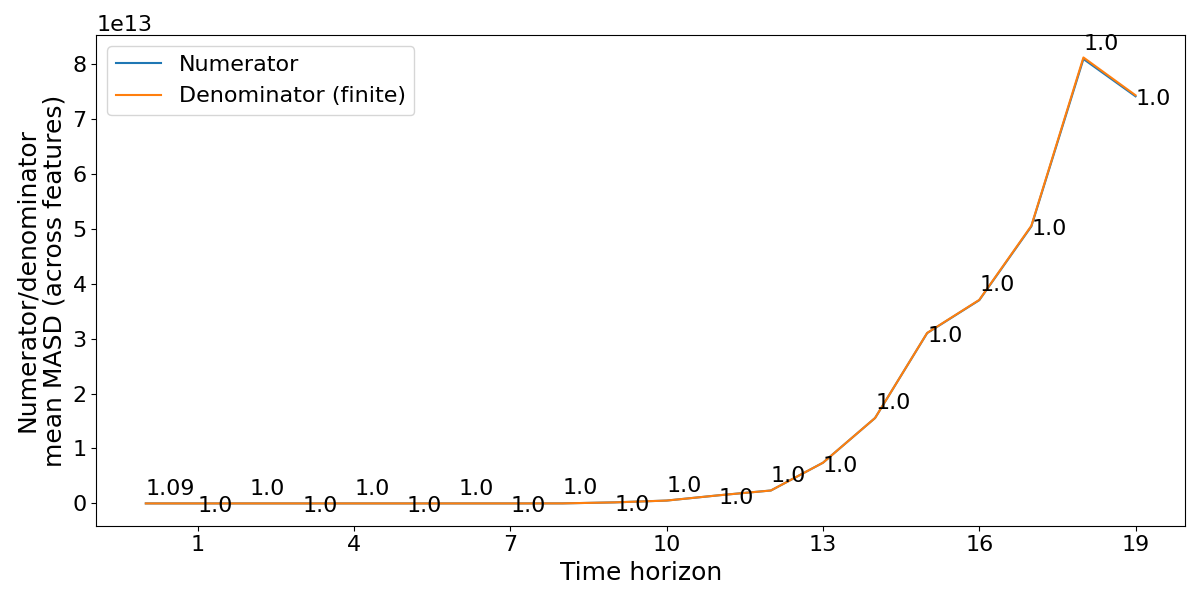}
		\caption{Testing dataset}	
	\end{subfigure}
	\caption[The figure describes the median numerator and denominator of the MASD metric, across features on the sepsis dataset, under the vanilla estimator. The x-axis displays the time horizon. Aggregations are taken over action trajectory pairs with finite denominators. The text on the figure defines the ratio of the numerator and denominator values, rounded to 2 decimal places.]{The figure describes the median numerator and denominator of the $\masdtw{}$ metric, across features on the sepsis dataset, under the vanilla estimator. The x-axis displays the time horizon. Aggregations are taken over action trajectory pairs with finite denominators. The text on the figure defines the ratio of the numerator and denominator values, rounded to 2 decimal places.}
	\label{fig:mimic_sepsis_xgboost_4_w_cov_bal_by_time_numer_finite_denom}
\end{figure}

\begin{figure}[!h]
	\centering
	\begin{subfigure}{0.49\linewidth}
		\includegraphics[width=\linewidth]{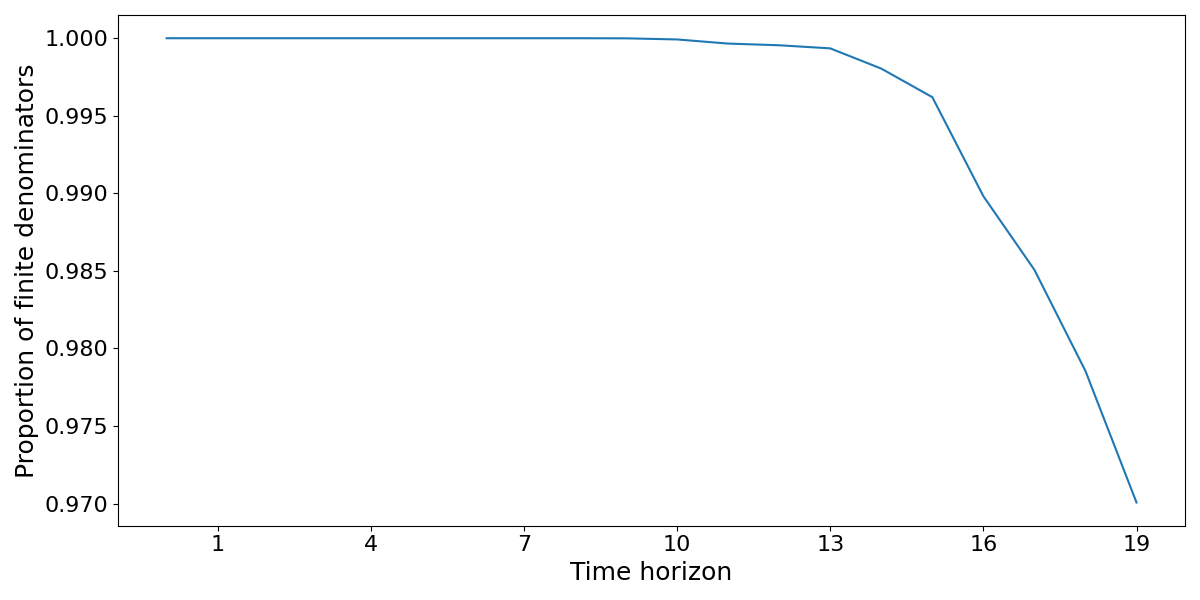}
		\caption{Training dataset}	
	\end{subfigure}
	\begin{subfigure}{0.49\linewidth}
		\includegraphics[width=\linewidth]{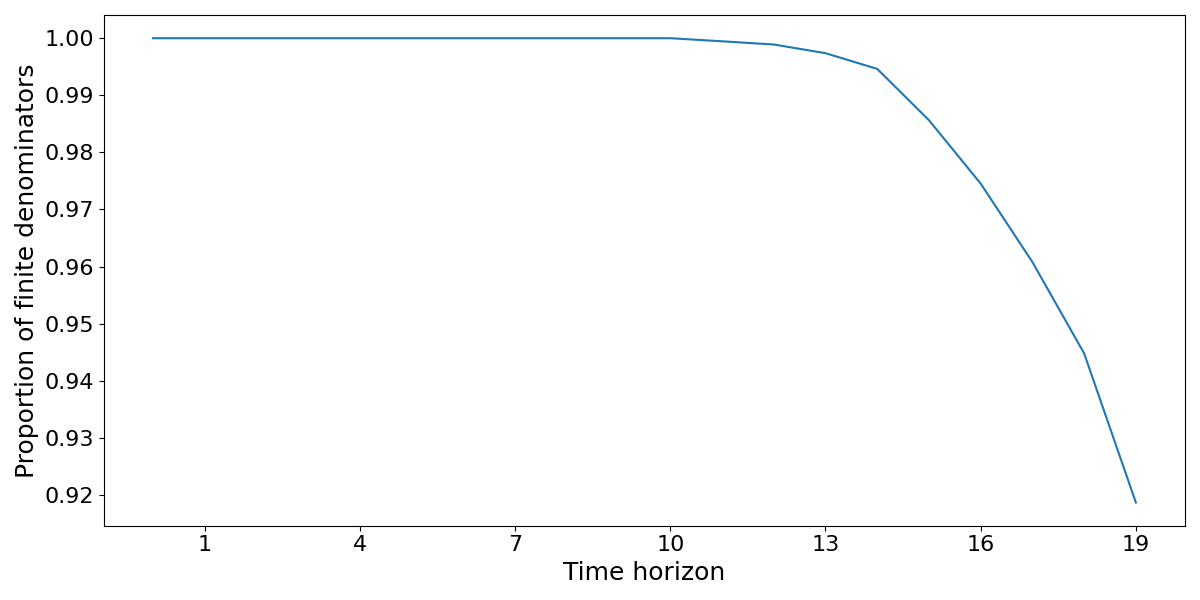}
		\caption{Testing dataset}	
	\end{subfigure}
	\caption[The figure displays the proportion of action trajectory pairs where the associated denominator value for MASD is finite. The MASD values are defined with respect to the sepsis dataset, under the vanilla estimator, with the x-axis displaying the time horizon and y-axis displaying the proportion of finite denominators.]{The figure displays the proportion of action trajectory pairs where the associated denominator value for $\masdtw{}$ is finite. The $\masdtw{}$ values are defined with respect to the sepsis dataset, under the vanilla estimator, with the x-axis displaying the time horizon and y-axis displaying the proportion of finite denominators.}
	\label{fig:mimic_sepsis_xgboost_4_w_prop_finite_denom}
\end{figure}

\FloatBarrier

Figure \ref{fig:mimic_sepsis_xgboost_4_w_post_masd_avg_across_feat_clip} displays the median $\masdtwmean{}$ but for the clipped importance sampling estimator (\cite{ionides_truncated_2008}) and figure \ref{fig:mimic_sepsis_xgboost_4_w_post_masd_avg_across_feat_hajek}  displays the median $\masdtwmean{}$ for the Hajek estimator. Both the clipped and Hajek estimators present a similar story to the vanilla estimator. Under the clipped estimator covariate balance cannot be concluded to be achieved, since the metric values are greater than 0.25. Without a formal analysis, it is unclear whether the 0.25 cut-off proposed by \cite{cannas_comparison_2019} can be applied to the Hajek estimator. Under both the clipped and Hajek estimators, similarly to the vanilla estimator, balance appears to improve with horizon. Figures \ref{fig:mimic_sepsis_xgboost_4_w_cov_bal_by_time_numer_finite_denom_clip} and \ref{fig:mimic_sepsis_xgboost_4_w_cov_bal_by_time_numer_finite_denom_hajek} display the decomposition of $\masdtw{}$ by numerator and denominator (similarly to figure \ref{fig:mimic_sepsis_xgboost_4_w_cov_bal_by_time_numer_finite_denom}). In comparison to the vanilla estimator, the unormalised numerator of the $\masdtwmean{}$ metric does not diverge, suggesting that the apparent improvement in covariate balance under the estimator at longer time horizons can be trusted. However, subsequent analysis (section \ref{sec:bias_invest_clip_hajek}) will demonstrate that this apparent improvement is an artefact of the estimators.\\
 
 \begin{figure}[!h]
	\centering
	\begin{subfigure}{0.49\linewidth}
		\includegraphics[width=\linewidth]{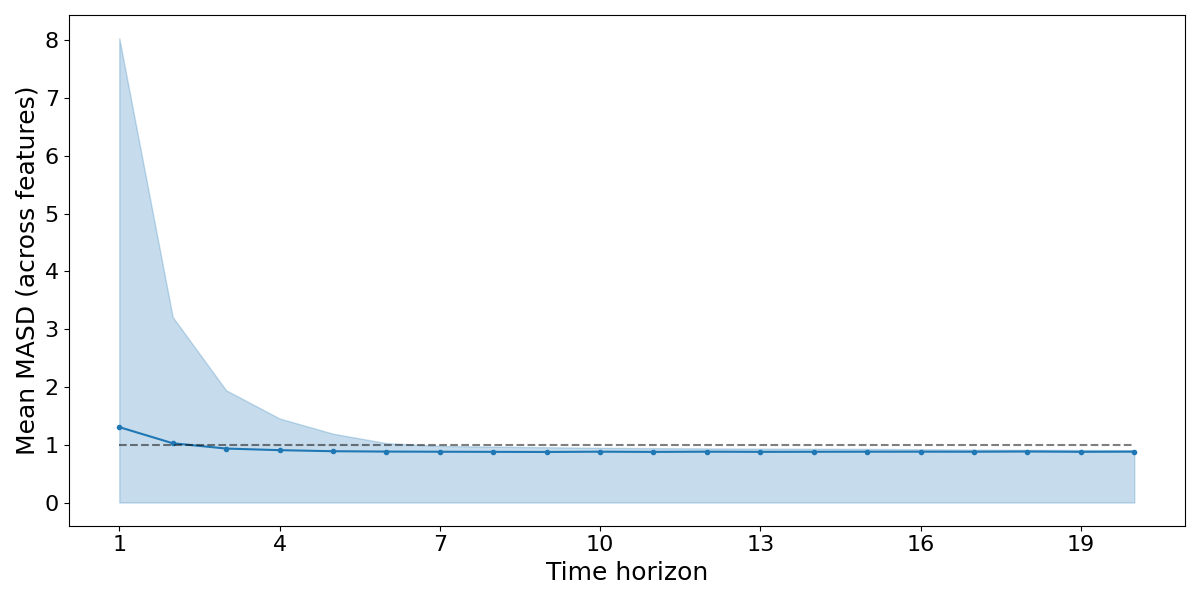}
		\caption{Training dataset}	
	\end{subfigure}
	\begin{subfigure}{0.49\linewidth}
		\includegraphics[width=\linewidth]{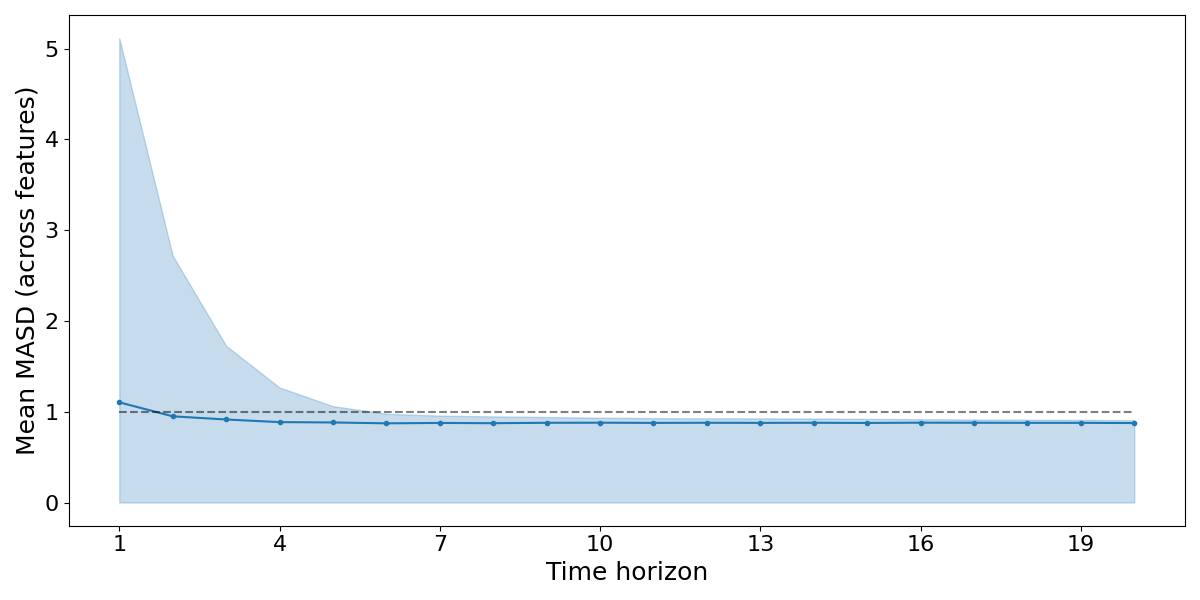}
		\caption{Testing dataset}
	\end{subfigure}
	\caption[The figure describes the median mean MASD value using clipped importance sampling, across features on the sepsis dataset. The x-axis displays the time horizon and the y-axis displays the median mean MASD value as the solid line with the min and max values described by the shaded region.]{The figure describes the median $\masdtwmean{}$ value using clipped importance sampling, across features on the sepsis dataset. The x-axis displays the time horizon and the y-axis displays the median $\masdtwmean{}$ value as the solid line with the min and max values described by the shaded region. The dotted line is a reference for the value 1.}
	\label{fig:mimic_sepsis_xgboost_4_w_post_masd_avg_across_feat_clip}
\end{figure}

\begin{figure}[!h]
	\centering
	\begin{subfigure}{0.49\linewidth}
		\includegraphics[width=\linewidth]{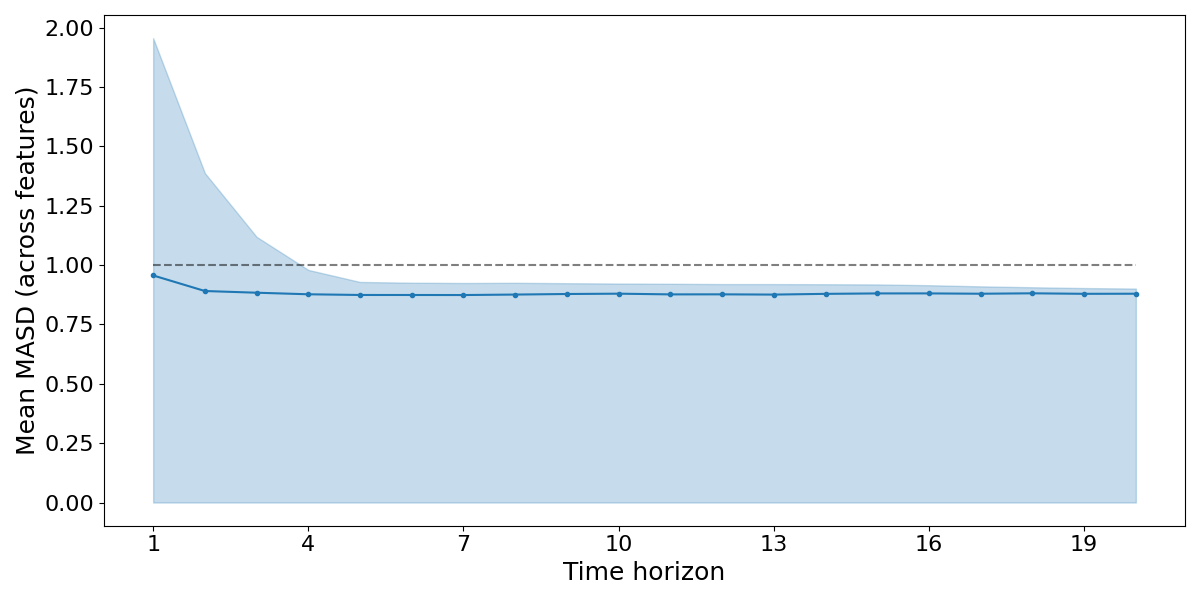}
		\caption{Training dataset}	
	\end{subfigure}
	\begin{subfigure}{0.49\linewidth}
		\includegraphics[width=\linewidth]{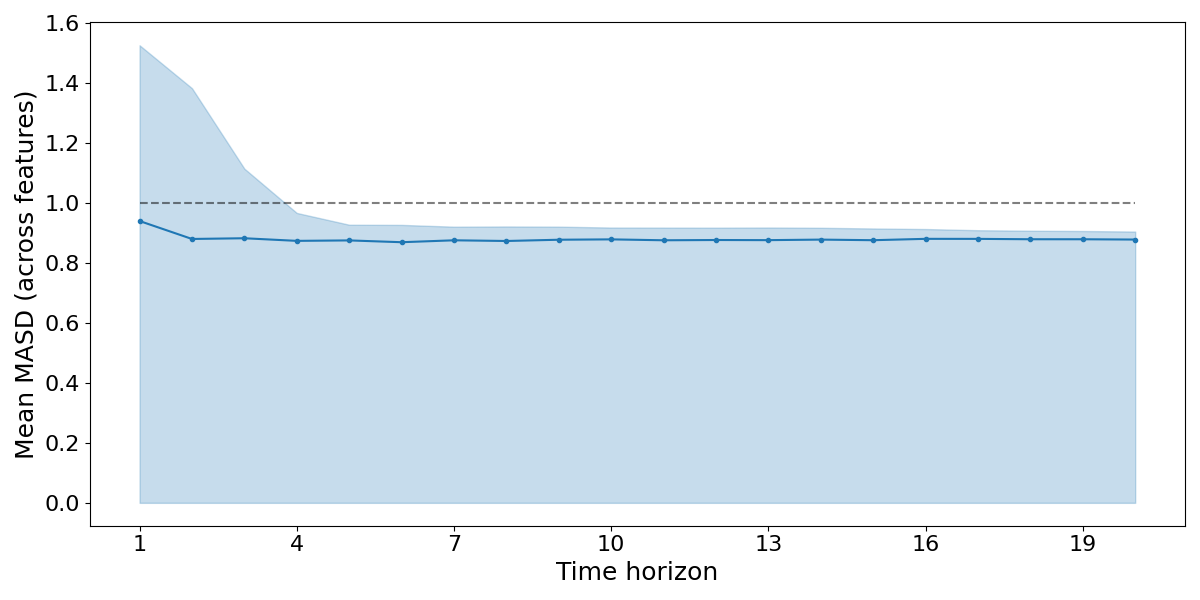}
		\caption{Testing dataset}
	\end{subfigure}
	\caption[The figure describes the median mean MASD value using Hajek importance sampling, across features  on the sepsis dataset. The x-axis displays the time horizon and the y-axis displays the median mean MASD value as the solid line with the min and max values described by the shaded region.]{The figure describes the median $\masdtwmean{}$ value using Hajek importance sampling, across features on the sepsis dataset. The x-axis displays the time horizon and the y-axis displays the median $\masdtwmean{}$ value as the solid line with the min and max values described by the shaded region. The dotted line is a reference for the value 1.}
	\label{fig:mimic_sepsis_xgboost_4_w_post_masd_avg_across_feat_hajek}
\end{figure}
 
\begin{figure}[!h]
	\centering
	\begin{subfigure}{0.49\linewidth}
		\includegraphics[width=\linewidth]{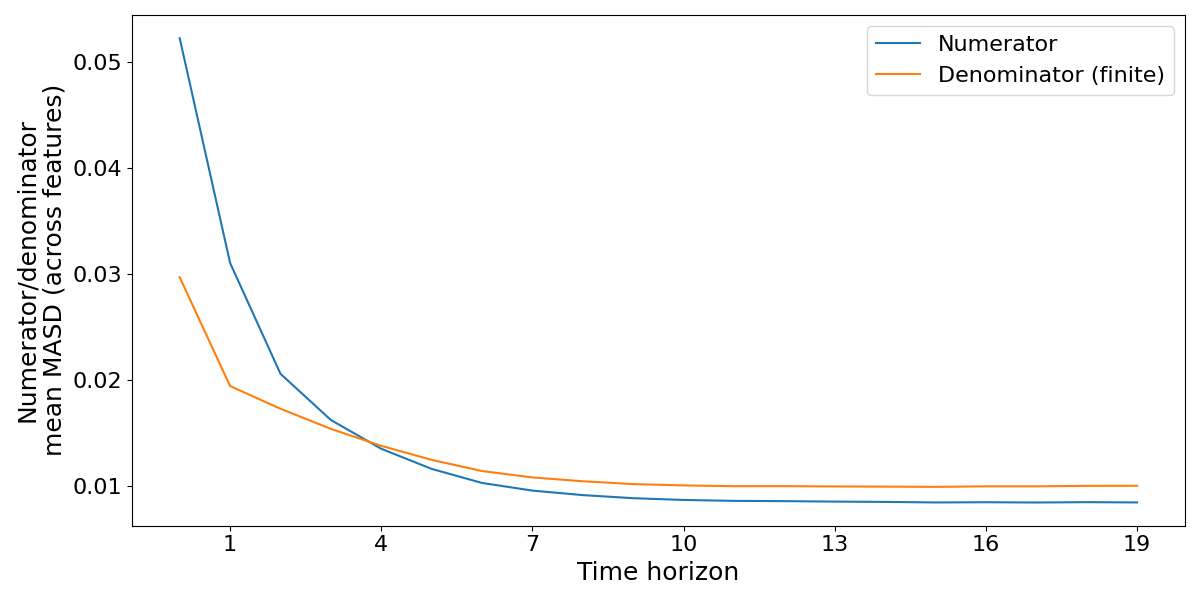}
		\caption{Training dataset}
	\end{subfigure}
	\begin{subfigure}{0.49\linewidth}
		\includegraphics[width=\linewidth]{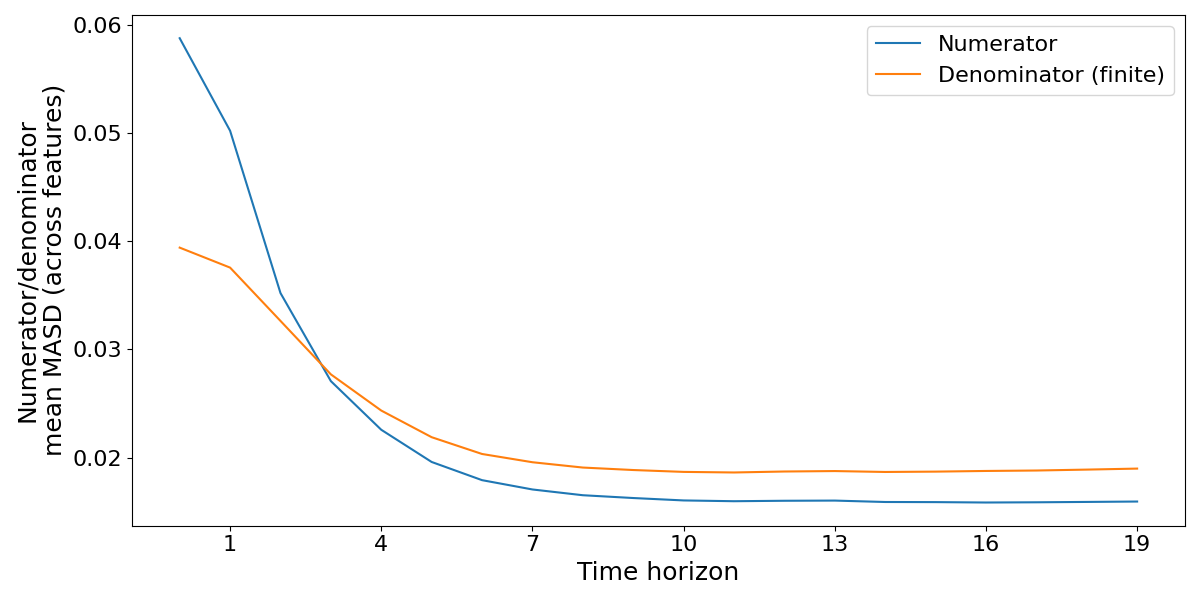}
		\caption{Testing dataset}
	\end{subfigure}
	\caption[The figure describes the median numerator and denominator of the MASD metric, across features on the sepsis dataset, under the clipped estimator. The x-axis displays the time horizon. Aggregations are taken over action trajectory pairs with finite denominators.]{The figure describes the median numerator and denominator of the $\masdtw{}$ metric, across features on the sepsis dataset, under the clipped estimator. The x-axis displays the time horizon. Aggregations are taken over action trajectory pairs with finite denominators.}
	\label{fig:mimic_sepsis_xgboost_4_w_cov_bal_by_time_numer_finite_denom_clip}
\end{figure}

\begin{figure}[!h]
	\centering
	\begin{subfigure}{0.49\linewidth}
		\includegraphics[width=\linewidth]{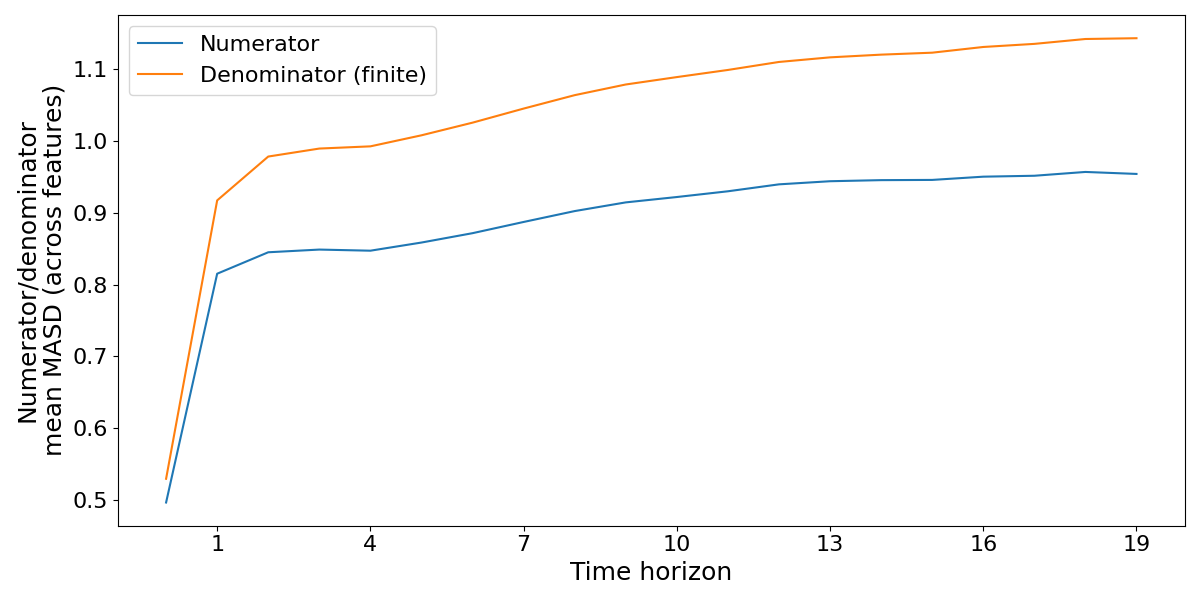}
		\caption{Training dataset}
	\end{subfigure}
	\begin{subfigure}{0.49\linewidth}
		\includegraphics[width=\linewidth]{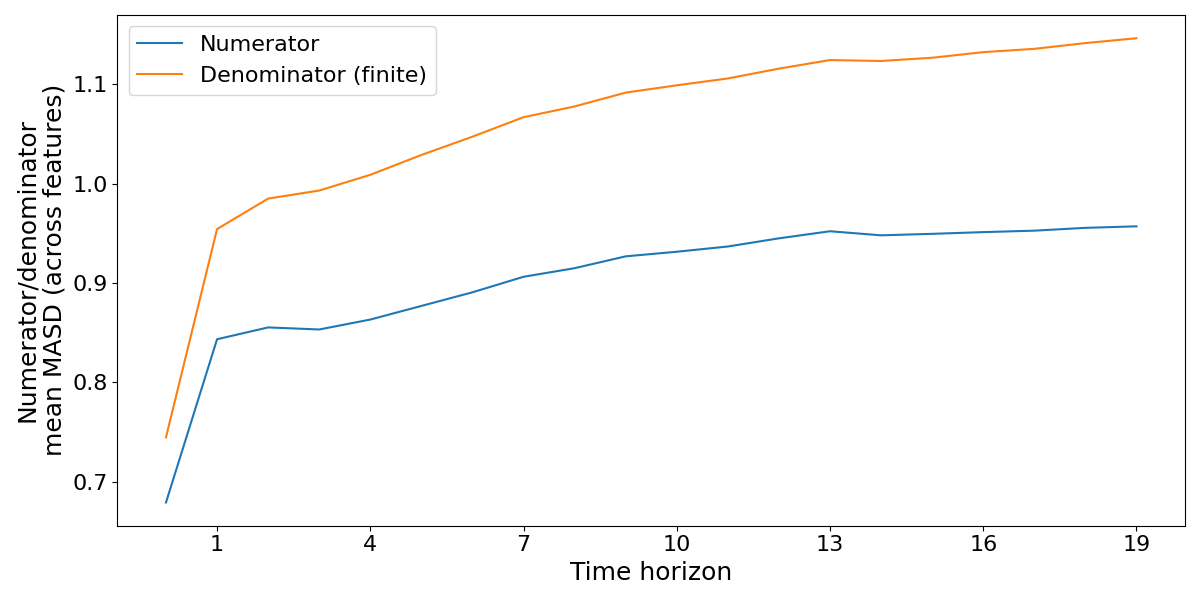}
		\caption{Testing dataset}
	\end{subfigure}
	\caption[The figure describes the median numerator and denominator of the MASD metric, across features on the sepsis dataset, under the Hajek estimator. The x-axis displays the time horizon. Aggregations are taken over action trajectory pairs with finite denominators.]{The figure describes the median numerator and denominator of the $\masdtw{}$ metric, across features on the sepsis dataset, under the Hajek estimator. The x-axis displays the time horizon. Aggregations are taken over action trajectory pairs with finite denominators.}
	\label{fig:mimic_sepsis_xgboost_4_w_cov_bal_by_time_numer_finite_denom_hajek}
\end{figure}
 
\FloatBarrier

Figures \ref{fig:xgboost_4_w_masd_avg_across_feat}, \ref{fig:xgboost_4_w_masd_avg_across_feat_clip} and \ref{fig:xgboost_4_w_masd_avg_across_feat_hajek} describe the median (across features) $\masdtdiffmean{}$ values on the sepsis dataset. The blue describes the range of values where covariate balance is improved under the propensity model and orange describes the opposite. For the vanilla estimator, the result supports the deduction that covariate balance degrades with horizon. In particular, after timepoint 3 on the training set and timepoint 2 on the testing set, the median $\masdtdiffmean{}$ across features is not greater than 0. Figure \ref{fig:xgboost_4_w_masd_avg_across_feat_clip}, displaying balance under the clipped estimator, suggests the propensity model improves covariate balance for the entire trajectory. This is similarly the case for the Hajek estimator, presented in figure \ref{fig:xgboost_4_w_masd_avg_across_feat_hajek}. A conclusion regarding the validity of this claim is reserved until the bias of the estimator is explored, as already alluded to.\\

\begin{figure}[!h]
	\centering
	\begin{subfigure}{0.49\linewidth}
		\includegraphics[width=\linewidth]{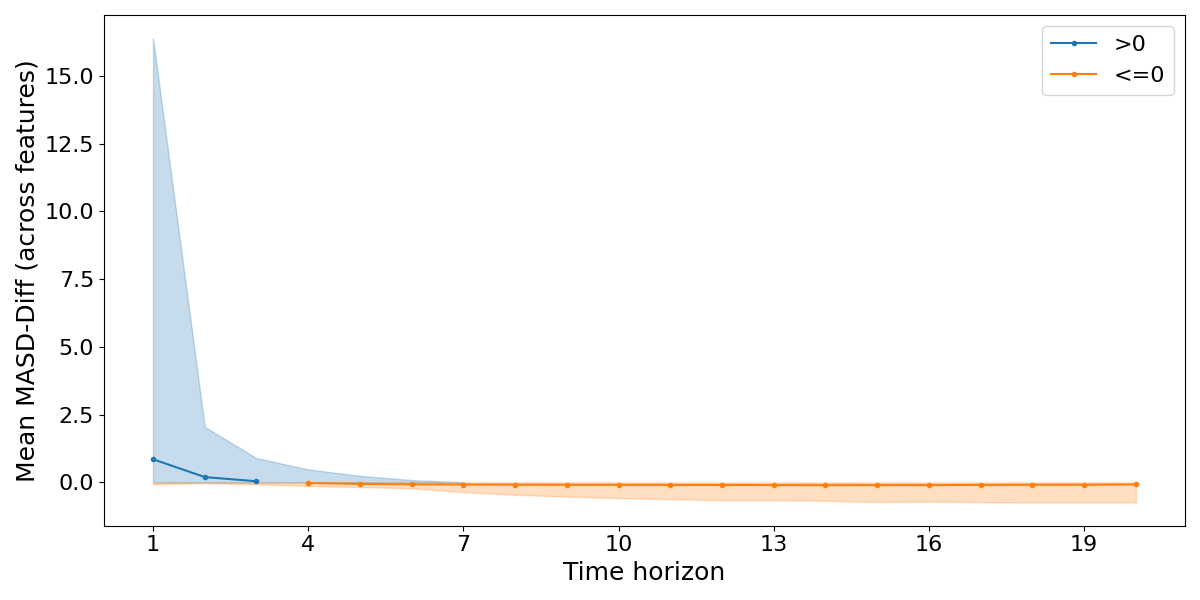}
		\caption{Training dataset}
	\end{subfigure}
	\begin{subfigure}{0.49\linewidth}
		\includegraphics[width=\linewidth]{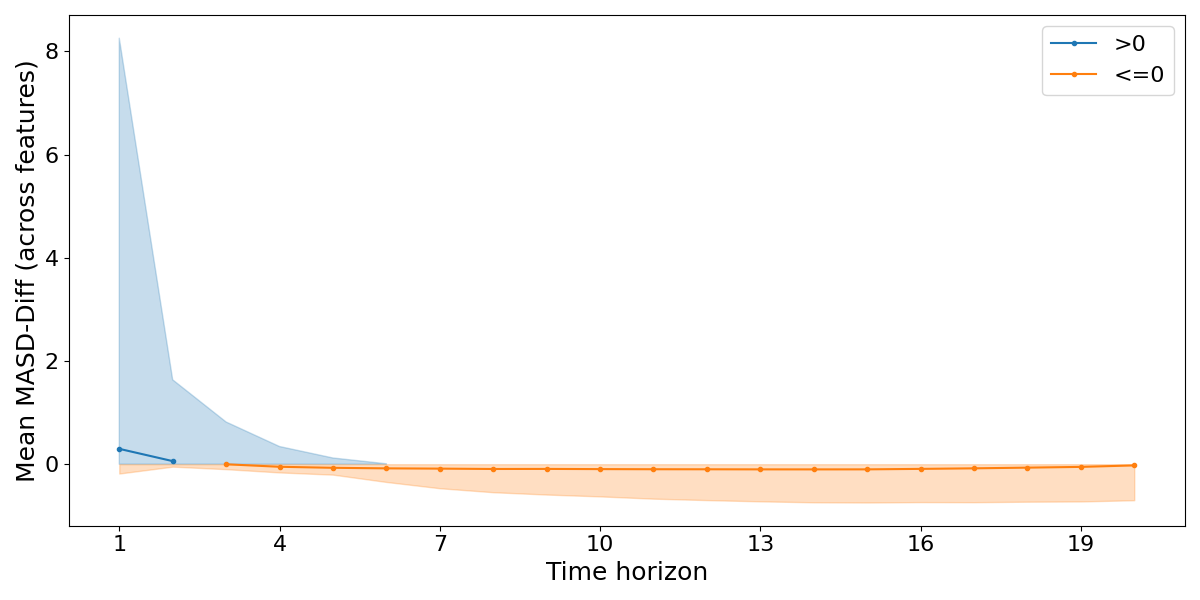}
		\caption{Testing dataset}
	\end{subfigure}
	\caption[The figure describes the median (across features) mean MASD-Diff values on the sepsis dataset, under the vanilla estimator. The shaded region describes the min and max mean MASD-Diff values by timepoint.]{The figure describes the median (across features) $\masdtdiffmean{}$ values on the sepsis dataset, under the vanilla estimator. The shaded region describes the min and max $\masdtdiffmean{}$ values by timepoint.}
	\label{fig:xgboost_4_w_masd_avg_across_feat}
\end{figure}

\begin{figure}[!h]
	\centering
	\begin{subfigure}{0.49\linewidth}
		\includegraphics[width=\linewidth]{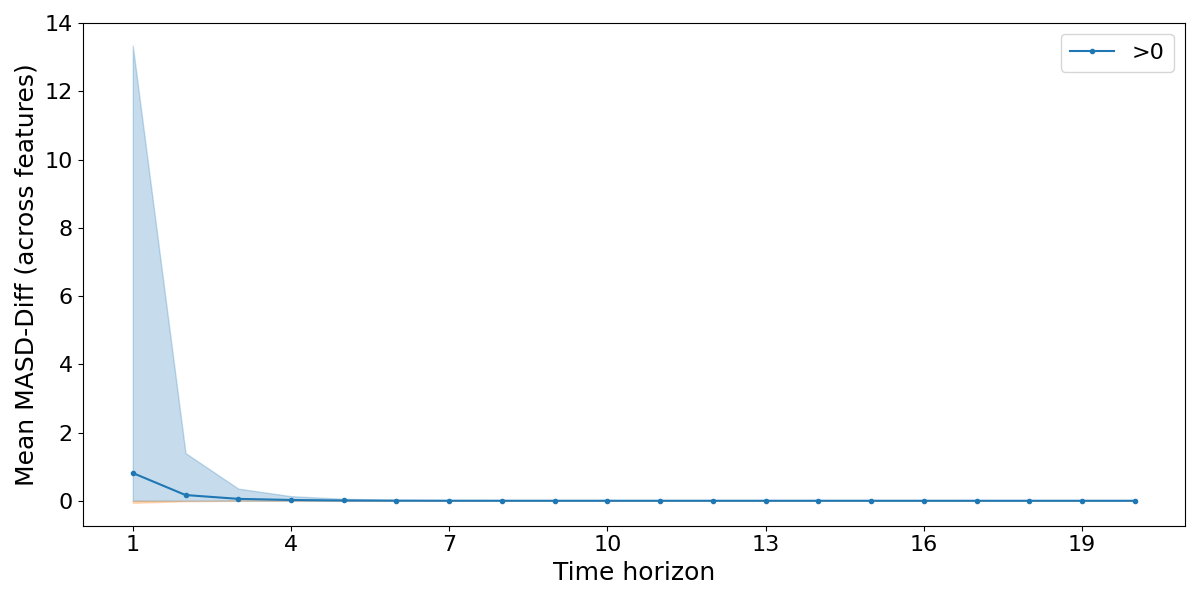}
		\caption{Training dataset}
	\end{subfigure}
	\begin{subfigure}{0.49\linewidth}
		\includegraphics[width=\linewidth]{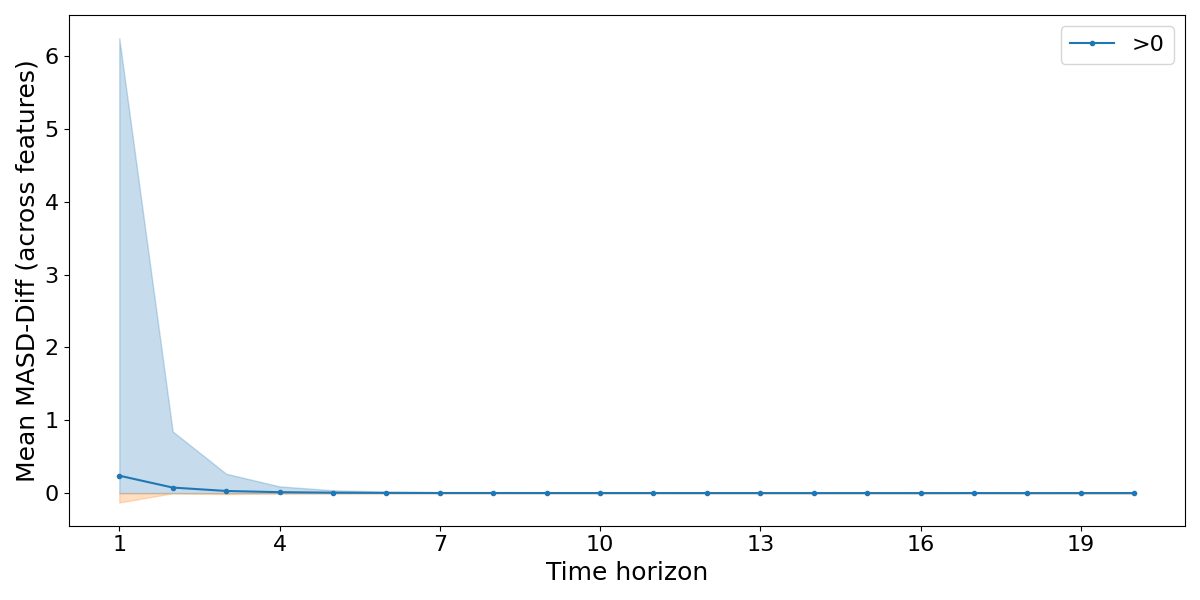}
		\caption{Testing dataset}
	\end{subfigure}
	\caption[The figure describes the median (across features) mean MASD-Diff values on the sepsis dataset, under the clipped estimator. The shaded region describes the min and max mean MASD-Diff values by timepoint.]{The figure describes the median (across features) $\masdtdiffmean{}$ values on the sepsis dataset, under the clipped estimator. The shaded region describes the min and max $\masdtdiffmean{}$ values by timepoint.}
	\label{fig:xgboost_4_w_masd_avg_across_feat_clip}
\end{figure}

\begin{figure}[!h]
	\centering
	\begin{subfigure}{0.49\linewidth}
		\includegraphics[width=\linewidth]{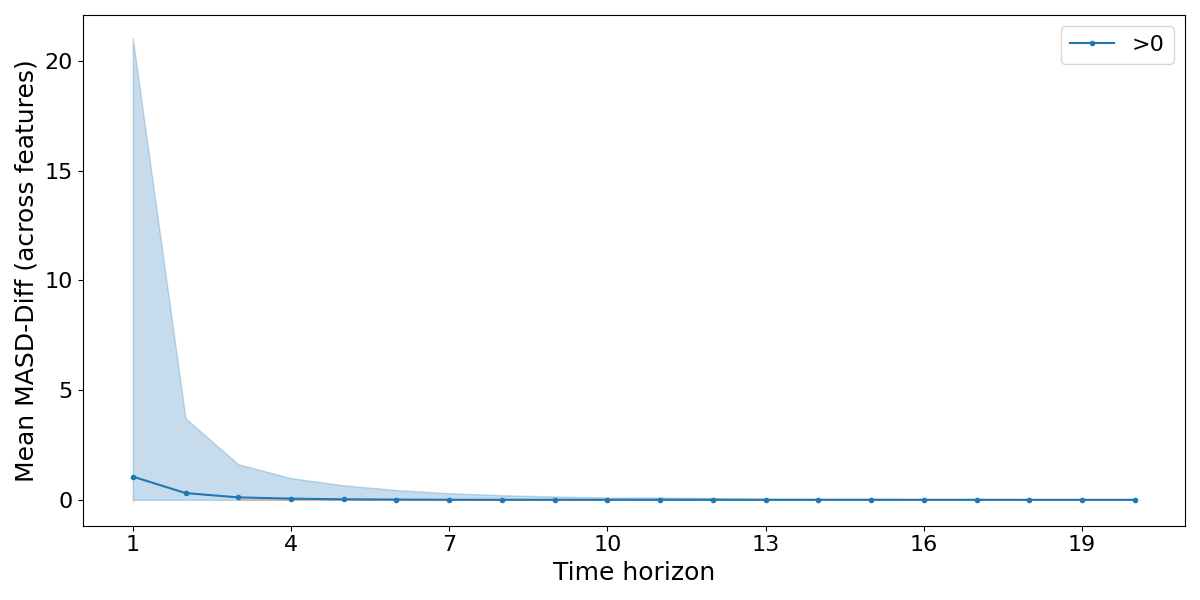}
		\caption{Training dataset}
	\end{subfigure}
	\begin{subfigure}{0.49\linewidth}
		\includegraphics[width=\linewidth]{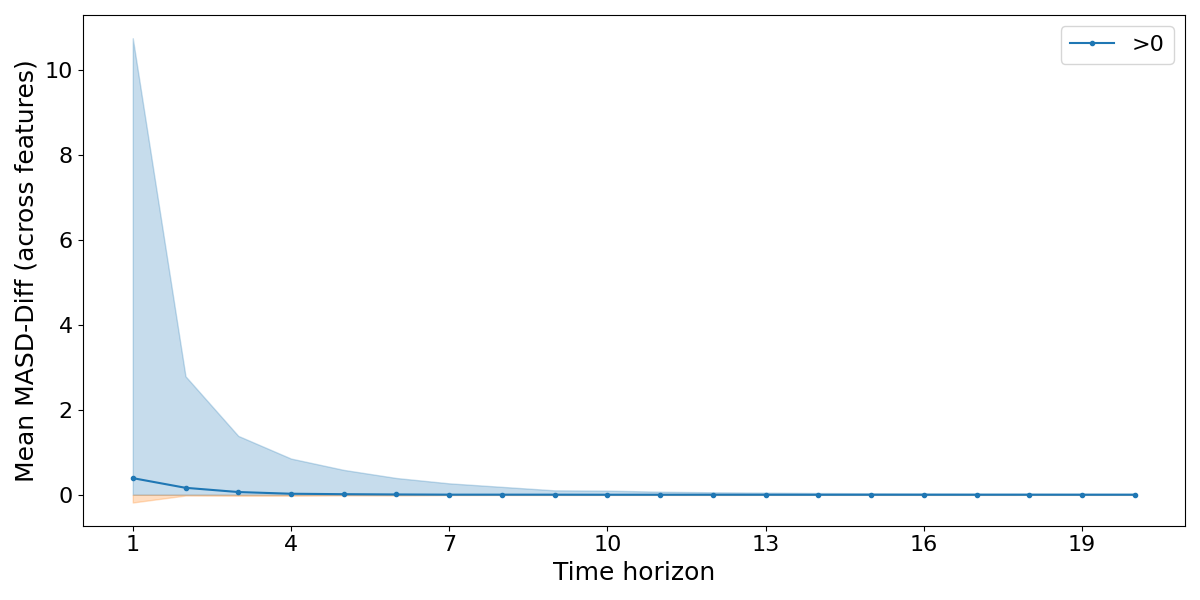}
		\caption{Testing dataset}
	\end{subfigure}
	\caption[The figure describes the median (across features) mean MASD-Diff values on the sepsis dataset, under the Hajek estimator. The shaded region describes the min and max mean MASD-Diff values by timepoint.]{The figure describes the median (across features) $\masdtdiffmean{}$ values on the sepsis dataset, under the Hajek estimator. The shaded region describes the min and max $\masdtdiffmean{}$ values by timepoint.}
	\label{fig:xgboost_4_w_masd_avg_across_feat_hajek}
\end{figure}

\FloatBarrier

\paragraph{Bias investigation for clipped and Hajek importance sampling}\label{sec:bias_invest_clip_hajek}
Figure \ref{fig:xgboost_4_w_avg_state_times_weight_scatter} displays scatter plots of the average (across intervention level) original covariate on the x-axis (i.e., $g(s_{i,0:t,k})$) against the weighted covariate on the y-axis (i.e., $g(s_{i,0:t,k})(\prod_{t'=0}^{t}p(a_{i,t'}|s_{i,t'}))^{-1}$ from equation \ref{equ:masd_weight_t}). The fewer points observed at timestep 1 is a function of the fact that the number of potential interventions increases as $25^{t+1}$. Across the timesteps, a non-zero amount of variance is retained by the map from the unweighted to the weighted covariate (i.e., between the x and y axes).\\

Figures \ref{fig:xgboost_4_w_avg_state_times_clipped_weight_scatter} and \ref{fig:xgboost_4_w_avg_state_times_hajek_weight_scatter} display the same information as figure \ref{fig:xgboost_4_w_avg_state_times_weight_scatter} but for the clipped and Hajek estimators, respectively. Visually, in comparison to the vanilla estimator, the variance in the importance weighted projection of the covariates vanishes at longer time horizons. Let $h(s_{i,t,k})$ define the projected covariate under an importance weighting regimen i.e.:
\begin{align*}
	h_{\textrm{Vanilla}}(s_{i,t,k},a_{0:t}) =& \frac{g(s_{i,0:t,k})}{\prod_{t'=0}^{t}p(a_{i,t'}|s_{i,t'})}\\
	h_{\textrm{Clipped}}(s_{i,t,k},a_{0:t}) =& \frac{g(s_{i,0:t,k})}{\zeta \vee \prod_{t'=0}^{t}p(a_{i,t'}|s_{i,t'})}\\
	h_{\textrm{Hajek}}(s_{i,t,k},a_{0:t}) =& \frac{g(s_{i,0:t,k})}{\prod_{t'=0}^{t}p(a_{i,t'}|s_{i,t'})(\sum \prod_{t'=0}^{t}p(a_{i,t'}|s_{i,t'}))^{-1}}
\end{align*}
The lack of variance for a given value of $g(s_{i,0:t,k})$ under the $h_{\textrm{Clipped}}$ and $h_{\textrm{Hajek}}$ projections (demonstrated by figures \ref{fig:xgboost_4_w_avg_state_times_clipped_weight_scatter} and \ref{fig:xgboost_4_w_avg_state_times_hajek_weight_scatter}) implies $h_{\textrm{Clipped}}(s_{i,t,k},a_{0:t}) = h_{\textrm{Clipped}}(s_{i,t,k},a_{0:t}')$ and $h_{\textrm{Hajek}}(s_{i,t,k},a_{0:t}) = h_{\textrm{Hajek}}(s_{i,t,k},a_{0:t}')$ i.e., the bias introduced by the clipped and Hajek estimators results in the covariate projections collapsing to a single value for a given covariate value, $g(s_{i,0:t,k})$ and for all intervention levels, $a_{0:t}$. The implication is that, under high variance regimens (e.g., long time horizons), the covariate weight under the clipped and Hajek estimators will be equal for all actions trajectories, giving the impression of balance that is equivalent to the unweighted state i.e., does not degrade but does not improve. Recall, figures \ref{fig:xgboost_4_w_masd_avg_across_feat_clip} and \ref{fig:xgboost_4_w_masd_avg_across_feat_hajek} demonstrated that the $\masdtdiffmean{}$ values converged to 0 with time and thus maybe a function of the afore described phenomena.\\

\begin{figure}[htbp]
	\centering
	\begin{subfigure}{\linewidth}
		\includegraphics[width=\linewidth]{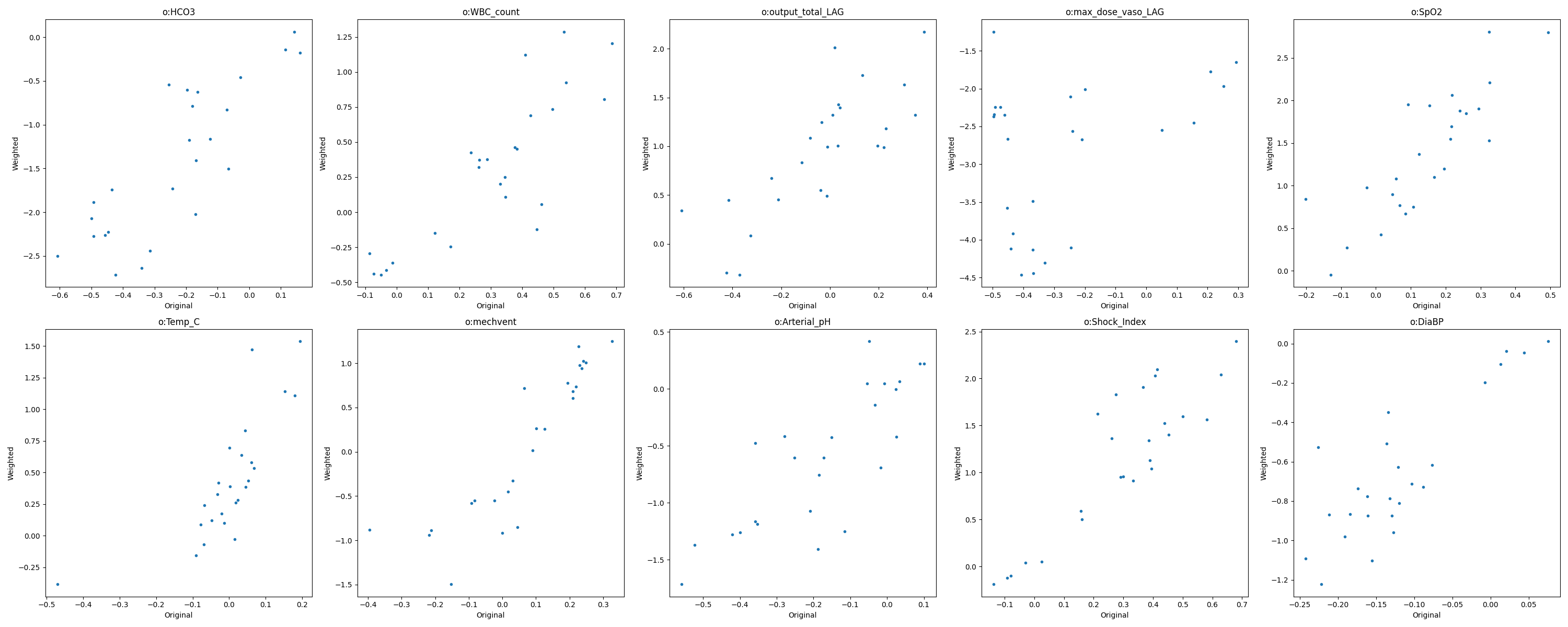}
		\caption{Timestep 1}
	\end{subfigure}
	\begin{subfigure}{\linewidth}
		\includegraphics[width=\linewidth]{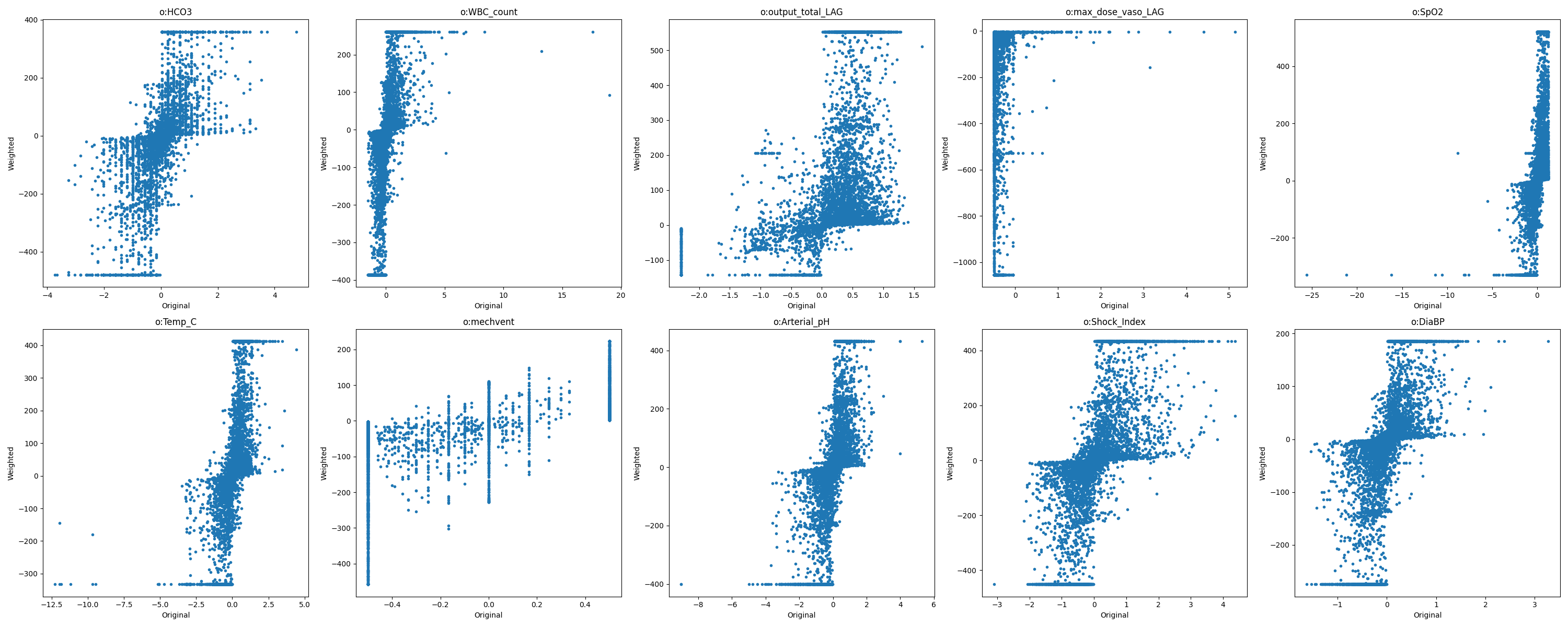}
		\caption{Timestep 4}
	\end{subfigure}
	\begin{subfigure}{\linewidth}
		\includegraphics[width=\linewidth]{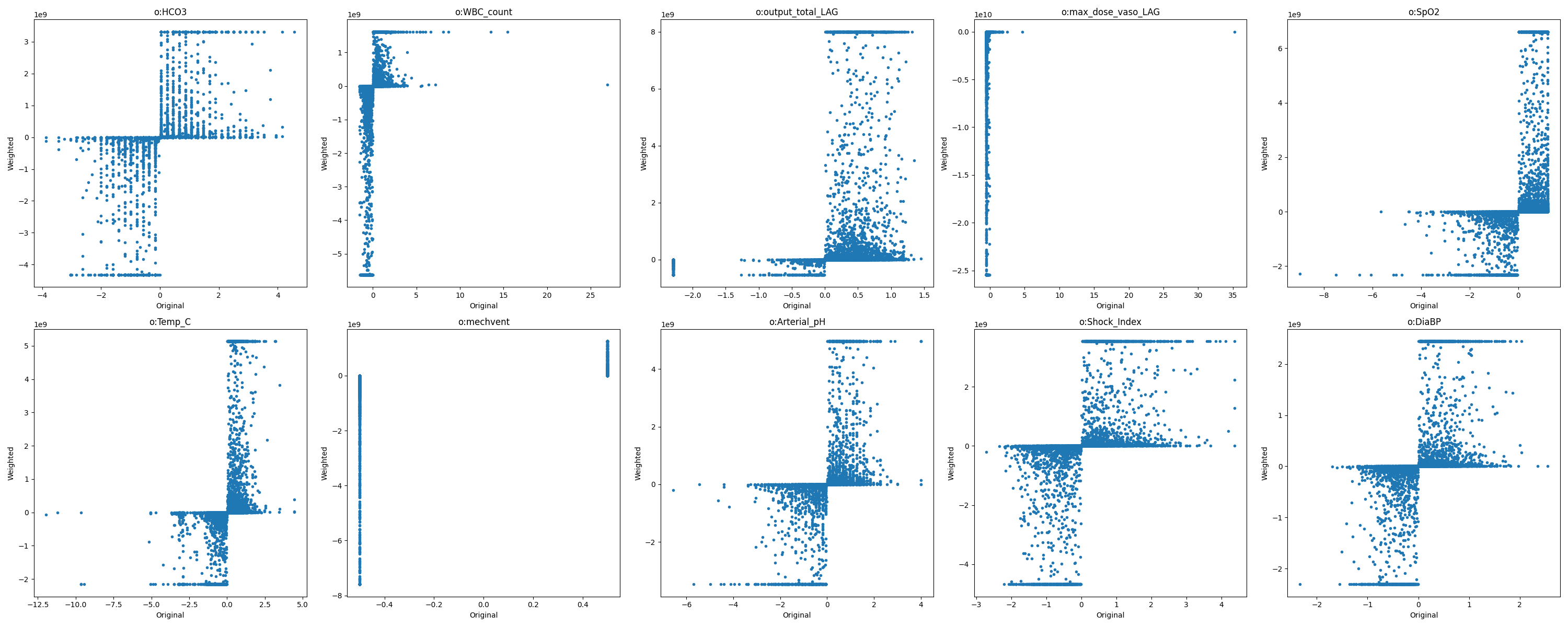}
		\caption{Timestep 16}
	\end{subfigure}
	\caption[The figure displays scatter plots of the average (across intervention level) covariate value on the x-axis against the weighted covariate on the y-axis. The weighted covariate values are clipped between the 10th and 90th percentile of the original distribution. The figure displays timestep 1 at the top, timestep 4 in the middle and timestep 16 at the bottom, with propensity score predictions made over seed 0 the training dataset.]{The figure displays scatter plots of the average (across intervention level) covariate value on the x-axis against the weighted covariate (i.e., $g(s_{i,0:t,k})(\prod_{t'=0}^{t}p(a_{i,t'}|s_{i,t'}))^{-1}$ from equation \ref{equ:masd_weight_t}) on the y-axis. The weighted covariate values are clipped between the 10th and 90th percentile of the original distribution. The figure displays timestep 1 at the top, timestep 4 in the middle and timestep 16 at the bottom, with propensity score predictions made over seed 0 of the training dataset.}
	\label{fig:xgboost_4_w_avg_state_times_weight_scatter}
\end{figure}

\begin{figure}[htbp]
	\centering
	\begin{subfigure}{\linewidth}
		\includegraphics[width=\linewidth]{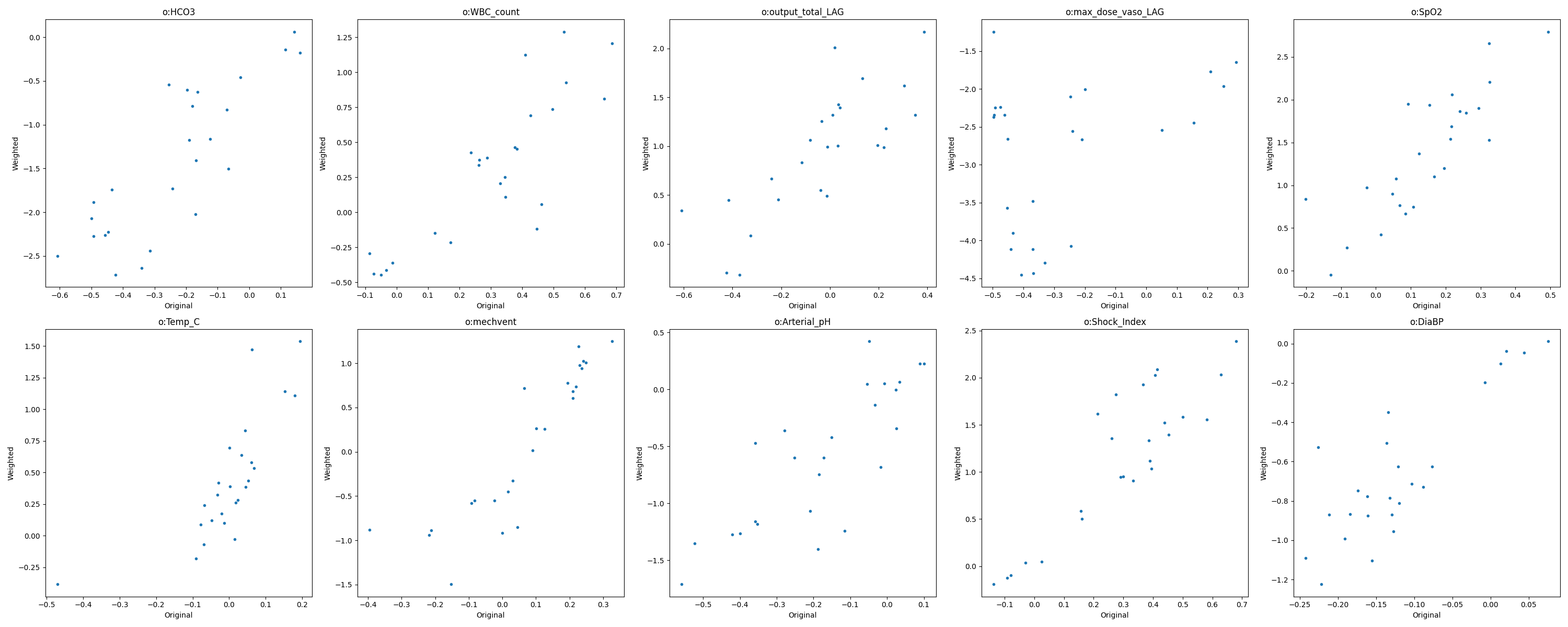}
		\caption{Timestep 1}
	\end{subfigure}
	\begin{subfigure}{\linewidth}
		\includegraphics[width=\linewidth]{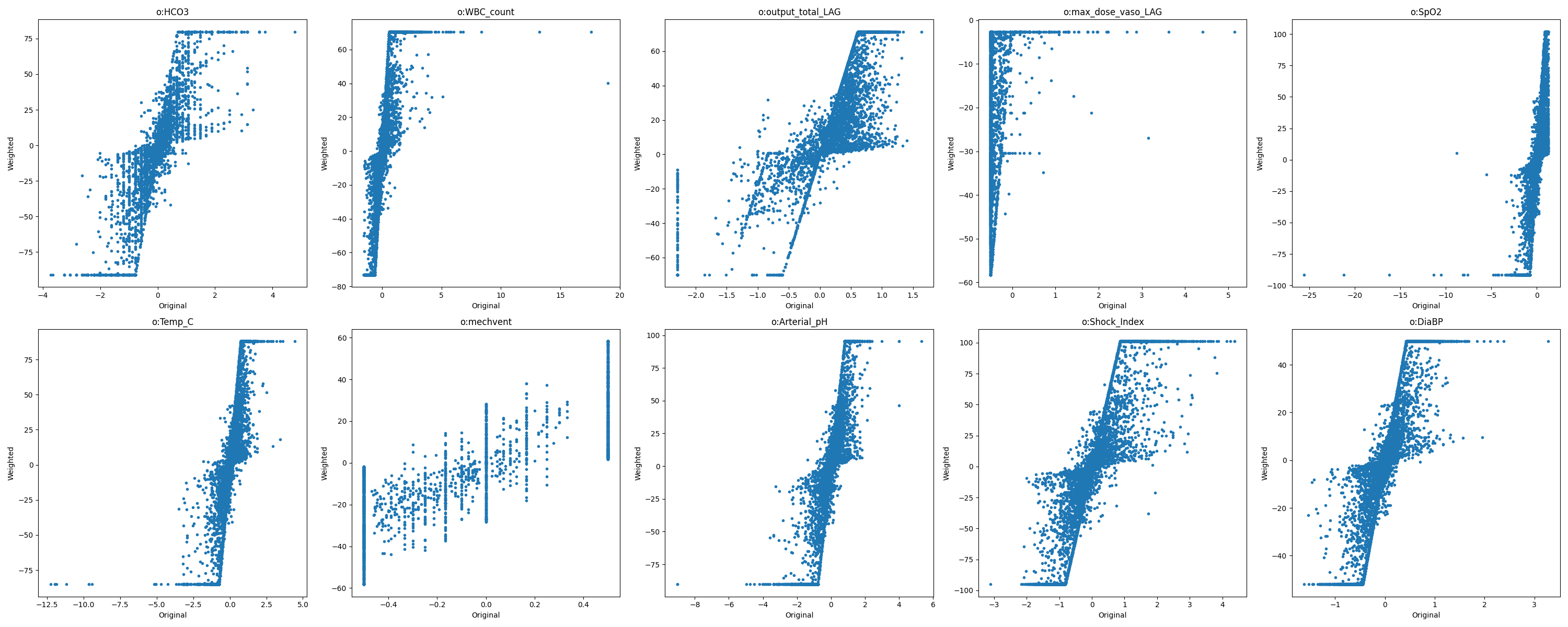}
		\caption{Timestep 4}
	\end{subfigure}
	\begin{subfigure}{\linewidth}
		\includegraphics[width=\linewidth]{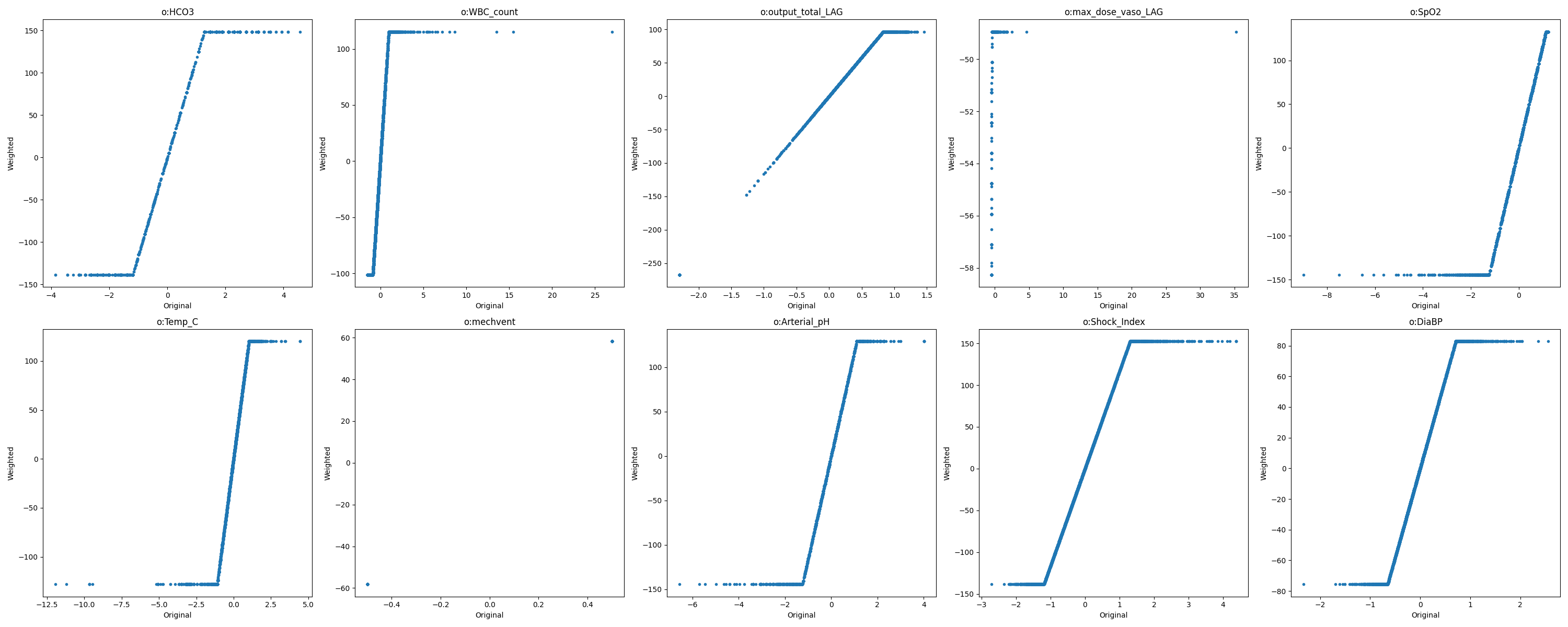}
		\caption{Timestep 16}
	\end{subfigure}
	\caption[The figure displays scatter plots of the average (across intervention level) covariate value on the x-axis against the clipped importance weighted covariate on the y-axis. The weighted covariate values are further clipped between the 10th and 90th percentile of the original distribution. The figure displays timestep 1 at the top, timestep 4 in the middle and timestep 16 at the bottom, with propensity score predictions made over seed 0 of the training dataset.]{The figure displays scatter plots of the average (across intervention level) covariate value on the x-axis against the clipped importance weighted covariate (i.e., $g(s_{i,0:t,k})(\prod_{t'=0}^{t}p(a_{i,t'}|s_{i,t'}))^{-1}$ from equation \ref{equ:masd_weight_t}) on the y-axis. The weighted covariate values are further clipped between the 10th and 90th percentile of the original distribution. The figure displays timestep 1 at the top, timestep 4 in the middle and timestep 16 at the bottom, with propensity score predictions made over seed 0 of the training dataset.}
	\label{fig:xgboost_4_w_avg_state_times_clipped_weight_scatter}
\end{figure}

\begin{figure}[htbp]
	\centering
	\begin{subfigure}{\linewidth}
		\includegraphics[width=\linewidth]{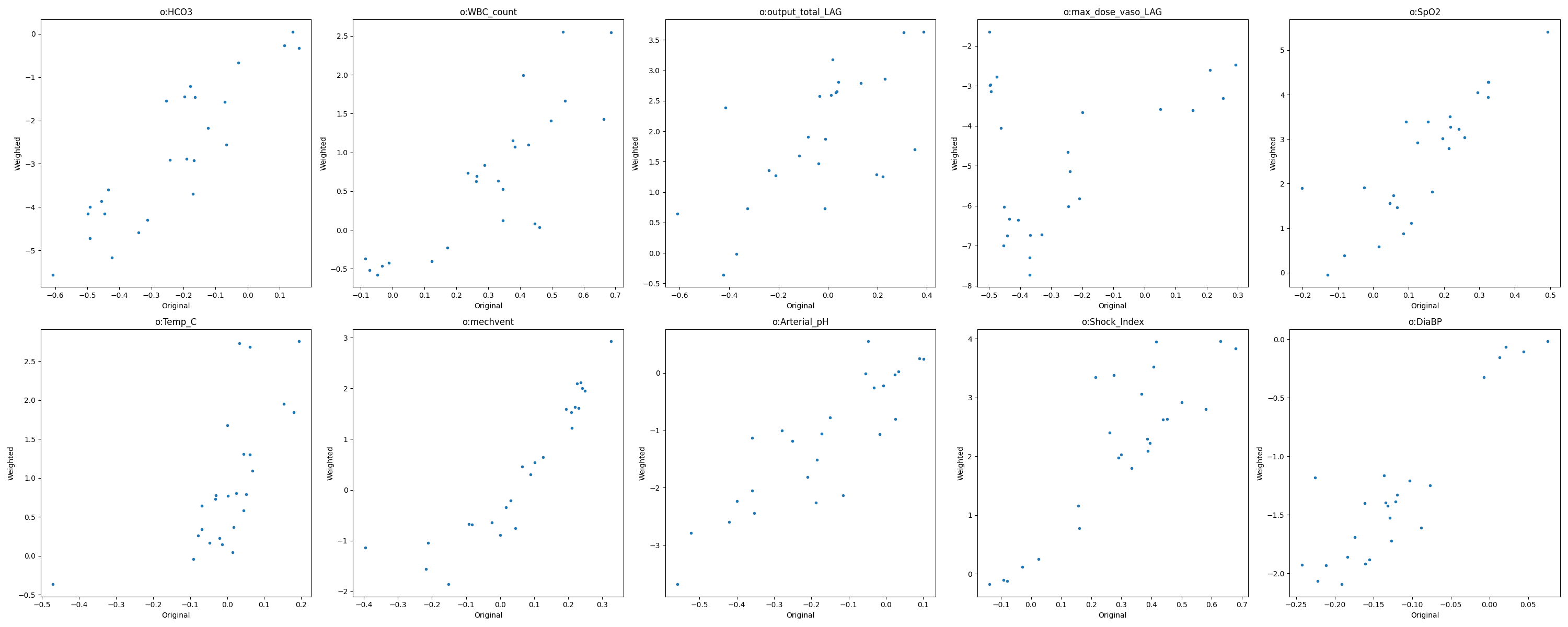}
		\caption{Timestep 1}
	\end{subfigure}
	\begin{subfigure}{\linewidth}
		\includegraphics[width=\linewidth]{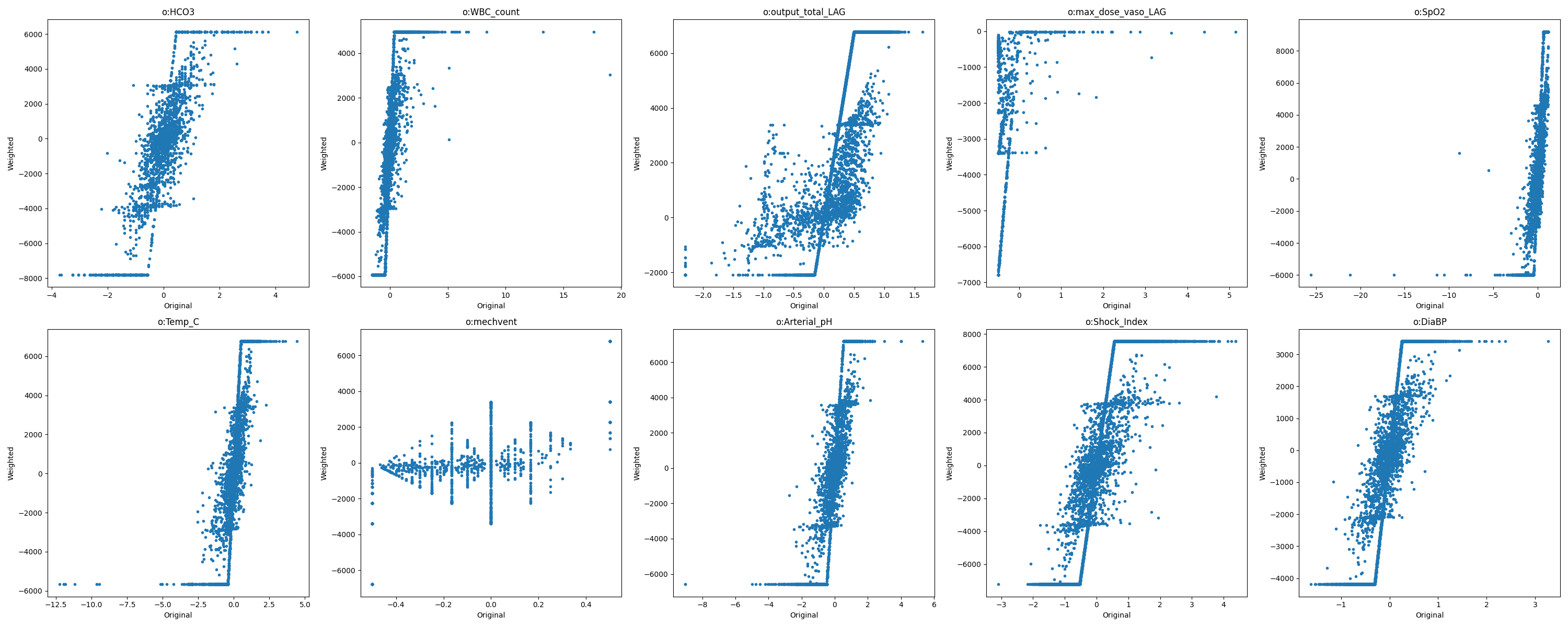}
		\caption{Timestep 4}
	\end{subfigure}
	\begin{subfigure}{\linewidth}
		\includegraphics[width=\linewidth]{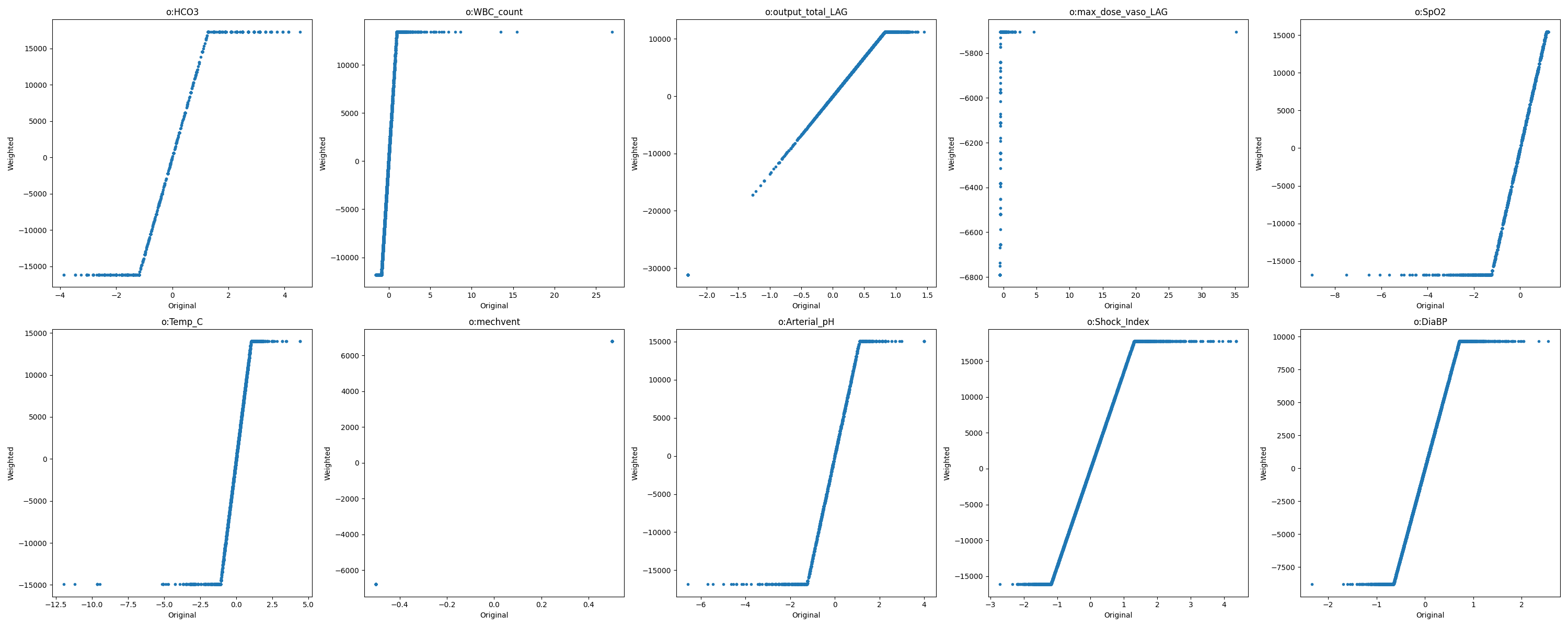}
		\caption{Timestep 16}
	\end{subfigure}
	\caption[The figure displays scatter plots of the average (across intervention level) covariate value on the x-axis against the Hajek importance weighted covariate on the y-axis. The weighted covariate values are clipped between the 10th and 90th percentile of the original distribution. The figure displays timestep 1 at the top, timestep 4 in the middle and timestep 16 at the bottom, with propensity score predictions made over seed 0 of the training dataset.]{The figure displays scatter plots of the average (across intervention level) covariate value on the x-axis against the Hajek importance weighted covariate (i.e., $g(s_{i,0:t,k})(\prod_{t'=0}^{t}p(a_{i,t'}|s_{i,t'}))^{-1}$ from equation \ref{equ:masd_weight_t}) on the y-axis. The weighted covariate values are clipped between the 10th and 90th percentile of the original distribution. The figure displays timestep 1 at the top, timestep 4 in the middle and timestep 16 at the bottom, with propensity score predictions made over seed 0 of the training dataset.}
	\label{fig:xgboost_4_w_avg_state_times_hajek_weight_scatter}
\end{figure}

\paragraph{Bias analysis of clipped importance sampling} Let: 
\begin{align*}
	d=\{\tau_{i}\}_{i=1}^{n}=\{s_{i,0},a_{i,0},...,s_{i,H-2},a_{i,H-2},s_{i,H-1}\}_{i=1}^{n}
\end{align*}
with density defined by equation \ref{equ:mdp_density}. For a sufficiently large time horizon, $t$, since $p(a_{i,t'}|s_{i,t'})$ is a normalised probability density, observe that $\forall \tau\in d, \prod_{t'=0}^{H-2}p(a_{i,t'}|s_{i,t'}) \vee \zeta = \zeta$. Then:
\begin{align*}
	\hat{\mu}_{a_{0:t}'}(k) = & \frac{1}{n}\sum_{i=1}^{n}\frac{\mathbbm{1}(a_{i,0:t}=a_{0:t}')g(s_{i,0:t,k})}{\zeta} \\
	=& \zeta^{-1}\Bigg(\frac{1}{n}\sum_{i=1}^{n}\mathbbm{1}(a_{i,0:t}=a_{0:t}')g(s_{i,0:t,k})\Bigg)
\end{align*}
Additionally, for simplicity, considering the biased estimate of $\sigma^{2}_{a_{0:t}'}(k)$:
\begin{align*}
	\bar{\sigma}^{2}_{a_{0:t}'}(k) =& \frac{1}{n}\sum_{i=1}^{n}\Bigg(\frac{\mathbbm{1}(a_{i,0:t}=a_{0:t}')g(s_{i,0:t,k})}{\prod_{t'=0}^{t}p(a_{i,t'}|s_{i,t'})} - \hat{\mu}_{a_{0:t}'}\Bigg)^{2} \\
	=& \frac{1}{n}\sum_{i=1}^{n}\Bigg(\frac{\mathbbm{1}(a_{i,0:t}=a_{0:t}')g(s_{i,0:t,k})}{\prod_{t'=0}^{t}p(a_{i,t'}|s_{i,t'})}\Bigg)^{2}\\
	& - \Bigg(\frac{1}{n}\sum_{i=1}^{n}\frac{\mathbbm{1}(a_{i,0:t}=a_{0:t}')g(s_{i,0:t,k})}{\prod_{t'=0}^{t}p(a_{i,t'}|s_{i,t'})}\Bigg)^{2} \\
	=& \frac{1}{n}\sum_{i=1}^{n}\Bigg(\frac{\mathbbm{1}(a_{i,0:t}=a_{0:t}')g(s_{i,0:t,k})}{\zeta}\Bigg)^{2}\\
	& - \Bigg(\frac{1}{n}\sum_{i=1}^{n}\frac{\mathbbm{1}(a_{i,0:t}=a_{0:t}')g(s_{i,0:t,k})}{\zeta}\Bigg)^{2} \\
	=& \zeta^{-2}\Bigg(\frac{1}{n}\sum_{i=1}^{n}\bigg(\mathbbm{1}(a_{i,0:t}=a_{0:t}')g(s_{i,0:t,k})\bigg)^{2}\\
	&- \Bigg(\frac{1}{n}\sum_{i=1}^{n}\mathbbm{1}(a_{i,0:t}=a_{0:t}')g(s_{i,0:t,k})\Bigg)^{2}\Bigg)
\end{align*}

Combining the numerator and denominator, observe that:
\begin{align*}
	\frac{|\hat{\mu}_{a_{0:t}'}(k)-\hat{\mu}_{a_{0:t}''}(k)|}{\sqrt{n^{-1}\bar{\sigma}^{2}_{a_{0:t}'}(k) + n^{-1}\bar{\sigma}^{2}_{a_{0:t}''}(k)}} = 
	\frac{\zeta^{-1}|\hat{\mu\prime}_{a_{0:t}'}(k)-\hat{\mu\prime}_{a_{0:t}''}(k)|}{\sqrt{\zeta^{-2}(n^{-1}\bar{\sigma\prime}^{2}_{a_{0:t}'}(k) + n^{-1}\bar{\sigma\prime}^{2}_{a_{0:t}''}(k))}} \\
\end{align*}
where $\hat{\mu\prime}_{a_{0:t}'}(k)$ and $\bar{\sigma\prime}^{2}_{a_{0:t}'}(k)$ are the unweighted mean and standard deviation. Which implies, for a sufficiently large $t$:
\begin{equation}\label{equ:masd_propto_large_t}
	\masdtw{} \approx \textrm{MASD}_{\textrm{UnWeight-}t}(k,a_{0:t},a_{0:t}'') 
\end{equation}
Equation \ref{equ:masd_propto_large_t} explains the apparent positive covariate balance observed when considering the clipped importance sampling estimator, demonstrated in figures \ref{fig:mimic_sepsis_xgboost_4_w_post_masd_avg_across_feat_clip} and \ref{fig:xgboost_4_w_masd_avg_across_feat_clip}.\\

\paragraph{Bias analysis of Hajek importance sampling} A similar deduction regarding the bias of the Hajek estimator can be made. However, unlike the clipping estimator, the resulting bias is driven by the exponential reduction in effective sample size. This is demonstrated in figure \ref{fig:obs_per_int_level_by_time}, displaying the number of observations per intervention level, split by time for the training and testing datasets.\\

\begin{figure}[!h]
	\centering
	\begin{subfigure}{0.49\linewidth}
		\includegraphics[width=\linewidth]{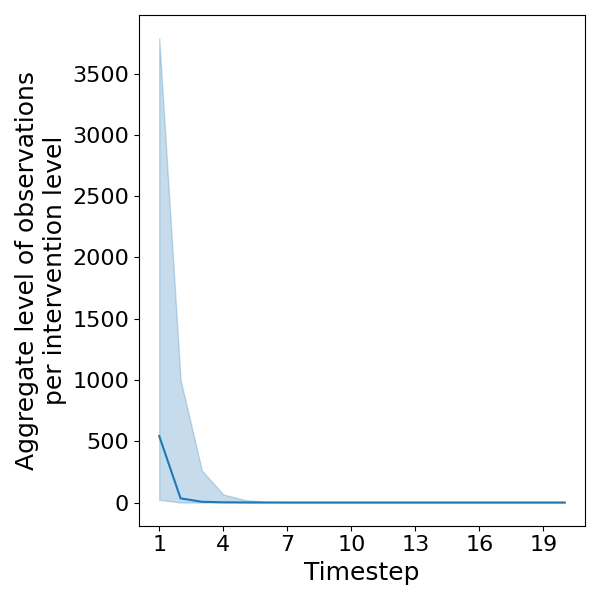}	
	\caption{Training set}
	\end{subfigure}
	\begin{subfigure}{0.49\linewidth}
		\includegraphics[width=\linewidth]{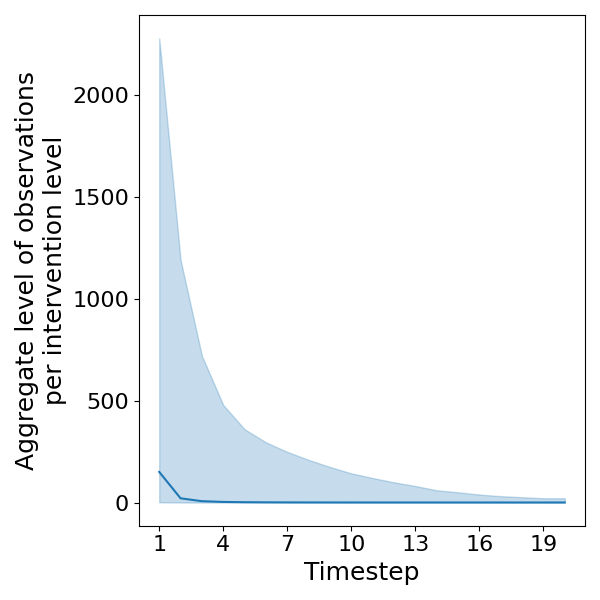}
	\caption{Testing set}
	\end{subfigure}
	\caption{The figure displays the aggregate number of observations per intervention level, across time. The thick blue line defines the mean number of observations whilst the shaded area defines and min and max.}
	\label{fig:obs_per_int_level_by_time}
\end{figure}

Returning to the Hajek estimator, $\masdthw{}$, for a sufficiently large $t$:
\begin{align*}
	\frac{\frac{\mathbbm{1}(a_{i,0:t}=a_{0:t}')g(s_{i,0:t,k})}{\prod_{t'=0}^{t}p(a_{i,t'}|s_{i,t'})}}{\frac{1}{n}\sum_{i=1}^{n}\frac{\mathbbm{1}(a_{i,0:t}=a_{0:t}')}{\prod_{t'=0}^{t}p(a_{i,t'}|s_{i,t'})}} = \mathbbm{1}(a_{i,0:t}=a_{0:t}')g(s_{i,0:t,k})\\
\end{align*}
As such:
\begin{align*}
	\frac{\frac{1}{n}\sum_{i=1}^{n}\frac{\mathbbm{1}(a_{i,0:t}=a_{0:t}')g(s_{i,0:t,k})}{\prod_{t'=0}^{t}p(a_{i,t'}|s_{i,t'})}}{\frac{1}{n}\sum_{i=1}^{n}\frac{\mathbbm{1}(a_{i,0:t}=a_{0:t}')}{\prod_{t'=0}^{t}p(a_{i,t'}|s_{i,t'})}} = \frac{1}{n}\sum_{i=1}^{n}\mathbbm{1}(a_{i,0:t}=a_{0:t}')g(s_{i,0:t,k})
\end{align*}
and:
\begin{align*}
	& \frac{1}{n-1}\sum_{i=1}^{n}\Bigg(\frac{\frac{\mathbbm{1}(a_{i,0:t}=a_{0:t}')g(s_{i,0:t,k})}{\prod_{t'=0}^{t}p(a_{i,t'}|s_{i,t'})}}{\frac{1}{n}\sum_{i=1}^{n}\frac{\mathbbm{1}(a_{i,0:t}=a_{0:t}')}{\prod_{t'=0}^{t}p(a_{i,t'}|s_{i,t'})}} - \hat{\mu}_{a_{0:t}'}\Bigg)^{2} \\
	& = \frac{1}{n-1}\sum_{i=1}^{n}\Bigg(\mathbbm{1}(a_{i,0:t}=a_{0:t}')g(s_{i,0:t,k}) - \frac{1}{n}\sum_{i=1}^{n}\mathbbm{1}(a_{i,0:t}=a_{0:t}')g(s_{i,0:t,k})\Bigg)^{2}
\end{align*}
Thus, similarly to the clipped estimator, for a sufficiently large $t$:
\begin{align}
	\masdthw{} \approx \textrm{MASD}_{\textrm{UnWeight-}t}(k,a_{0:t},a_{0:t}'') 
\end{align}


\FloatBarrier

\paragraph{Summary} To conclude, the high variance of the importance sampling estimator precludes drawing a conclusion regarding the presence of covariate balance at later timesteps. Furthermore, controlling this variance using the conventional clipping and weighting arguably worsens the issue since both methods bias towards perfect covariate balance in high variance settings. More encouragingly, both the clipped and Hajek estimators did suggest covariate balance was present beyond the first three timesteps where, from figures \ref{fig:xgboost_4_w_avg_state_times_clipped_weight_scatter} and \ref{fig:xgboost_4_w_avg_state_times_hajek_weight_scatter}, there is still variation in the weighted states. This argument however, is entirely heuristic and a formal framework describing the trade-off between accuracy and variance reduction would be extremely useful. Pessimistic PAC based estimators for clipped importance sampling have been widely established (\cite{sakhi2024logarithmic}) and thus would provide a solid starting point for developing such a framework. An immediate and potentially significant challenge in adapting these estimators however, is the desirability of pessimism in the context of covariate balance.\\

\FloatBarrier

\subsubsection{Fluid action space (Sepsis-fluid dataset)}\label{sec:fluid_action_space}

Under the full action space used in the previous section, a number of actions were drastically underrepresented even before accounting for the cumulative effect over time. Figures \ref{fig:prop_actions_ovr_time_sepsis} and \ref{fig:prop_actions_ovr_time_fluid} describe the proportion of observations split by action over time. At a high level, for the sepsis dataset, the minimum representation for a given action was only 3\% of the total observations (calculated using the seed 0 training set). It was hypothesised that this lead to higher variance in the covariate balance metrics due to the reduced effective sample size, as demonstrated by figure \ref{fig:obs_per_int_level_by_time_seed_0_sepsis}. This is in contrast to figure \ref{fig:obs_per_int_level_by_time_seed_0_fluid} where the effective sample size of a given intervention reduces to 0 at a later time horizon (judging by the minimum value).\\

\begin{figure}[!h]
	\centering
	\begin{subfigure}{0.49\linewidth}
		\includegraphics[width=\linewidth]{images/mimic_sepsis/clip_analysis_seed_0/train/obs_per_int_level_by_time.png}
		\caption{Sepsis dataset}
		\label{fig:obs_per_int_level_by_time_seed_0_sepsis}
	\end{subfigure}
	\begin{subfigure}{0.49\linewidth}
		\includegraphics[width=\linewidth]{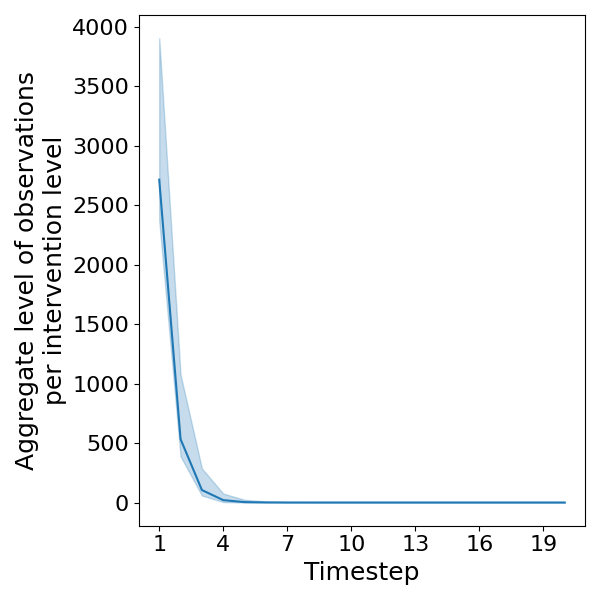}
		\caption{Sepsis-fluid dataset}
		\label{fig:obs_per_int_level_by_time_seed_0_fluid}
	\end{subfigure}
	\caption{The figures describe the mean (blue solid line) and min/max (shaded area) number of observations per intervention level, across time. The seed 0 training set was used to produce the figures.}
	\label{fig:obs_per_int_level_by_time_seed_0}
\end{figure}

Figures \ref{fig:mimic_sepsis_fluid_xgboost_1_post_masd_avg_across_feat} and \ref{fig:mimic_sepsis_fluid_xgboost_1_w_masd_avg_across_feat} display the median $\masdtwmean{}$ and $\masdtdiffmean{}$ values for the sepsis-fluid dataset, respectively. In comparison to the sepsis datasets, the propensity model induces better covariate balance for the first four timesteps (on the testing dataset) instead of the first two (figure \ref{fig:mimic_sepsis_fluid_xgboost_1_w_masd_avg_across_feat}). However, using the restricted action space far from resolves the lack of covariate balance: balance worsens from timestep 4 onwards in comparison to the unweighted covariates.\\

The predictive performance of the propensity score also improved under the sepsis-fluid dataset (table \ref{tbl:propense_results}) which was almost certainly a result of the more uniform action density. Since the propensity model needs to be correctly specified to achieve covariate balance (corollary \ref{corol:weight_cov_bal_as_nec_cond}), this improvement in performance is a confounder for understanding the impact of the reduced action space. Subsequent analysis in section \ref{sec:training_validation_set} demonstrates that sample size does indeed impact covariate balance. As such, all that is unknown is the extent to which the relative balance between the sepsis and the sepsis-fluid datasets is mediated by the improvement in the propensity model performance.\\

\begin{figure}[!h]
	\centering
	\begin{subfigure}{0.49\linewidth}
		\includegraphics[width=\linewidth]{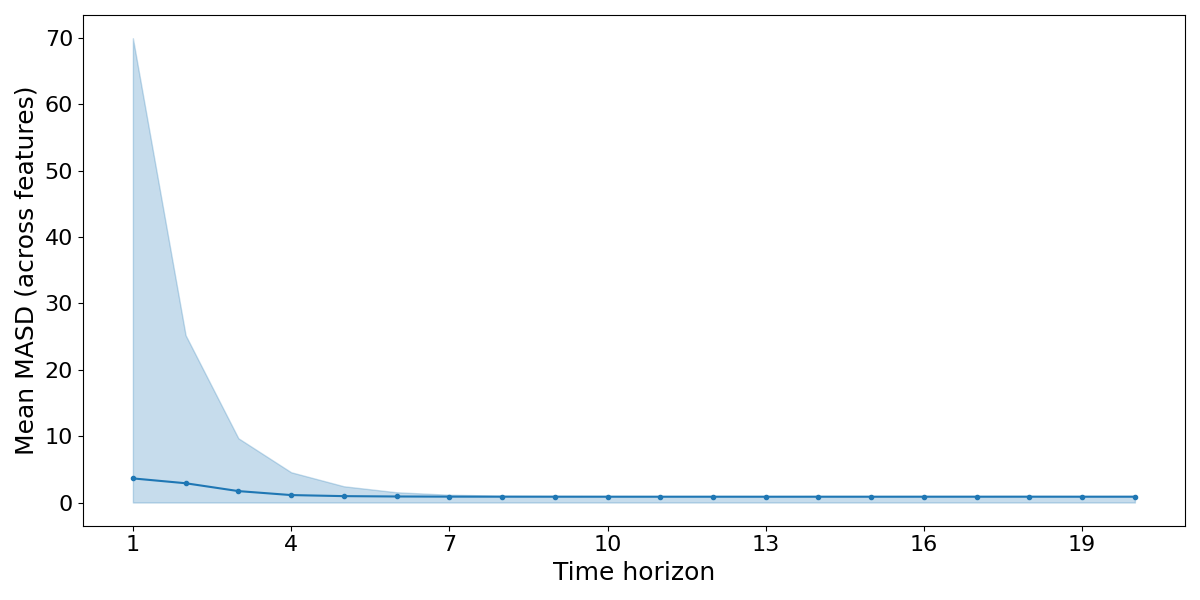}
		\caption{Training dataset}	
	\end{subfigure}
	\begin{subfigure}{0.49\linewidth}
		\includegraphics[width=\linewidth]{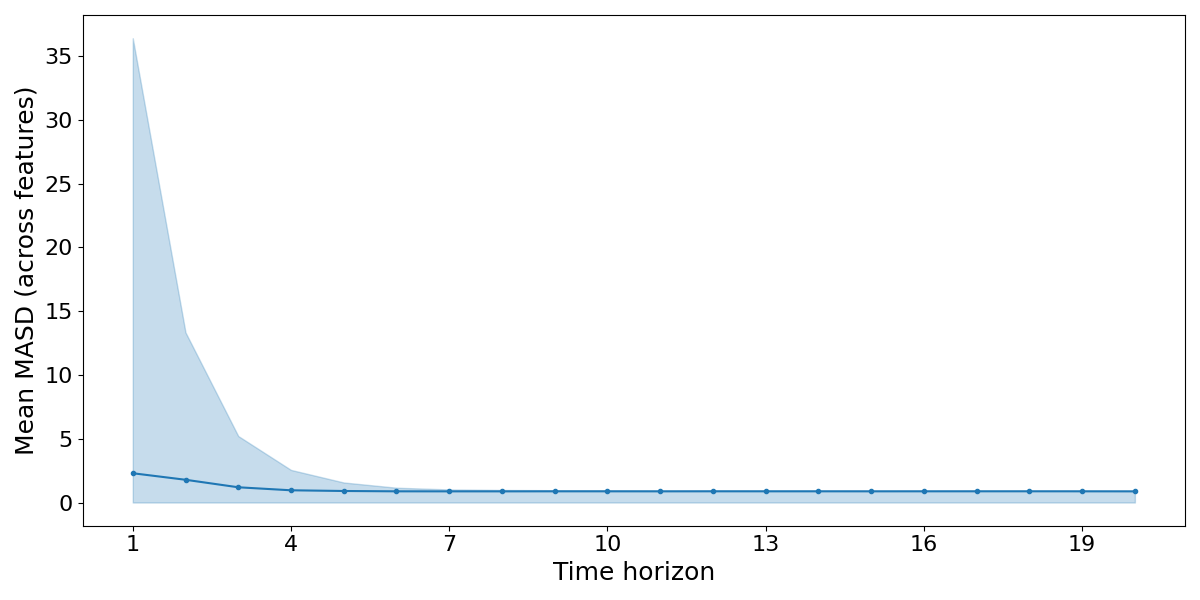}
		\caption{Testing dataset}
	\end{subfigure}
	\caption[The figure describes the median mean MASD value, across features, for the sepsis-fluid dataset, under the vanilla estimator. The x-axis displays the time horizon and the y-axis displays the median mean MASD value as the solid line with the min and max values described by the shaded region.]{The figure describes the median $\masdtwmean{}$ value, across features, for the sepsis-fluid dataset, under the vanilla estimator. The x-axis displays the time horizon and the y-axis displays the median $\masdtwmean{}$ value as the solid line with the min and max values described by the shaded region.}
	\label{fig:mimic_sepsis_fluid_xgboost_1_post_masd_avg_across_feat}
\end{figure}

\begin{figure}[!h]
	\centering
	\begin{subfigure}{0.49\linewidth}
		\includegraphics[width=\linewidth]{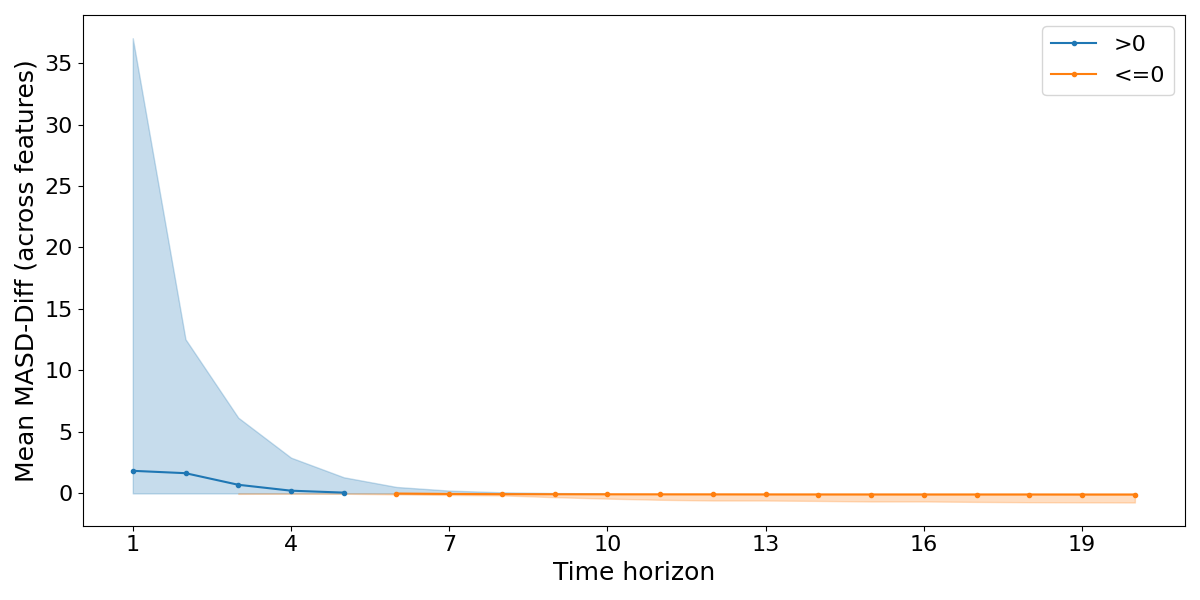}
		\caption{Training dataset}	
	\end{subfigure}
	\begin{subfigure}{0.49\linewidth}
		\includegraphics[width=\linewidth]{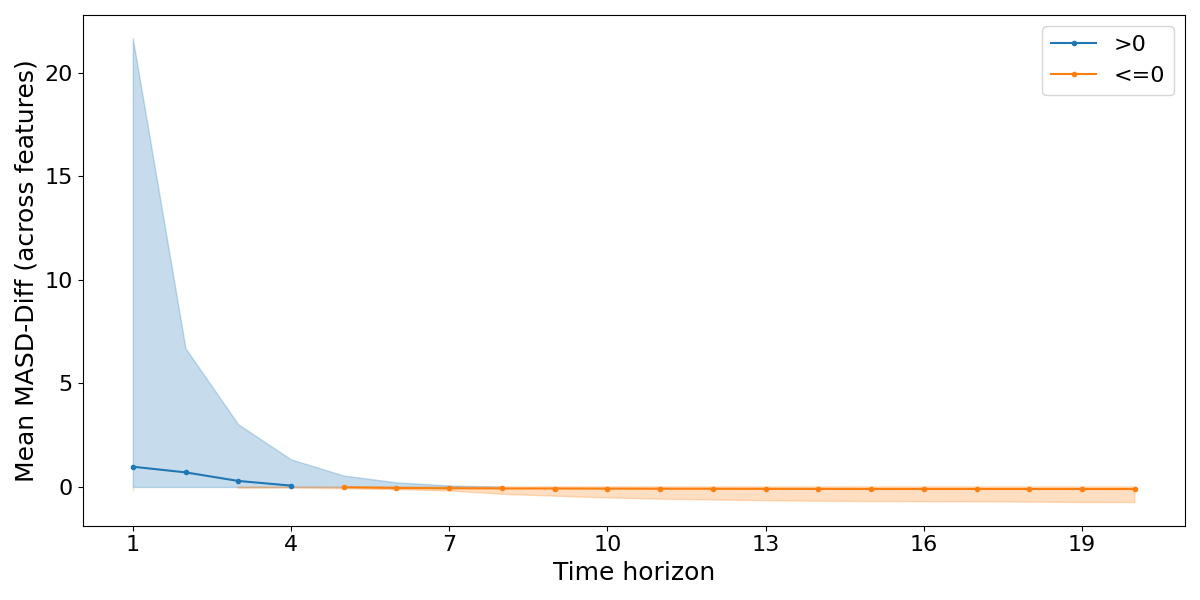}
		\caption{Testing dataset}
	\end{subfigure}
	\caption[The figure describes the median (across features) mean MASD-Diff values for the sepsis-fluid dataset, under the vanilla estimator. The shaded region describes the min and max mean MASD-Diff values by timepoint.]{The figure describes the median (across features) $\masdtdiffmean{}$ values for the sepsis-fluid dataset, under the vanilla estimator. The shaded region describes the min and max $\masdtdiffmean{}$ values by timepoint.}
	\label{fig:mimic_sepsis_fluid_xgboost_1_w_masd_avg_across_feat}
\end{figure}

\begin{figure}[!h]
	\centering
	\begin{subfigure}{0.49\linewidth}
		\includegraphics[width=\linewidth]{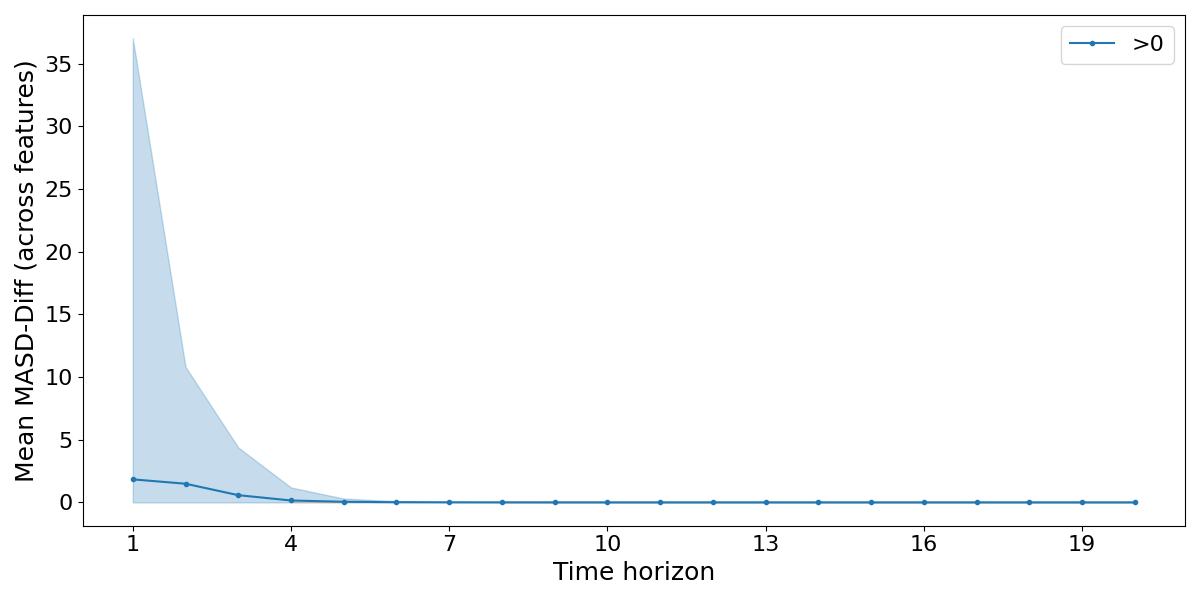}
		\caption{Training dataset}	
	\end{subfigure}
	\begin{subfigure}{0.49\linewidth}
		\includegraphics[width=\linewidth]{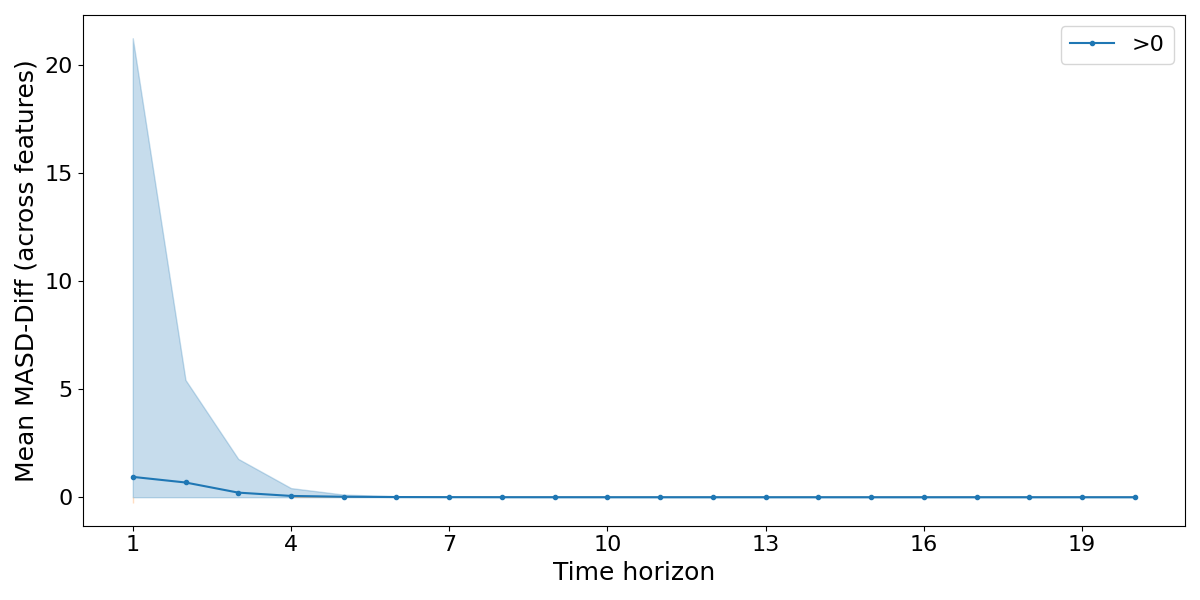}
		\caption{Testing dataset}
	\end{subfigure}
	\caption[The figure describes the median (across features) mean MASD-Diff values for the sepsis-fluid dataset, under the clipped estimator. The shaded region describes the min and max mean MASD-Diff values by timepoint.]{The figure describes the median (across features) $\masdtdiffmean{}$ values for the sepsis-fluid dataset, under the clipped estimator. The shaded region describes the min and max $\masdtdiffmean{}$ values by timepoint.}
	\label{fig:mimic_sepsis_fluid_xgboost_1_w_masd_avg_across_feat_clip}
\end{figure}

\begin{figure}[!h]
	\centering
	\begin{subfigure}{0.49\linewidth}
		\includegraphics[width=\linewidth]{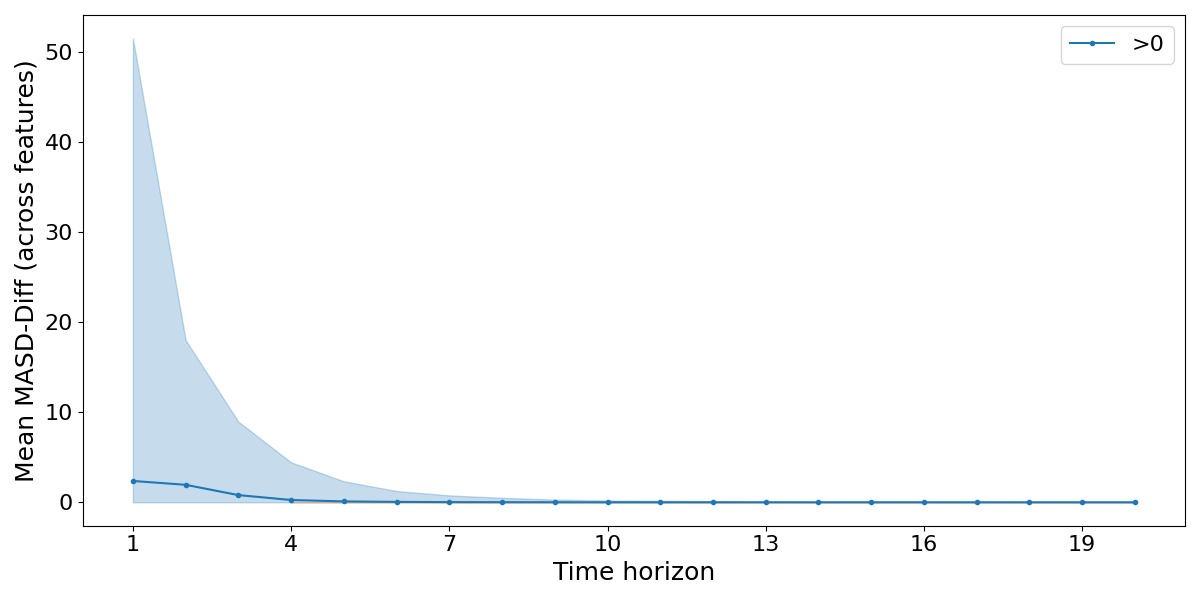}
		\caption{Training dataset}	
	\end{subfigure}
	\begin{subfigure}{0.49\linewidth}
		\includegraphics[width=\linewidth]{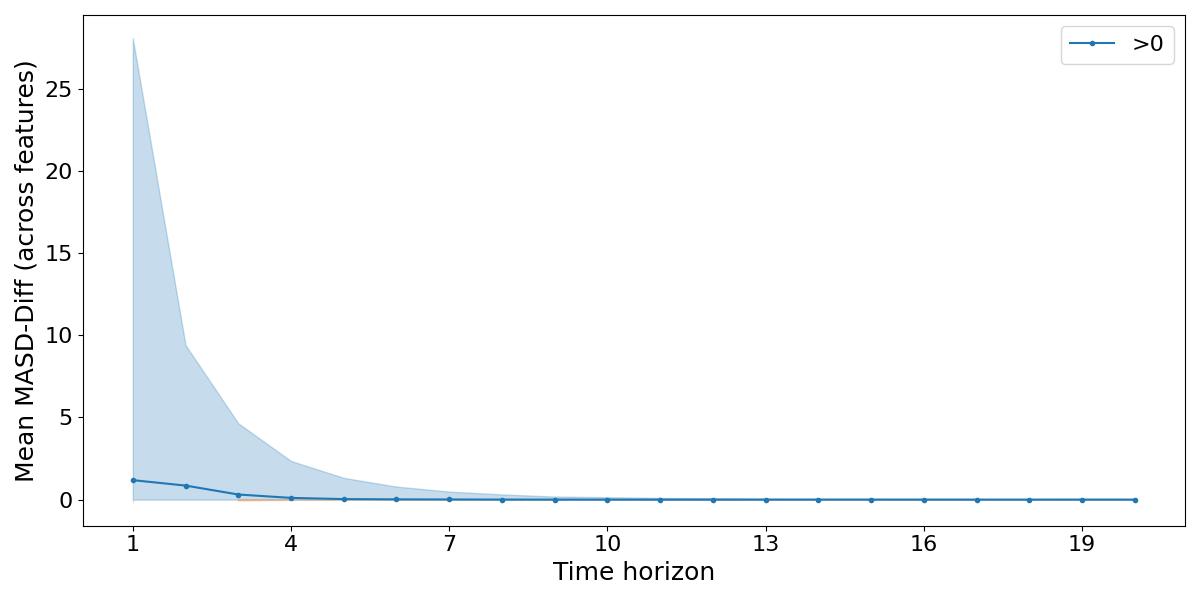}
		\caption{Testing dataset}
	\end{subfigure}
	\caption[The figure describes the median (across features) mean MASD-Diff values for the sepsis-fluid dataset, under the Hajek estimator. The shaded region describes the min and max mean MASD-Diff values by timepoint.]{The figure describes the median (across features) $\masdtdiffmean{}$ values for the sepsis-fluid dataset, under the Hajek estimator. The shaded region describes the min and max $\masdtdiffmean{}$ values by timepoint.}
	\label{fig:mimic_sepsis_fluid_xgboost_1_w_masd_avg_across_feat_hajek}
\end{figure}

\begin{landscape}
\begin{figure}[!h]
	\centering
	\begin{subfigure}{0.2\linewidth}
		\includegraphics[width=\linewidth]{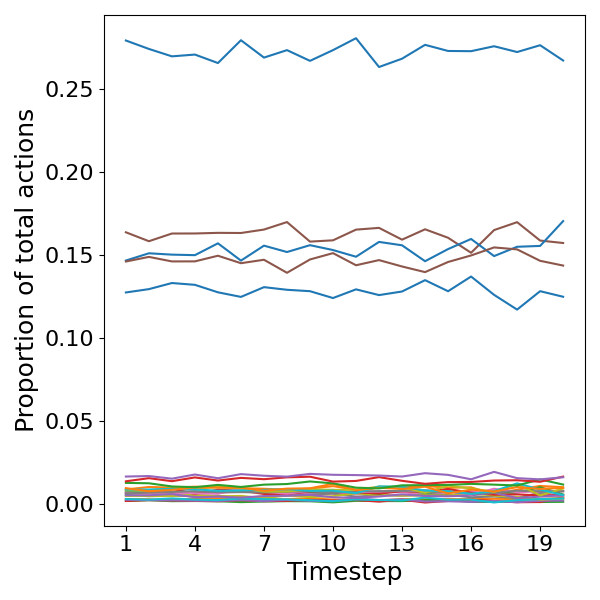}
		\caption{Training dataset (seed 0)}
	\end{subfigure}
	\begin{subfigure}{0.2\linewidth}
		\includegraphics[width=\linewidth]{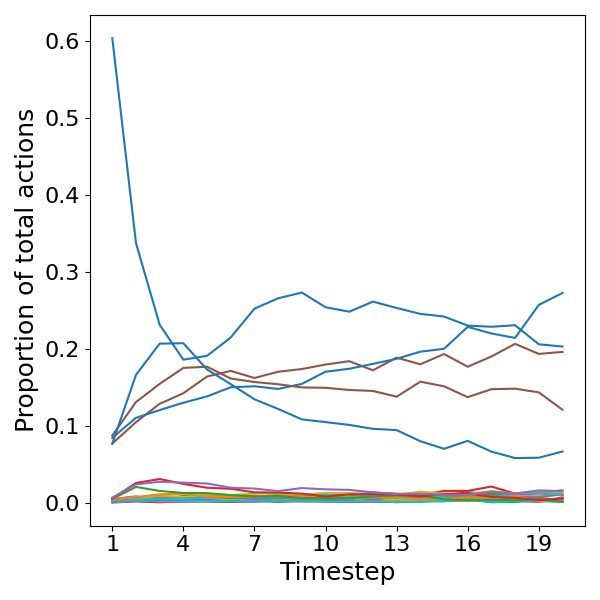}
		\caption{Testing dataset (seed 0)}
	\end{subfigure}
	\begin{subfigure}{0.2\linewidth}
		\includegraphics[width=\linewidth]{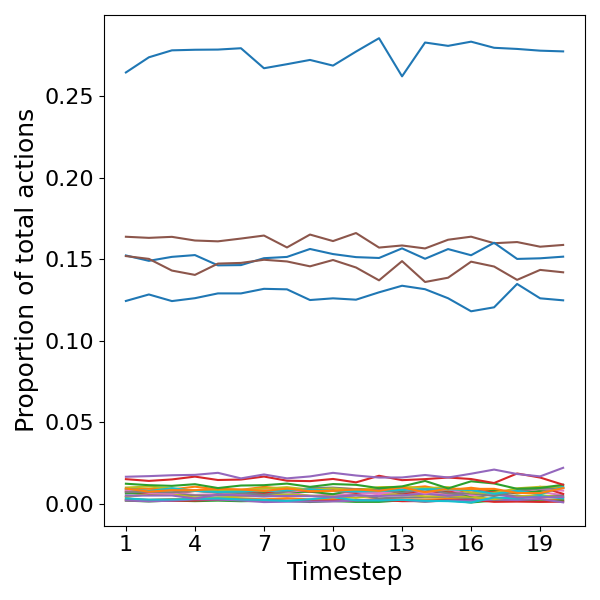}
		\caption{Training dataset (seed 1)}
	\end{subfigure}
	\begin{subfigure}{0.2\linewidth}
		\includegraphics[width=\linewidth]{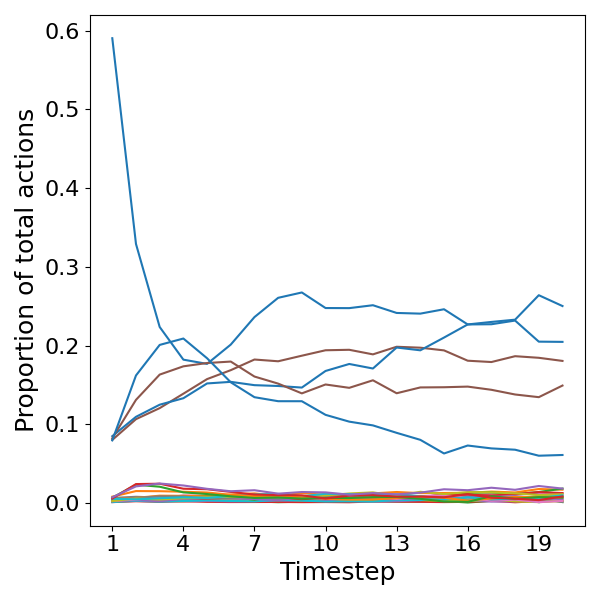}
		\caption{Testing dataset (seed 1)}
	\end{subfigure}
\caption{The figure describes the proportion of actions per timestep, across the training and testing sets, and split by seeds for the sepsis dataset.}
	\label{fig:prop_actions_ovr_time_sepsis}
\end{figure}
\begin{figure}[!h]
	\centering
	\begin{subfigure}{0.2\linewidth}
		\includegraphics[width=\linewidth]{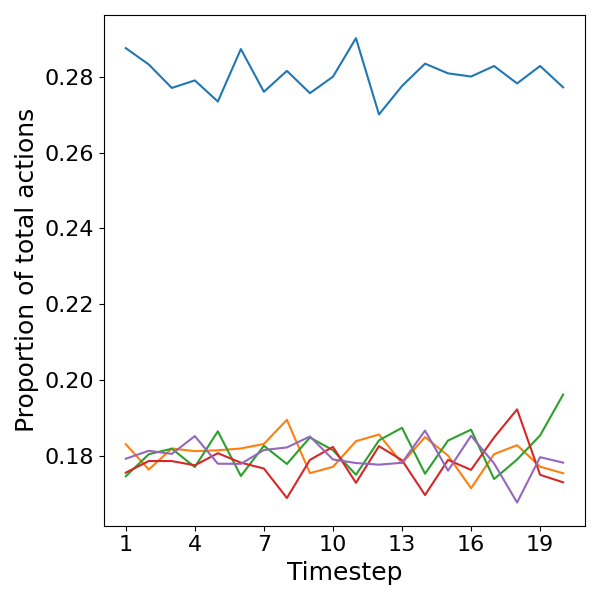}
		\caption{Training dataset (seed 0)}
	\end{subfigure}
	\begin{subfigure}{0.2\linewidth}
		\includegraphics[width=\linewidth]{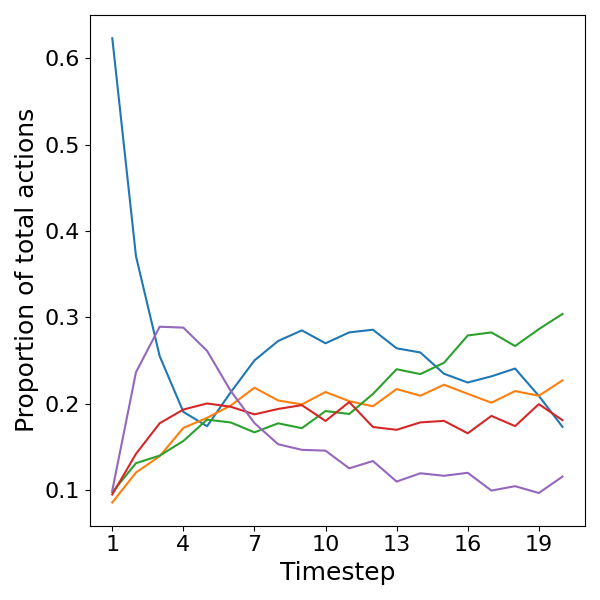}
		\caption{Testing dataset (seed 0)}
	\end{subfigure}
	\begin{subfigure}{0.2\linewidth}
		\includegraphics[width=\linewidth]{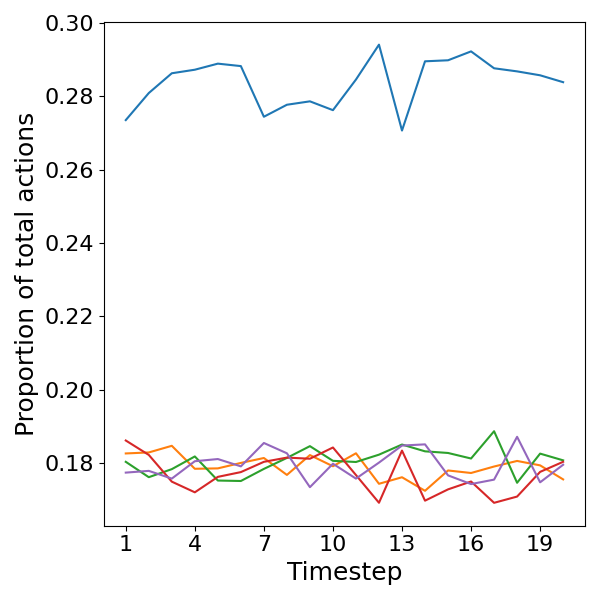}
		\caption{Training dataset (seed 1)}
	\end{subfigure}
	\begin{subfigure}{0.2\linewidth}
		\includegraphics[width=\linewidth]{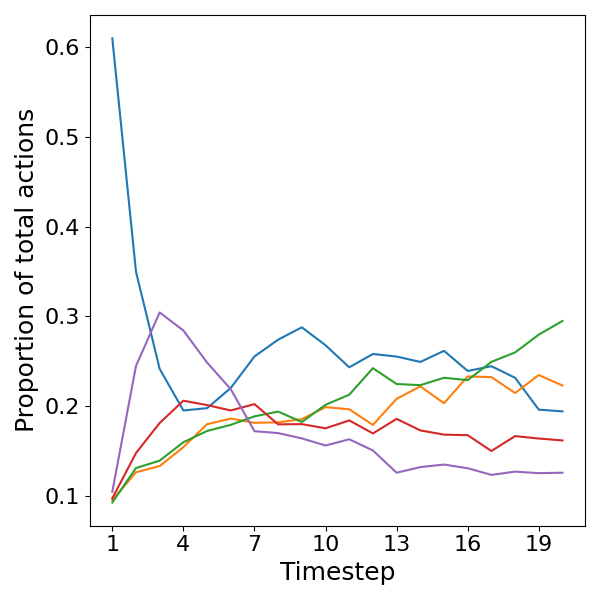}
		\caption{Testing dataset (seed 1)}
	\end{subfigure}
	\caption{The figure describes the proportion of actions per timestep, across the training and testing sets, and split by seeds for the sepsis-fluid dataset.}
	\label{fig:prop_actions_ovr_time_fluid}
\end{figure}
	
\end{landscape}

\FloatBarrier

\subsubsection{Reduced time horizon}\label{sec:reduced_time_horizon}
The results presented in sections \ref{sec:full_action_space} and \ref{sec:fluid_action_space} have suggested that covariate balance might be present over a shorter time horizon. Similar to constraining the action space, as discussed in section \ref{sec:fluid_action_space}, constraining the planning horizon might provide a reasonable approach to ensuring covariate balance is satisfied. This hypothesis was assessed by evaluating covariate balance for propensity models trained on horizons of length 2, 5, 8 and 13 timesteps. Horizons 5 and 13 were chosen since these defined the 5th and 50th percentile of trajectory lengths, calculated on the training set under seed 0. Horizon 2 was chosen as this defines the minimum multistep horizon. Horizon 8 was subsequently included based on the results of 5 and 13. Table \ref{tbl:trunc_traj_results} displays the number of timesteps where the median $\masdtdiffmean{}>0$ and appendix section \ref{sec:reduced_time_horizon_appendix} contains the complete median (across features) $\masdtwmean{}$ and $\masdtdiffmean{}$ plots.\\

%

\begin{table}[!h]
	\centering
	\begin{tabular}{cc|cc}
		\toprule
		Dataset & Horizon & \multicolumn{2}{c}{\# median $\masdtdiffmean{}>0$} \\
		 & & Training & Testing \\
		\midrule
		Sepsis & 2 & 2 & 2 \\
		Sepsis & 5 & 3 & 2 \\
		Sepsis & 8 & 3 & \textbf{3} \\
		Sepsis & 13 & 3 & 2 \\
		\midrule
		Sepsis-fluid & 2 & 2 & 2 \\
		Sepsis-fluid & 5 & 5 & \textbf{5} \\
		Sepsis-fluid & 8 & 5 & 4 \\
		Sepsis-fluid & 13 & 5 & 4 \\
		\bottomrule
	\end{tabular}
	\caption[The table displays the number of timesteps where the median mean MASD-Diff (across features) was greater than 0 for the truncated trajectories across, the sepsis and sepsis-fluid datasets.]{The table displays the number of timesteps where the median $\masdtdiffmean{}$ (across features) was greater than 0 for the truncated trajectories across, the sepsis and sepsis-fluid datasets.}
	\label{tbl:trunc_traj_results}
\end{table}

The results in table \ref{tbl:trunc_traj_results} demonstrate that, across the sepsis and sepsis-fluid datasets, reducing the time horizon \emph{can} improve the resulting covariate balance, albeit this is not uniformly the case. Of the restricted trajectory lengths assessed, covariate balance never worsened and improved on the sepsis dataset at horizon 8 and on the sepsis fluid dataset at horizon 5.\\

For all restricted time horizons and across both action spaces, the action dimension was effectively the same as under the original time horizon. There was a risk that certain actions were only present in later timesteps of the trajectories and thus by considering truncated trajectories, the action space would also effectively be reduced. Since it had already been demonstrated that a reduced action space dimension improved covariate balance (section \ref{sec:fluid_action_space}), reducing the action space along with the horizon risked confounding the analysis. However, as stated the action space dimension was identical except for the first timestep under the sepsis dataset where the effective action dimension was 24 rather than 25. Since the action dimension was effectively retained, any improvement in covariate balance could reasonably be attributed to truncating the trajectories.\\ 

Since covariate balance did not uniformly improve across all truncated horizons (or uniformly across all horizons below a given length), further assessment is required, to ensure these results were not statistical anomalies. It is perhaps counterintuitive that restricting the maximum time horizon would increase the horizon over which covariate balance is present. It is hypothesised however, that truncating the horizon improves the stability of training the propensity model. This is discussed further in section \ref{sec:prop_score_arch}.\\

\FloatBarrier

\section{Conclusion}

The core implication of the analysis is that existing applications of offline reinforcement learning for learning treatment recommendations are too high dimensional (both in time and action space) to conclude, using existing approaches for assessing diagnostic balance that, at least heuristically, conditional exchangeability holds and/or the propensity model is correctly specified. With respect to the sepsis domain assessed, figures \ref{fig:xgboost_4_w_masd_avg_across_feat} and \ref{fig:mimic_sepsis_fluid_xgboost_1_w_masd_avg_across_feat} combined suggest that covariate balance can be achieved for the first 2 timesteps and this can be improved by restricting the action space and potentially by truncating trajectories.\\

A second implication of the analysis is that, traditional approaches to reducing variance in importance sampling estimators are not valid for assessing covariate balance. As the variance of the vanilla estimator (and thus uncertainty) increases, both the Hajek and clipped estimator will predictably suggest that covariate balance has been achieved. Arguably, it is unsurprising that the clipped and Hajek estimators are insufficient for assessing covariate balance. Where sequential exchangeability does not hold/the propensity model is miss-specified, the resulting causal estimates are biased. As a result, the MASD metrics are a diagnostic check for bias. On the surface, checking for bias using biased estimators (without proper control) feels counterintuitive. An impactful line of future work would consider the point at which the bias under the clipped or Hajek estimators overwhelms the estimates. If this could be characterised, then far greater confidence could be placed in the presence of covariate balance at timesteps larger than 2, for the sepsis dataset assessed in this analysis (based on figures \ref{fig:xgboost_4_w_masd_avg_across_feat_clip} and \ref{fig:xgboost_4_w_masd_avg_across_feat_hajek}).\\

\subsection{Training/validation set}\label{sec:training_validation_set}

As discussed in section \ref{sec:experimental_protocol}, it is intuitive to consider the covariate balance over the training set to be a biased estimate of the expected covariate balance. On the surface, the general degradation in balance between the testing and training set supports this. Figure \ref{fig:xgboost_4_w_masd_avg_across_feat_subsamp} describes the median (across features) $\masdtdiffmean{}$ values, calculated on seed 0 of the training data but subsampled to be the size of the testing set, for the sepsis and sepsis-fluid datasets, respectively. Notably, in comparison to figure \ref{fig:xgboost_4_w_masd_avg_across_feat} for the sepsis dataset and \ref{fig:mimic_sepsis_fluid_xgboost_1_w_masd_avg_across_feat} for the sepsis-fluid datasets, the number of timesteps with a positive median covariate balance is equivalent to that observed on the testing sets. It is thus quite possible that the degradation in balance on the testing set is a function of the reduced sample size, rather than the training set metric being inherently biased/the bias being substantial.\\

\begin{figure}[!h]
	\centering
	\begin{subfigure}{0.49\linewidth}
			\includegraphics[width=\linewidth]{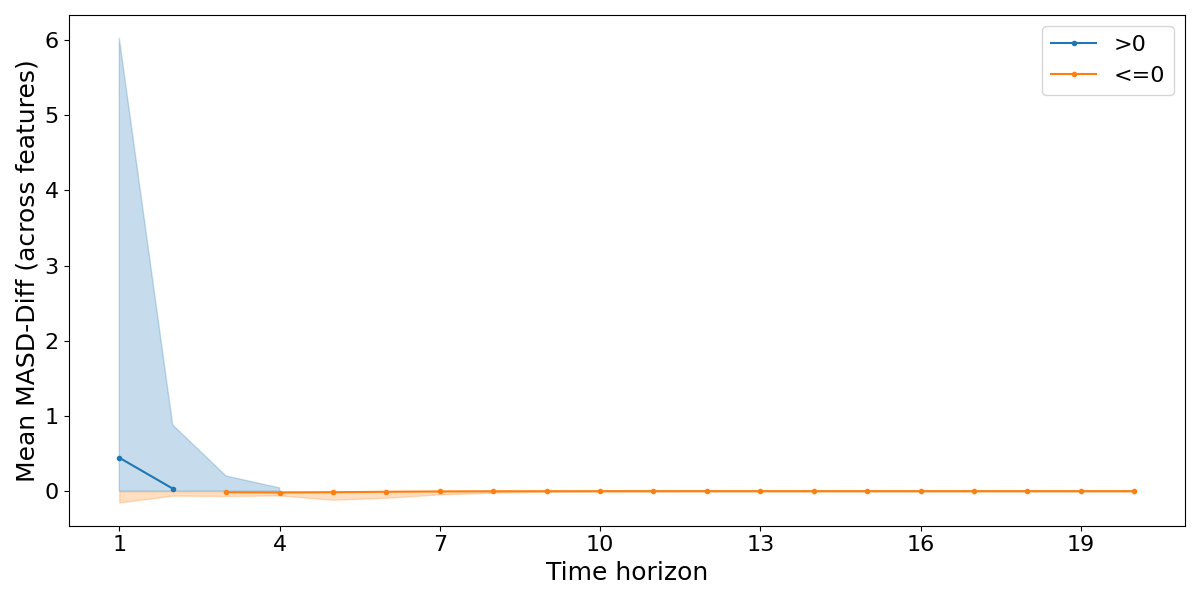}
			\caption{Sepsis dataset}	
	\end{subfigure}
	\begin{subfigure}{0.49\linewidth}
			\includegraphics[width=\linewidth]{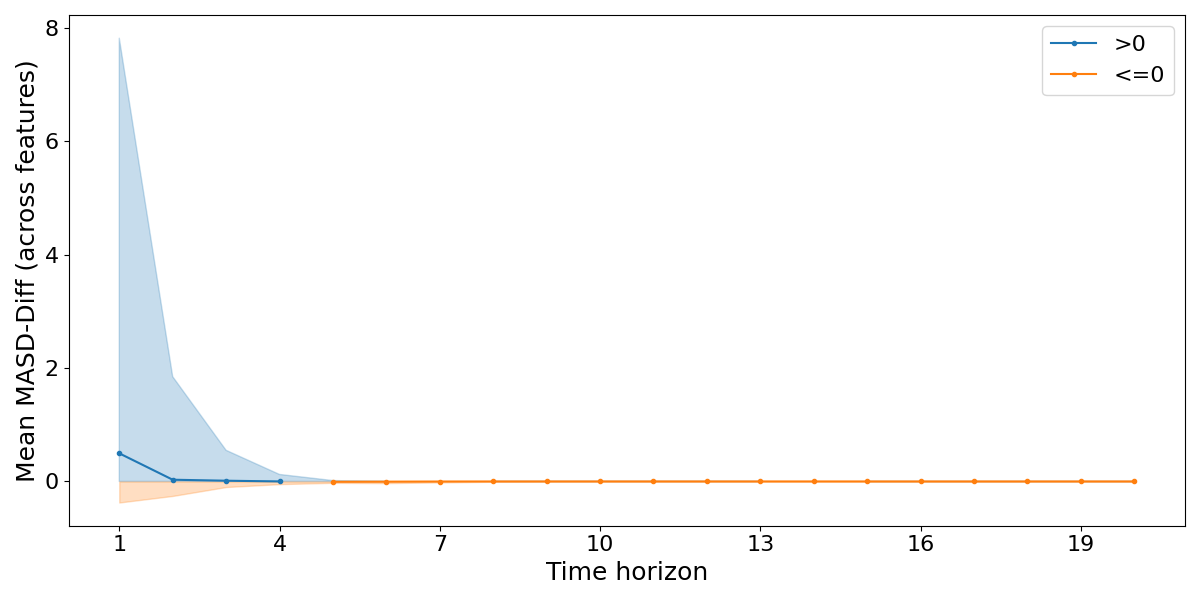}
			\caption{Sepsis-fluid dataset}	
	\end{subfigure}
	\caption[The figure describes the median (across features) mean MASD-Diff values for seed 0 of the training data, subsampled to the size of the testing set. The metric values are derived under the vanilla importance sampling estimator. The shaded region describes the min and max mean MASD-Diff values by timepoint.]{The figure describes the median (across features) $\masdtdiffmean{}$ values for seed 0 of the training data, subsampled to the size of the testing set. The metric values are derived under the vanilla importance sampling estimator. The shaded region describes the min and max $\masdtdiffmean{}$ values by timepoint.}
	\label{fig:xgboost_4_w_masd_avg_across_feat_subsamp}
\end{figure}

Without considering the use of covariate balance for model training (in the case of \cite{Imai2014}), it is quite possible that the bias (if any) from assessing covariate balance on the training set (or even better the joint training and testing set) is sufficiently small such that the increased sample size provides a more accurate view of the covariate balance achieved. Given this observation, an important line of future work would be to establish the behaviour of covariate balance to determine whether the training set can be used to assess balance.\\

\FloatBarrier

\subsection{Propensity score architecture}\label{sec:prop_score_arch}
The derivation of the weighted MASD metrics described in section \ref{sec:covariate_balance} utilised the independence structure of the MDP to simplify the time $t$ propensity estimate by equating $p(A_{0:t}|S_{0:t})=\prod_{t'=0}^{t}p(A_{t'}|S_{t'})$. This autoregressive approach is common for sequence modelling. An alternate modelling approach would be to jointly learn the density of the trajectory. The results presented in \ref{sec:reduced_time_horizon} potentially support the hypothesis that this joint modelling approach would benefit the development of propensity models. The results showed that by truncating the trajectory length, the number of timesteps where covariate balance was present (relative to the unweighted distribution) improved, in some scenarios. It is hypothesised that this was a result of the global structure of the trajectory being easier to capture.\\

\subsection{Additional future directions}
It was alluded to in section \ref{sec:background} and \ref{sec:cov_bal_implications} that: an alternate measure of model miss-specification could be considered through stabilised weights and; if the propensity model is estimated non-parametrically, then covariate balance could be viewed as a measure of hidden confounding. Neither of these claims were investigated in the analysis but might provide fruitful avenues of future research. In particular, hidden confounding is a primary source of bias within causal inference studies and thus any estimator directly measuring this assumption would be hugely impactful.\\


\acks{Joshua Spear is funded by Great Ormond Street Hospital Charity.}


\appendix

\section{Covariate balance}\label{sec:cov_bal_appendix}

\begin{theorem}[Theorem 3 of \cite{rosenbaum_central_1983}]\label{ther:rosenbaum_central} If treatment assignment is strongly ignorable, then it is strongly ignorable given any balancing score $b(x)$; that is $\forall x$,
\begin{align*}
	(R_{1},R_{0}) \indep A|S \textrm{ and } 0<P(A=1|S)<1
\end{align*}
implies, $\forall b(x)$:
\begin{align*}
	(R_{1},R_{0}) \indep A|b(S) \textrm{ and } 0<P(A=1|b(S))<1
\end{align*}
\end{theorem}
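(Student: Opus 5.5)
The statement to prove is Rosenbaum and Rubin's Theorem 3: strong ignorability given $S$ implies strong ignorability given any balancing score $b(S)$. The plan is to route both conditional quantities through the propensity score $e(S)=P(A=1|S)$. The key fact is that $e(S)$ is a function of $b(S)$. I would prove this first, as Theorem 2 of \cite{rosenbaum_central_1983}.

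\emph{Step 1: $e(S)$ is a function of $b(S)$.} Since $A \indep S \mid b(S)$, we have $P(A=1\mid S)=P(A=1\mid S,b(S))=P(A=1\mid b(S))$. So $e(S)=f(b(S))$ with $f(\beta)=P(A=1\mid b(S)=\beta)$.

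\emph{Step 2: positivity.} By the tower property,
\begin{align*}
P(A=1\mid b(S)) &= \mathbb{E}[P(A=1\mid S)\mid b(S)] = \mathbb{E}[e(S)\mid b(S)].
\end{align*}
By Step 1, $e(S)$ is $b(S)$-measurable, so this equals $e(S)$. It therefore lies strictly in $(0,1)$, because $0<e(S)<1$ by assumption.

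\emph{Step 3: conditional independence.} It suffices to show
\begin{align*}
P(A=1\mid R_{1},R_{0},b(S)) &= P(A=1\mid b(S)),
\end{align*}
because $A$ is binary. Condition on the finer $\sigma$-field generated by $(R_{1},R_{0},S)$ and apply the tower property. This gives $\mathbb{E}[P(A=1\mid R_{1},R_{0},S)\mid R_{1},R_{0},b(S)]$, using that $b(S)$ is a function of $S$. By strong ignorability the inner probability equals $e(S)$. By Step 1, $e(S)=f(b(S))$ is measurable with respect to the conditioning $\sigma$-field, so the outer expectation returns $e(S)$. Step 2 identified $e(S)$ with $P(A=1\mid b(S))$, which completes the argument.

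\emph{Main obstacle.} The only delicate point is measure-theoretic. The identities hold almost surely, and Step 1 requires choosing a version of $P(A=1\mid S)$ that is genuinely a measurable function of $b(S)$. I would handle this by working with regular conditional distributions and stating all equalities almost everywhere. The phrase ``$\forall x$'' in the statement should then be read as holding almost everywhere in $S$.
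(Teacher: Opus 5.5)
The paper gives no proof of this statement: it quotes Theorem 3 from \cite{rosenbaum_central_1983} and uses it as a black box in the appendix proofs of Corollaries \ref{corol:cov_bal_as_nec_cond} and \ref{corol:weight_cov_bal_as_nec_cond}. Your argument is correct, and it is essentially the original Rosenbaum--Rubin proof: use their Theorem 2 to get $e(S)=f(b(S))$, then compute $P(A=1\mid R_{1},R_{0},b(S))$ by the tower property.

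Two small simplifications. First, Step 2 is redundant, since Step 1 already gives $P(A=1\mid b(S))=e(S)$ almost surely. Second, the measure-theoretic concern in your last paragraph does not need regular conditional distributions. $P(A=1\mid b(S))$ is $\sigma(b(S))$-measurable by construction, so Doob--Dynkin supplies the measurable $f$, and it is itself a version of $P(A=1\mid S)$.
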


\begin{corollary}[Corollary \ref{corol:cov_bal_as_nec_cond} restated] If the conditions of theorem \ref{ther:rosenbaum_central} hold, then $\forall b(s)$:
	\begin{align*}
		(R_{1},R_{0}) \indep A|b(S) \textrm{ and } 0<P(A=1|b(S))<1 \implies A \indep S | b(S)
	\end{align*}
\end{corollary}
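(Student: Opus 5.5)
The plan is to read the antecedent $(R_{1},R_{0}) \indep A|b(S)$ in the sense in which the corollary is meant to support covariate balance as a diagnostic. In that sense, conditional ignorability given $b(S)$ is a property of the assignment mechanism and must hold for \emph{every} pre-treatment quantity that can play the role of a potential outcome. The key observation is that the covariates themselves are such a quantity. $S$ is measured before $A$, so it is unaffected by treatment, and we may take the pseudo-outcomes $R_{1}=R_{0}=S$. Under this reading the implication follows in one line: the antecedent, applied to these pseudo-outcomes, states exactly $S \indep A|b(S)$, which is the consequent.

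The steps, in order, are as follows. First, record that under the contextual bandit density $p(\tau)=p(S)p(A|S)p(R|A,S)$, the pair $(S,S)$ is a legitimate pair of potential outcomes. It is defined for both arms and coincides with the observed value regardless of $A$, so consistency holds trivially. Second, check that the hypotheses of Theorem \ref{ther:rosenbaum_central} hold for these pseudo-outcomes. Positivity $0<P(A=1|S)<1$ is assumed. $(S,S)\indep A|S$ holds trivially, because $S$ is degenerate given $S$. Third, apply the antecedent to the pseudo-outcomes with the fixed $b$, obtaining $(S,S)\indep A|b(S)$. Since $\sigma(S,S)=\sigma(S)$, this is equivalent to $A\indep S|b(S)$. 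The positivity part of the antecedent is not needed for the conclusion itself. It only ensures that the conditional distributions $p(S|A=a,b(S))$ used in the balance metric are well defined for both arms.

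As a cross-check, and as an alternative route when only the genuine outcomes $(R_{1},R_{0})$ are allowed, I would invoke Theorem 2 of \cite{rosenbaum_central_1983}: $b$ is a balancing score iff $e(S)=f(b(S))$ for some $f$. The goal would then be to show $P(A=1|b(S))=e(S)$ almost surely. By iterated expectations, $P(A=1|R_{1},R_{0},b(S))=\mathbb{E}[e(S)|R_{1},R_{0},b(S)]$, and the antecedent makes this equal to $\mathbb{E}[e(S)|b(S)]$. Concluding $e(S)=\mathbb{E}[e(S)|b(S)]$ additionally requires $e(S)$ to be measurable with respect to $\sigma(R_{1},R_{0},b(S))$. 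This holds, for example, when $S$ is recoverable from the potential outcomes together with $b(S)$.

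I expect this last point to be the main obstacle. With arbitrary genuine outcomes the implication can fail: if $(R_{1},R_{0})$ is independent of everything, any $b$ satisfies the antecedent. The proof therefore hinges on stating the quantification over outcomes explicitly, so that covariates count as admissible pseudo-outcomes, or on adding the measurability condition above. I would state this assumption explicitly at the start of the proof, and then present the pseudo-outcome argument as the main proof.
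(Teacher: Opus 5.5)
Your counterexample is right, and it settles the matter: the corollary as stated is false. With $(R_1,R_0)$ independent of $(S,A)$, $e(S)$ nonconstant with $0<e(S)<1$, and $b$ constant, every hypothesis holds while $A\indep S$ fails. Nor is this pathological. Take $S=(S_1,S_2)$ with independent components, outcomes depending only on $S_1$, and $e$ depending nontrivially only on $S_2$. Then $b(S)=S_1$ gives $(R_1,R_0)\indep A|b(S)$ without balance; this is the phenomenon behind prognostic scores.

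The paper's proof does not get around this. It argues that, by Theorem \ref{ther:rosenbaum_central}, if $R^{\pearldo{}(A)}\indep A|b(S)$ then ``$b(S)$ must be a balancing score (by assumption of the theorem)''. That is, it treats the theorem's hypothesis on $b$ as necessary for the theorem's conclusion. This is the converse of Rosenbaum and Rubin's Theorem 3, and your example refutes it. If $b$ is instead meant to range only over balancing scores, the corollary is vacuous. So neither argument proves the literal statement, and none can. Your proposal is the more accurate account.

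What I would change is presenting the pseudo-outcome reading as ``the main proof''. It is not a reading of the statement, because in the paper $(R_1,R_0)$ are the fixed potential rewards of the bandit $p(S)p(A|S)p(R|A,S)$. Once the antecedent is quantified over all pre-treatment quantities, its instance $(S,S)$ is literally the consequent. The argument then reduces to ``balance implies balance'', and your verification of the Theorem \ref{ther:rosenbaum_central} hypotheses for $(S,S)$ is never used. In that respect it shares the paper's flaw of obtaining balance by assumption.

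Your second route is the substantive one, and it is correct; the main route is just its special case $R_1=R_0=S$. You can also sharpen it, because the measurability condition is necessary as well as sufficient. If $b$ is a balancing score, then $e(S)=P(A=1|b(S))$ is $\sigma(b(S))$-measurable, hence $\sigma(R_1,R_0,b(S))$-measurable. Thus, under strong ignorability given $S$ and $(R_1,R_0)\indep A|b(S)$, one has $A\indep S|b(S)$ if and only if $e(S)$ is a.s.\ a function of $(R_1,R_0,b(S))$. State the counterexample together with this characterization as your result.
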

\begin{proof} Begin with Theorem \ref{ther:rosenbaum_central}, then for $R^{\pearldo{}(A)}\indep A|b(S)$, $b(S)$ must be a balancing score (by assumption of the theorem). 
\end{proof}

\begin{corollary}[Corollary \ref{corol:weight_cov_bal_as_nec_cond} restated] If the conditions of theorem \ref{ther:rosenbaum_central} hold, and:
\begin{equation}
	\mathbb{E}\Bigg[\frac{\mathbbm{1}(A=1)S}{\hat{e}(S)}-\frac{\mathbbm{1}(A=0)S}{1-\hat{e}(S)}\Bigg] = 0
\end{equation}
then, $\forall \hat{e}(s)$:
\begin{align*}
	(R_{1},R_{0}) \indep A|\hat{e}(S) \textrm{ and } 0<P(A=1|\hat{e}(S))<1
\end{align*}
\end{corollary}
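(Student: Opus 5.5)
Under the hypotheses of Theorem \ref{ther:rosenbaum_central}, the plan is to show that the weighted balance condition (equation \ref{equ:weighted_cov_bal}) forces $\hat{e}(S)$ to behave as a balancing score. Theorem \ref{ther:rosenbaum_central} then transfers strong ignorability from $S$ to $\hat{e}(S)$. In outline: (i) rewrite the expectation in terms of the true propensity $e(S)$; (ii) deduce that $\hat{e}$ agrees with $e$ in the relevant sense, so that $A \indep S \mid \hat{e}(S)$; (iii) invoke Theorem \ref{ther:rosenbaum_central} with $b = \hat{e}$.

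For step (i), condition on $S$ and use $\mathbb{E}[\mathbbm{1}(A=1)\mid S] = e(S)$, exactly as in the time-dependent derivation of Section \ref{sec:covariate_balance}. This gives
\begin{align*}
	\mathbb{E}\Bigg[S\Bigg(\frac{e(S)}{\hat{e}(S)} - \frac{1-e(S)}{1-\hat{e}(S)}\Bigg)\Bigg] = \mathbb{E}\Bigg[S\,\frac{e(S)-\hat{e}(S)}{\hat{e}(S)(1-\hat{e}(S))}\Bigg] = 0 .
\end{align*}
When $\hat{e}=e$ this vanishes identically, which is the converse direction already shown in the paper.

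Step (ii) is the substantive one. Following Section \ref{sec:cov_bal_implications}, I would read condition (\ref{equ:weighted_cov_bal}) as holding for every measurable function $g(S)$ in place of $S$, i.e. balance of the entire measure rather than only the first moment. Then choose
\begin{align*}
	g(S) = \big(e(S)-\hat{e}(S)\big)\,\hat{e}(S)\big(1-\hat{e}(S)\big),
\end{align*}
which yields $\mathbb{E}[(e(S)-\hat{e}(S))^2] = 0$, so $\hat{e}=e$ almost surely. Since $e$ is a balancing score by Rosenbaum and Rubin (Theorem 1 of \cite{rosenbaum_central_1983}), and $\hat{e}$ equals $e$ almost surely, $\hat{e}$ is a balancing score. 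Positivity transfers as well: $P(A=1\mid \hat{e}(S)) = \hat{e}(S) = e(S) \in (0,1)$.

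Step (iii) is then immediate. Apply Theorem \ref{ther:rosenbaum_central} with $b=\hat{e}$ to obtain $(R_1,R_0)\indep A \mid \hat{e}(S)$ together with overlap.

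The main obstacle is step (ii). With only the literal first-moment condition on $S$, one scalar or vector equation cannot pin down $\hat{e}=e$, and the statement would be false as literally written. I would therefore make explicit that the weighted balance condition is required for all test functions $g$, or equivalently for the full distribution of $S$. This matches the paper's remark that the weighted metric asserts the \emph{entire measure} $\hat{e}$ equals $e$. I would also need $0<\hat{e}<1$ so the weights are finite.
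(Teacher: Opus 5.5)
Your outline has the same skeleton as the paper's proof: identify $\hat{e}$ with $e$, then apply Theorem~\ref{ther:rosenbaum_central}. The identification step is handled differently, and your version is the one that actually works.

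The paper evaluates the balance expectation only at $\hat{e}=e$ and shows that it vanishes. It then writes $\hat{e}(S)=e(S)\iff\mathbb{E}[\cdots]=0$. The direction the corollary needs, balance implies $\hat{e}=e$, is asserted and never derived. Your step~(i) rewrites the expectation as $\mathbb{E}[S(e-\hat{e})/(\hat{e}(1-\hat{e}))]$, which shows that the literal hypothesis is a single moment condition.

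Your suspicion that the statement then fails is correct. Let $S$ be uniform on $\{-1,0,1\}$ with $e(\pm1)=0.8$, $e(0)=0.2$, $R_1=R_0=\mathbbm{1}(S=0)$, and $\hat{e}\equiv c$ for any constant $c\in(0,1)$. Then $\mathbb{E}[S]=\mathbb{E}[Se(S)]=0$, so the balance condition holds, and strong ignorability given $S$ also holds. But conditioning on the constant $\hat{e}(S)$ is vacuous, and $P(A=1\mid R_1=1)=0.2\neq 0.8=P(A=1\mid R_1=0)$. Hence $(R_1,R_0)$ is not independent of $A$ given $\hat{e}(S)$.

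Your route costs a stronger hypothesis: balance for every bounded measurable $g(S)$, i.e.\ equality of the two inverse-weighted measures of $S$, together with $0<\hat{e}<1$. This matches the ``entire measure'' reading the paper gestures at in Section~\ref{sec:covariate_balance}, and the general-$g$ form it uses in the time-dependent derivation. In return you get a rigorous argument. The test function $g=(e-\hat{e})\hat{e}(1-\hat{e})$ gives $\mathbb{E}[(e-\hat{e})^2]=0$. The balancing-score property, positivity and Theorem~\ref{ther:rosenbaum_central} then transfer exactly as you describe.

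You should state the strengthened hypothesis explicitly as part of the corollary, not present it as a reading of the original.
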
 
\begin{proof} Firstly, by theorem \ref{ther:rosenbaum_central}, $\forall s, e(s)$:
\begin{align*}
	(R_{1},R_{0}) \indep A|S & \textrm{ and } 0<P(A=1|S)<1\\ 
	&\implies (R_{1},R_{0}) \indep A|e(S) \textrm{ and } 0<P(A=1|e(S))<1
\end{align*}
Secondly, observe that, for the true propensity score, $e(S)$:
\begin{align}
	\mathbb{E}\Bigg[&\frac{\mathbbm{1}(A=1)S}{e(S)}-\frac{\mathbbm{1}(A=0)S}{1-e(S)}\Bigg] = \mathbb{E}\Bigg[\frac{\mathbbm{1}(A=1)S}{p(A|S)}-\frac{\mathbbm{1}(A=0)S}{p(A|S)}\Bigg] \\
	=&\int\frac{\mathbbm{1}(A=1)S}{p(A|S)}p(S,A)dsa -\int\frac{\mathbbm{1}(A=0)S}{p(A|S)}p(S,A)dsa \nonumber \\
	=&\int\frac{S}{p(A=1|S)}p(A=1|S)p(S)dsa -\int\frac{S}{p(A=0|S)}p(A=0|S)p(S)dsa \nonumber \\
	=&\mathbb{E}[S] - \mathbb{E}[S] = 0 \nonumber 
\end{align}
As such:
\begin{align*}
	\hat{e}(S) = e(S) \iff \mathbb{E}\Bigg[&\frac{\mathbbm{1}(A=1)S}{\hat{e}(S)}-\frac{\mathbbm{1}(A=0)S}{1-\hat{e}(S)}\Bigg] = 0
\end{align*}
Combining the two statements, if theorem \ref{ther:rosenbaum_central} and equation \ref{equ:weighted_cov_bal} hold, $\implies \forall \hat{e}(s)$:
\begin{align*}
	(R_{1},R_{0}) \indep A|\hat{e}(S) \textrm{ and } 0<P(A=1|\hat{e}(S))<1
\end{align*}
\end{proof}

\section{Stabilised weights}\label{sec:stabilised_weights}
Within the causal inference literature, the use of stabilised weights of the form $\frac{g(A)}{p(A|L)}$ (in the time independent setting) are popular (\cite{hernan_causal_2023}). In particular, the function $g$ is often defined to be the marginal density of treatment, $p(A)$. The proceeding demonstrates the balancing quality of these weights under non-dynamic policies (section \ref{sec:stabilised_non_dynamic}) and the asymptotic bias of stabilised weights under dynamic policies (section \ref{sec:stabilised_dynamic}).\\

\subsection{Balancing quality of stabilised weights for non-dynamic policies}\label{sec:stabilised_non_dynamic}
When using stabilised weights, the validity of the propensity models used for the analysis is asserted through the observation that:
\begin{align}
	\mathbb{E}_{S,A}\Bigg[\frac{p(A)}{p(A|S)}\Bigg] =& \int_{\mathcal{A}\times\mathcal{S}} \frac{p(A)}{p(A|S)} dP(A,S) \label{equ:stablised_time_indep_balance_start}\\
	=& \int_{\mathcal{A}\times\mathcal{S}} \frac{p(A)}{p(A|S)} p(A|S)p(S)d\mu(A,S) \\
	=& \int_{\mathcal{A}}p(A)\Bigg(\int_{\mathcal{S}}p(S)\mu(S)\Bigg)\mu(A)\\
	=& \int_{\mathcal{A}}p(A)\mu(A) = 1 \label{equ:stablised_time_indep_balance_end}
\end{align}
where $\mathbb{E}_{L,A}[p(A)p(A|L)^{-1}]\approx \frac{1}{n}\sum_{i=1}^{n}p(A)p(A|L)^{-1}$ and the second to last line follows from Fubini's theorem as $\mu(A,L)$ is the product measure over Lebesgue measures, $\mu(A)$ and $\mu(L)$. Assuming $S_{t}\rightarrow A_{t}$ and arbitrary time dependent confounding, as demonstrated for two timesteps in figure \ref{fig:arb_confound}, the same balancing quality is demonstrated for the time dependent setting in equations \ref{equ:stablised_time_dep_balance_start} to \ref{equ:stablised_time_dep_balance_end}

\begin{figure}[!h]

    \centering
    \begin{tikzpicture}[
        roundnode/.style={circle, draw=black,  minimum size=6mm, inner sep=0},
        shaderoundnode/.style={circle, draw=black,  minimum size=6mm, inner sep=0, fill=black!20},
        squarenode/.style={square, draw=black,  minimum size=6mm, inner sep=0},
    ]
		\node[roundnode, align=center] (l0) at (0,0) {$L_{0}$};
		\node[roundnode, align=center] (l1) at (1.5,0) {$L_{1}$};
		\node[roundnode, align=center] (l2) at (3,0) {$L_{2}$};
		
		\node[roundnode, align=center] (a0) at (0.5,-1) {$A_{0}$};
		\node[roundnode, align=center] (a1) at (2,-1) {$A_{1}$};
		\node[roundnode, align=center] (a2) at (3.5,-1) {$A_{2}$};
        

		\draw[-latex] (l0) -- (l1);
		\draw[-latex] (l0) to[bend left=30] (l2);
		\draw[-latex] (l0) -- (a0);
		\draw[-latex] (l0) -- (a1);
		\draw[-latex] (l0) -- (a2);
		\draw[-latex] (l1) -- (l2);
		\draw[-latex] (l1) -- (a1);
		\draw[-latex] (l1) -- (a2);
		\draw[-latex] (l2) -- (a2);
		
		\draw[-latex] (a0) -- (a1);
		\draw[-latex] (a0) to[bend right=30] (a2);
		\draw[-latex] (a1) -- (a2);
        
    \end{tikzpicture}
    \caption{DAG with arbitrary confounding }
    \label{fig:arb_confound}
\end{figure}

\begin{align}
	&\mathbb{E}_{L,A}\Bigg[\prod_{t=0}^{T-1}\frac{p(A_{t}|\bar{A}_{t-1})}{p(A_{t}|\bar{A}_{t-1},\bar{L}_{t})}\Bigg] = \int_{\bar{\mathcal{A}}\times\bar{\mathcal{L}}} \prod_{t=0}^{T-1}\frac{p(A_{t}|\bar{A}_{t-1})}{p(A_{t}|\bar{A}_{t-1},\bar{L}_{t})} dP(\bar{A}_{T-1},\bar{L}_{T-1})\label{equ:stablised_time_dep_balance_start}\\
	&=\int_{\bar{\mathcal{A}}\times\bar{\mathcal{L}}} \prod_{t=0}^{T-1}p(A_{t}|\bar{A}_{t-1})p(L_{0})\prod_{t=1}^{T-1}p(L_{t}|\bar{A}_{t-1},\bar{L}_{t-1})d\mu(\bar{A}_{T-1},\bar{L}_{T-1}) \\
	&=\int_{\bar{\mathcal{A}}\times\bar{\mathcal{L}}} p(L_{0})p(A_{0})\prod_{t=1}^{T-1}p(A_{t}|\bar{A}_{t-1})p(L_{t}|\bar{A}_{t-1},\bar{L}_{t-1})d\mu(\bar{A}_{T-1},\bar{L}_{T-1})\\
	&=\int_{\mathcal{A}_{0}\times\mathcal{L}_{0}} p(L_{0})p(A_{0})\Bigg(\prod_{t=1}^{T-1}\int_{\mathcal{A}_{t}\times\mathcal{L}_{t}|\bar{\mathcal{A}}_{t-1},\bar{\mathcal{L}}_{t-1}}p(A_{t}|\bar{A}_{t-1})p(L_{t}|\bar{A}_{t-1},\bar{L}_{t-1})d\mu(A_{t},L_{t})\Bigg)d\mu(A_{0},L_{0})\label{equ:stablised_time_dep_balance_end} \\
	&=1
\end{align}
noting that:
\begin{align}
	&\int_{\mathcal{A}_{t}\times\mathcal{L}_{t}|\bar{\mathcal{A}}_{t-1},\bar{\mathcal{L}}_{t-1}}p(A_{t}|\bar{A}_{t-1})p(L_{t}|\bar{A}_{t-1},\bar{L}_{t-1})d\mu(A_{t},L_{t})\\
	&=\int_{\mathcal{A}_{t}|\bar{\mathcal{A}}_{t-1},\bar{\mathcal{L}}_{t-1}}p(A_{t}|\bar{A}_{t-1})\Bigg(\int_{\mathcal{L}_{t}|\bar{\mathcal{A}}_{t-1},\bar{\mathcal{L}}_{t-1}}p(L_{t}|\bar{A}_{t-1},\bar{L}_{t-1})d\mu(L_{t})\Bigg)d\mu(A_{t})\label{equ:int_fubini}
\end{align}
For equations \ref{equ:stablised_time_dep_balance_end} and \ref{equ:int_fubini}, Fubini's theorem has been applied where, $\mathcal{A}_{t}|\bar{\mathcal{A}}_{t-1},\bar{\mathcal{L}}_{t-1}$ is the sub-$\sigma$-algebra induced by the natural filtration on $\bar{\mathcal{A}}_{t-1},\bar{\mathcal{L}}_{t-1}$. However, $\mathcal{A}_{t}|\bar{\mathcal{A}}_{t-1},\bar{\mathcal{L}}_{t-1} = \mathcal{A}_{0} = \mathcal{A}$ is always assumed.\\

\subsection{Asymptotic bias of stabilised weights under dynamic policies}\label{sec:stabilised_dynamic}
Stabilised weights have not been considered in the analysis presented since, when using dynamic policies, similar estimators are not yet proven to be unbiased. This is most intuitively seen in the time dependent setting. Using the notation $\bar{B}_{t-1} = \{B_{0}, ..., B_{t-2}\}$, and assuming time $t$ confounders, $L_{t}$ always cause the time $t$ actions, $A_{t}$ but allowing for arbitrary time dependent confounding, as shown in figure \ref{fig:arb_confound} (for $t=2$), consider the proceeding estimator for the dynamic policy, $\pi_{e}: \mathcal{S} \rightarrow \mathcal{A}$:
\begin{equation}\label{equ:time_dep_stab_weight_dynamic}
    \frac{1}{n}\sum_{i=1}^{n}y_{i}\mathbbm{1}\big(\bar{a}_{i,T-1}=\pi_{e}(\bar{s}_{i,T-1})\big)\prod_{t=0}^{T-1}\frac{p(a_{i,t}|\bar{a}_{i,t-1})}{p(a_{i,t}|\bar{a}_{i,t-1},\bar{s}_{i,t})}
\end{equation}
Equation \ref{equ:time_dep_stab_weight_dynamic} defines the standard time dependent stablised weight estimator (\cite{hernan_causal_2023}) but conditioning on the set of actions defined by the deterministic policy, $\pi_{e}$. Naively, this is intended to be an estimator for $\mathbb{E}[Y|\pearldo{}(\bar{A}=\pi_{e}(\bar{S}))]$. Examining the bias however:
\begin{align*}
    &\mathbb{E}\Bigg[\frac{1}{n}\sum_{i=1}^{n}y_{i}\mathbbm{1}\big(\bar{a}_{i,T-1}=\pi_{e}(\bar{s}_{i,T-1})\big)\prod_{t=0}^{T-1}\frac{p(a_{i,t}|\bar{a}_{i,t-1})}{p(a_{i,t}|\bar{a}_{i,t-1},\bar{s}_{i,t})}\Bigg] \\
    &= \mathbb{E}\Bigg[Y\mathbbm{1}\big(\bar{A}_{T-1}=\pi_{e}(\bar{S}_{T-1})\big)\prod_{t=0}^{T-1}\frac{p(A_{t}|\bar{A}_{t-1})}{p(A_{t}|\bar{A}_{t-1},\bar{S}_{t})}\Bigg]\\
    &=\int_{\mathcal{Y}\times\bar{\mathcal{A}}\times\bar{\mathcal{S}}} Y\mathbbm{1}\big(\bar{A}_{T-1}=\pi_{e}(\bar{S}_{T-1})\big)\prod_{t=0}^{T-1}\frac{p(A_{t}|\bar{A}_{t-1})}{p(A_{t}|\bar{A}_{t-1},\bar{S}_{t})} dP(Y,\bar{A},\bar{S})\\
    &=\int_{\mathcal{Y}\times\bar{\mathcal{A}}\times\bar{\mathcal{S}}} Y\mathbbm{1}\big(\bar{A}_{T-1}=\pi_{e}(\bar{S}_{T-1})\big)\Bigg(\prod_{t=0}^{T-1}p(A_{t}|\bar{A}_{t-1})\Bigg)\Bigg(p(S_{0})\prod_{t=1}^{T-1}p(S_{t+1}|\bar{S}_{t},\bar{A}_{t})\Bigg)d\mu(Y,\bar{A},\bar{S})\\
    &=\int_{\mathcal{Y}\times\bar{\mathcal{S}}} Y\Bigg(\prod_{t=0}^{T-1}p\big(\pi_{e}(S_{t})|\pi_{e}(\bar{S}_{t-1})\big)\Bigg)\Bigg(p(S_{0})\prod_{t=1}^{T-1}p\big(S_{t+1}|\bar{S}_{t},\pi_{e}(\bar{S}_{t})\big)\Bigg)d\mu(Y,\bar{S})\\
    &\neq\mathbb{E}[Y|\pi_{e}(\bar{S})]
\end{align*}
\sloppy 
Since the estimate is biased by the term, $\prod_{t=0}^{T-1}p(\pi_{e}(S_{t})|\pi_{e}(\bar{S}_{t-1}))$. For non-dynamic policies, this bias is side-stepped by introducing the multiplicative control variate $\mathbb{E}[\mathbbm{1}[\bar{A}=\bar{a}]\prod_{t=0}^{T-1}p(A_{t}|\bar{A}_{t-1})p(A_{t}|\bar{A}_{t-1},\bar{S}_{t})^{-1}]$ and observing that this equals $1\cdot\prod_{t=0}^{T-1}p(A_{t}|\bar{A}_{t-1})$. However, for the dynamic case with $\bar{A} = \pi_{e}(\bar{S})$, the term $\prod_{t=0}^{T-1}p(\pi_{e}(S_{t})|\pi_{e}(\bar{S}_{t-1}))$ still depends on the measure of integration and thus cannot be considered constant and extracted from the expectation. Thus, introducing a similar control variate would not un-bias the estimate. Note, in the derivation, the $\bar{A}$ can be dropped from the domain of integration because the actions are fixed to $\pi_{e}$. Additionally, the measure is restricted to $\mu(Y,\bar{S})$ (i.e., the product measure) since the sub-$\sigma$-algebra generated by $\bar{A}=\pi_{e}(\bar{S})$ is still the product space $\mathcal{Y}\times\bar{S}$. Furthermore, the derivation uses the identities:
\begin{align*}
	p(A_{t},\bar{A}_{t-1},\bar{L}_{t}) =& p(L_{0})p(A_{0}|L_{0})\prod_{t'=1}^{t}p(A_{t'}|\bar{A}_{t'-1},\bar{L}_{t'})p(L_{t'}|\bar{A}_{t'-1},\bar{L}_{t'-1})\\
	p(A_{0}|\bar{A}_{-1},\bar{L}_{0}) =& p(A_{0}|L_{0}) \\
	p(A_{0}|\bar{A}_{-1}) =& p(A_{0})
\end{align*}

\vskip 0.2in
\bibliography{bib}

\end{document}